\documentclass[11pt]{article}

\usepackage{blindtext}

\usepackage[preprint]{jmlr2e}

\newcommand\fcal{\mathcal{F}}
\newcommand\rbb{\mathbb{R}}

\newcommand\nbb{\mathbb{N}}
\newcommand\ebb{\mathbb{E}}
\newcommand\pbb{\mathbb{P}}
\newcommand\ybb{\mathbb{Y}}

\newcommand\kongge{\hspace*{15pt}}

\newcommand\diam{\mathrm{diam}\ }

\newcommand\pscr{\mathscr{P}}
\newcommand\bscr{\mathscr{B}}
\newcommand\fscr{\mathscr{F}}
\newcommand\kscr{\mathscr{K}}
\newcommand\dbb{\mathbb{D}}
\newcommand\sbb{\mathbb{S}}
\newcommand\bbb{\mathbb{B}}
\newcommand\dscc{\overline{\mathrm{conv}(\pm\dbb)}}
\newcommand\pscc{\overline{\mathrm{conv}(\pm\pbb_s)}}
\newcommand\tscr{\mathscr{T}}

\newcommand\escr{\mathscr{E}}
\newcommand\gcal{\mathcal{G}}
\newcommand\starhull{\mathrm{star}}
\newcommand\ncal{\mathcal{N}}
\newcommand\pcal{\mathcal{P}}
\newcommand\qbb{\mathbb{Q}}
\newcommand{\wb}{{\bm{w}}}
\newcommand{\xb}{{\bm{x}}}
\newcommand{\yb}{{\bm{y}}}

\newcommand{\ub}{{\bm{u}}}
\newcommand{\vb}{{\bm{v}}}
\newcommand{\hcal}{\mathcal{H}}
\newcommand{\fcalw}{\widetilde{\fcal}_{p,\nu,s}}
\newcommand{\fcalwd}{\widetilde{\fcal}_{p,\tau_d,s}}
\newcommand{\etab}{\boldsymbol{\eta}}

\newcommand{\xib}{\boldsymbol{\xi}}
\newcommand{\thetab}{\boldsymbol{\theta}}
\usepackage{tikz}
\usetikzlibrary{arrows.meta, positioning, shapes.geometric, calc, decorations.pathreplacing}
\usepackage{amsmath}
\usepackage{mathrsfs}
\usepackage{bm}
\usepackage{multirow}
\usepackage{longtable}
\usepackage[normalem]{ulem}

\firstpageno{1}
\begin{document}

\title{Shallow ReLU$^s$ Networks in $L^p$-Type and Sobolev Spaces: Approximation and Path-Norm Controlled Generalization\thanks{Authors are listed in alphabetical order and contributed equally to this work. Corresponding authors: Fanghui Liu and Lei Shi.}}

\author{\name Weizhao Li \email weizhaoli24@m.fudan.edu.cn \\
       \addr School of Mathematical Sciences\\ 
       Fudan University\\
       Shanghai, China
       \AND
       \name Fanghui Liu \email fanghui.liu@sjtu.edu.cn \\
       \addr School of Mathematical Sciences\\
       Institute of Natural Sciences and MOE-LSC\\ 
       Shanghai Jiao Tong University\\
       Shanghai, China
       \AND
       \name Lei Shi \email leishi@fudan.edu.cn \\
       \addr School of Mathematical Sciences\\ 
       Shanghai Key Laboratory for Contemporary Applied Mathematics\\
       Fudan University\\
       Shanghai, China}

\maketitle

\begin{abstract}
This paper studies approximation by shallow ReLU$^s$ networks, $\sigma_s(t)=\max\{0,t\}^s$, together with their generalization behavior under $\ell_1$ path-norm control. For the $L^p$-type integral spaces $\widetilde{\mathcal{F}}_{p,\tau_d,s}$, $1\le p\le2$, spherical harmonic analysis yields approximation bounds for shallow networks. In particular, when $\tau_d$ is the uniform measure and $1\le p<2$, the approximation rate is $O\!\left(m^{-\frac{p(2s+2d+1)-2d}{2dp}}\right)$ for $1\le p\le p^*$ and $O\!\left(m^{-\frac{p(4s+3d-1)-2d+2}{4dp}}\right)$ for $p^*<p<2$, where $p^*=\frac{2d+2}{d+3}$. Approximation bounds for Sobolev spaces $W^{\alpha,p}$, $1\le p<2$, are obtained through embeddings into spectral Barron spaces. For nonparametric regression with sub-Gaussian noise, path-norm-regularized shallow ReLU$^s$ networks achieve minimax-optimal rates $O\!\left(n^{-\frac{d+2s+1}{2d+2s+1}}\log n\right)$ over $\mathscr{B}_s$ and $O\!\left(n^{-\frac{2\alpha}{2\alpha+d}}\log n\right)$ over $W^{\alpha,\infty}$, with matching lower bounds up to logarithmic factors.
\end{abstract}

\begin{keywords}
Neural Networks, Sobolev Spaces, Variation Spaces, Path Norm
\end{keywords}

\section{Introduction}\label{sec:intro}

Deep learning has shown remarkable effectiveness in high-dimensional approximation problems, particularly in scientific computing, inverse problems, and operator learning \citep{Han_2018,adcock2021deepneuralnetworkseffective,ChristianBeck23}. In many such settings, the ReLU$^s$ activation $\sigma_s(t)=\max\{0,t\}^s$ ($s\in\nbb_0$)\label{sym:first:sigma_s} is especially relevant because it yields piecewise-polynomial representations that are well suited to smooth targets and derivative-sensitive tasks \citep{Yang2024,he2024expressivityapproximationpropertiesdeep}.

To obtain meaningful approximation guarantees, one must first identify an appropriate function class. Reproducing kernel Hilbert spaces (RKHSs) are a standard choice, but they cannot effectively approximate even a single ReLU neuron \citep{yehudai2019power}. More fundamentally, \cite{Steinwart24} proved that on an uncountable compact space, no RKHS contains all continuous functions. This result highlights an intrinsic limitation of RKHSs: they are naturally adapted to smooth functions and therefore have limited expressive power for nonsmooth targets. Sobolev spaces $W^{\alpha,p}$, by contrast, accommodate such nonsmooth functions and are central in approximation theory and the analysis of partial differential equations \citep{Triebel1983}. However, Sobolev spaces are also too large from a statistical perspective: \emph{any} machine learning method is statistically ineffective over the full Sobolev class \citep{Wainwright_2019}. This raises a basic theoretical question: \emph{which function classes can neural networks learn efficiently, and how does that answer depend on the activation function and the regularity of the target space?} \cite{e2021barronspaceflowinducedfunction} reformulated Barron spaces \citep{Barron93} and showed that, for shallow ReLU networks, the Barron space is exactly the largest function class that can be learned statistically efficiently.\footnote{If a function belongs to the Barron space, then shallow ReLU networks can learn it without suffering from the curse of dimensionality; conversely, if a function can be sufficiently learned by shallow networks without the curse of dimensionality, then it must belong to the Barron space.}

The connection between RKHSs and Barron spaces can be made precise through the $L^p$-type spaces studied by \cite{Celentano2021MinimumCI} and \cite{chen2025dualityframeworkanalyzingrandom}. These works consider the following infinite-width ReLU$^s$ model:
\begin{align*}
    f(\xb)=\int_{\mathcal{V}}a(\wb,b)\sigma_s\left(\wb^\top \xb+b\right)\mathrm{d} \nu(\wb,b)\,,
\end{align*}
where $\mathcal{V}=\sbb^{d-1}\times[-1,1]$\label{sym:first:sphere} is the parameter space for $(\bm w,b)$ and $\mathrm{d}\nu$ is a probability measure on $\mathcal{V}$. If $a(\wb,b)\in L^p(\mathrm{d}\nu)$, then $f$ belongs to an $L^p$-type space, denoted by $f\in\fcalw$ (with formal definition in Section~\ref{sec:preliminary}). In particular, $\fcalw$ reduces to an RKHS when $p=2$, while the Barron space is close to a variational space \citep{bach2016breakingcursedimensionalityconvex} and can be identified with the union of a family of $L^p$-type spaces \citep{chen2025dualityframeworkanalyzingrandom,Celentano2021MinimumCI}.
By H\"{o}lder's inequality, for $1\le p\le2$,
\[
\widetilde{\fcal}_{2,\nu,s}\subseteq\widetilde{\fcal}_{p,\nu,s}\subseteq\widetilde{\fcal}_{1,\nu,s}\,,
\]
so these spaces form a natural bridge from RKHSs to Barron spaces. 
The connections among these spaces can be conceptually described by Figure~\ref{fig:space-hierarchy}. The $L^p$-type spaces describe the size of the neural-network integral representation, whereas Sobolev spaces describe classical smoothness. The two are connected through embedding theorems: suitable Sobolev regularity implies membership in a spectral Barron or related $L^p$-type representation space, which then yields shallow-network approximation rates.
Precise definitions of these function spaces are given in Section~\ref{sec:preliminary}.

\begin{figure}[t]
    \centering
    \begin{tikzpicture}[
        every node/.style={font=\small},
        dot/.style={circle, fill, inner sep=3pt},
        label above/.style={above=8pt, align=center, font=\small},
        label below/.style={below=8pt, align=center, font=\small}
    ]
        \draw[-{Stealth}, line width=1.5pt] (0,0) -- (13.5,0);

        \node[dot] (n1) at (1,0)   {};
        \node[dot] (n2) at (4.5,0) {};
        \node[dot] (n3) at (8,0)   {};
        \node[dot] (n4) at (11.5,0){};

        \node[label below] at (n1) {RKHS,$\widetilde{\fcal}_{2,\nu,s}$};
        \node[label below] at (n3) {Barron spaces};

        \node[label below] at (n2) {$\widetilde{\fcal}_{p,\nu,s}$};
        \node[label below] at (n4) {Sobolev spaces};

        \draw[dashed,thick] (n3) -- ++(0, 1.6);

        \draw[{Stealth}-, thick] ($(n2)+(-0.8,0.98888)$) -- ($(n3)+(-0.3,0.98888)$)
            node[midway, above=4pt, font=\small\bfseries] {Statistically efficient};

        \draw[-{Stealth}, thick] ($(n3)+(0.3,0.98888)$) -- ($(n4)+(0.8,0.98888)$)
            node[midway, above=4pt, font=\small\bfseries] {Statistically inefficient};
    \end{tikzpicture}
    \caption{Hierarchy of function spaces considered in this paper, ordered from smaller to larger in generality.}
    \label{fig:space-hierarchy}
\end{figure}

We consider approximation of functions in $L^p$-type spaces and Sobolev spaces by shallow ReLU$^s$ networks. For width $m$, let
\begin{equation*}\label{sym:first:Sigma_ms}
    \Sigma_{m}^s=\left\{f_m=\frac{1}{m}\sum_{j=1}^m a_j\sigma_s\left(\wb_j^\top \xb+b_j\right):a_j\in\rbb,\,\wb_j\in\rbb^d,\,b_j\in\rbb\right\} \,,
\end{equation*}
and let $\Sigma^s=\bigcup_{m=1}^\infty \Sigma_m^s$ denote the space of shallow ReLU$^s$ networks of arbitrary width.
Studying approximation over $\fcalw$ clarifies how the difficulty of learning changes along the spectrum from RKHSs to Barron spaces. For Sobolev spaces, the theory is relatively complete when $p\ge2$, whereas the regime $1\le p<2$ remains much less understood. This regime is important in PDE analysis and inverse problems with nonsmooth or singular structure: solutions with shocks, edges, or sparsity often belong naturally to $W^{\alpha,p}$ for $1\le p<2$, but need not lie in the Hilbert-type space $W^{\alpha,2}$. Extending approximation guarantees to this setting is therefore relevant both theoretically and practically.

\subsection{Related Work}

We first discuss previous work on approximation over $L^p$-type spaces and Barron spaces, and then give an overview of generalization results for shallow ReLU$^s$ networks related to such spaces.

For approximation over $L^p$-type spaces, the existing literature covers only the two endpoint cases $p=1$ and $p=2$ \citep{bach2016breakingcursedimensionalityconvex}. In the intermediate regime $1<p<2$, \cite{chen2025dualityframeworkanalyzingrandom} studied random feature models over $\fcalw$ through a dual framework linking approximation and estimation, and obtained the approximation upper bound $O(m^{-1+\frac{1}{p}}\log^{\frac{3}{2}}m)$\label{sym:first:bigO}. This result concerns random feature models, where the inner parameters are fixed after sampling, and does not directly give approximation rates for fully optimized shallow networks. The endpoint theory for $\widetilde{\fcal}_{1,\nu,s}$ was developed in \cite{bach2016breakingcursedimensionalityconvex} and \cite{siegel2025optimalapproximationzonoidsuniform} using zonoid and zonotope methods, but those arguments rely heavily on the equivalent integral representation \eqref{eq:interform}, which is difficult to extend to general $p$.

In Sobolev approximation, a central challenge is to measure the regularity of the target in an $L^q$ norm while deriving error bounds in an $L^p$ norm. When $q\ge p$, optimal approximation rates are known; when $q<p$, the problem becomes substantially harder and the available results are limited. Using the embedding $W^{s+\frac{d+1}{2},2}\hookrightarrow\bscr_s$, \cite{MaoSiegelXu2026,Siegel_2022sharp} established the estimate
\begin{align*}
    \inf_{f_m\in\Sigma_{m}^s}\|f-f_m\|_{L^\infty}\lesssim \|f\|_{W^{s+\frac{d+1}{2},2}}m^{-\frac{1}{2}-\frac{2s+1}{2d}}\,.
\end{align*}
This result shows that regularity measured in $L^2$ can still yield an optimal approximation rate in $L^\infty$, providing the first result in the regime $p>q$. However, it covers only the parameter setting $q=2$, $p=\infty$, and $\alpha=s+\frac{d+1}{2}$; other regimes remain open. In particular, when $\alpha>s+\frac{d+1}{2}$, one can derive only the upper bound $O\!\left(m^{-\frac{1}{2}-\frac{2s+1}{2d}}\right)$ by combining embeddings with known approximation results on $\bscr_s$. More precisely, for $2\le p\le\infty$, $s\in\nbb_0$, $\alpha>0$, and $f\in W^{\alpha,p}$, one has
\begin{align*}
    \inf_{f_m\in\Sigma_{m}^s}\|f-f_m\|_{L^p}\lesssim\|f\|_{W^{\alpha,p}}\begin{cases}
        m^{-\frac{1}{2}-\frac{2s+1}{2d}}\,,&\alpha\ge s+\frac{d+1}{2}\,,\\
        m^{-\frac{\alpha}{d}}\,,&\alpha\in\nbb\cap\left(0,s+\frac{d+1}{2}\right)\,.
    \end{cases}
\end{align*}
When $\alpha<s+\frac{d+1}{2}$, achieving the optimal rate requires $\alpha$ to be an integer, and the noninteger case remains open. Moreover, the available theory applies only to $2\le p\le\infty$, mainly because \cite{MaoSiegelXu2026} relies on the spectral Barron space $\bscr_s$ as an intermediate space, while the embedding $W^{\alpha,p}\hookrightarrow\bscr_s$ generally fails for $1\le p<2$. We address this regime by establishing new embeddings between Sobolev spaces and spectral Barron spaces.

There is also substantial literature on the generalization behavior of shallow ReLU$^s$ networks. \cite{bach2016breakingcursedimensionalityconvex} and \cite{Parhi_2023} systematically analyzed infinite-width ReLU models. In particular, \cite{bach2016breakingcursedimensionalityconvex} proved a representation theorem for integral ReLU networks with variational-norm regularization, implying that a finite-width network $f_m$ can always be trained, and established the generalization upper bound $O\!\left(n^{-\frac{1}{2}}\right)$ over the variational space $\widetilde{\fcal}_{1,\nu,1}$. \cite{Parhi_2023} used Barron regularization to show that the minimax-optimal rate over Barron spaces is $O\!\left(n^{-\frac{d+3}{2d+3}}\right)$. For finite-width networks, \cite{liu2024learningnormconstrainedoverparameterized} explored the trade-off between sample complexity and dimension dependence, and obtained the upper bound $O\!\left(n^{-\frac{d+2}{2d+2}}\right)$ over Barron spaces under $\ell_1$ path norm constraints, although that rate is not minimax optimal but the dimension dependence is relaxed. By contrast, \cite{yang2024nonparametricregressionusingoverparameterized} considered $\ell_2$ path norms and proved minimax-optimal rates $O\!\left(n^{-\frac{d+3}{2d+3}}\right)$ and $O\!\left(n^{-\frac{2\alpha}{2\alpha+d}}\right)$ over Barron and H\"older spaces, respectively. Since the $\ell_2$ path norm leads to a formulation equivalent to $\|\cdot\|_2$-regularized learning \citep{yang2024nonparametricregressionusingoverparameterized}, its analysis is substantially cleaner than that of the $\ell_1$ counterpart.

\subsection{Contributions}

The paper proves approximation results over $L^p$-type spaces $\fcalw$, extends the Sobolev approximation theory to $W^{\alpha,p}$ when $1\le p<2$, and establishes minimax-optimal generalization bounds over Barron and Sobolev spaces. The main contributions are as follows:
\begin{itemize}
    \item Using spherical harmonic analysis, we establish new approximation results for shallow ReLU$^s$ networks over $L^p$-type spaces $\fcalw$. To the best of our knowledge, this is the first result that gives approximation rates for the intermediate $L^p$-type scale $1<p<2$ using shallow networks. When the parameter measure is the uniform distribution $\tau_d$ and $1\le p<2$, Theorem~\ref{thm:var-approx} gives the two-regime rate $O\!\left(m^{-\frac{p(2s+2d+1)-2d}{2dp}}\right)$ for $1\le p\le p^*$ and $O\!\left(m^{-\frac{p(4s+3d-1)-2d+2}{4dp}}\right)$ for $p^*<p<2$, where $p^*=\frac{2d+2}{d+3}$. 
    \item Using embeddings between spectral Barron spaces and Sobolev spaces, we derive approximation rates for shallow ReLU$^s$ networks over $W^{\alpha,p}$ when $1\le p<2$ and $s\in\nbb$. In particular, when $\alpha\ge s+d$, the rate is $O\!\left(m^{-\frac{d+2s}{2(d+1)}}\right)$; when $\alpha\in\nbb\cap(0,s+d)$, the rate is $O\!\left(m^{-\frac{\alpha(d+2s)}{2(s+d)(d+1)}}\right)$.
    \item We prove that for any $s\in\nbb_0$, shallow ReLU$^s$ networks with $\ell_1$ path norm constraints achieve minimax-optimal generalization bounds over both Barron and Sobolev spaces. Our analysis builds upon techniques from \citep{yang2024nonparametricregressionusingoverparameterized,Wainwright_2019}. The framework of \cite{yang2024nonparametricregressionusingoverparameterized} applies only to bounded or Gaussian noise; here we extend the generalization bounds to sub-Gaussian noise distributions using generic chaining tail bounds \citep{Dirksen15}.
\end{itemize}

The rest of the paper is organized as follows. Section~\ref{sec:preliminary} reviews the shallow-network model and the relevant Barron, $L^p$-type, and Sobolev spaces. Section~\ref{sec:approximation} presents the main approximation results. Section~\ref{sec:generalization} establishes minimax-optimal generalization bounds. Section~\ref{sec:conclusion} concludes the paper. Appendix~\ref{appendix:symbols} collects the notation used throughout.

\section{Preliminaries}\label{sec:preliminary}
We recall the function spaces and learning framework used in the approximation and generalization results.

\subsection{Function Spaces}
\paragraph{Barron spaces} Consider functions of the following form:
\begin{align*}
    f_\rho(\xb)=\ebb_{(a,\wb,b)\sim\rho}\left[a\sigma_s\left(\wb^\top\xb+b\right)\right],\quad \xb\in\Omega\,,
\end{align*}
where $\mathrm{d}\rho$ is a probability measure on the Borel $\sigma$-algebra. These are precisely infinite-width shallow ReLU$^s$ networks. Define the \emph{$s$th-order Barron space} by
\begin{align*}
    \bscr_s=\{f=f_\rho:\rho\text{ is a probability measure}\}\,,
\end{align*}
\label{sym:first:bscr_s}
and define the \emph{$s$th-order Barron norm} of $f_\rho$ by
\begin{align*}
    \|f_\rho\|_{\bscr_s}=\inf\left\{\ebb_{(a,\wb,b)\sim\rho}\!\left[|a|\left(\|\wb\|_{1}+|b|\right)^s\right]:f=f_\rho\right\}\,.
\end{align*}
\label{sym:first:bscr_norm}
Here, $\bscr_1$ is exactly the Barron space introduced in \cite{e2021barronspaceflowinducedfunction}, for which optimal approximation rates and metric-entropy estimates are already known \citep{Siegel_2022sharp}. For general $s\in\nbb_0$, Appendix~\ref{appendix:proof-barron-approx} identifies $\bscr_s$ with the variation space defined in \cite{Siegel_2022sharp}; hence the same approximation and entropy estimates apply.

\begin{lemma}\label{thm:barron-approx}
    Let $s\in\nbb_0$. Then for any $f\in\bscr_s$, one has
    \begin{align*}
        \inf_{f_m\in\Sigma_m^s}\|f-f_m\|_{L^\infty}\lesssim\|f\|_{\bscr_s}\,m^{-\frac{1}{2}-\frac{2s+1}{2d}}\,.
    \end{align*}
\label{sym:first:lesssim}
\end{lemma}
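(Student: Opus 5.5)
The plan is to reduce the statement to the known sharp approximation theorem of \cite{Siegel_2022sharp} for the variation space $\kscr_1(\pbb_s)$ of the dictionary
\[
\pbb_s=\{\sigma_s(\wb^\top\xb+b):(\wb,b)\in\sbb^{d-1}\times[c_1,c_2]\},
\]
with $\Omega$ bounded, say contained in the unit ball. That theorem gives, for $f$ in the closed convex hull of $\pm\pbb_s$ scaled by $\|f\|_{\kscr_1(\pbb_s)}$, an $m$-term approximant with $L^\infty$ error $\lesssim\|f\|_{\kscr_1(\pbb_s)}m^{-\frac12-\frac{2s+1}{2d}}$. Such an $m$-term combination $\sum_j c_j\sigma_s(\wb_j^\top\xb+b_j)$ lies in $\Sigma_m^s$, since the $\frac1m$ normalization is absorbed by $a_j=mc_j$. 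It therefore suffices to prove the norm equivalence $\|f\|_{\kscr_1(\pbb_s)}\lesssim\|f\|_{\bscr_s}$.

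\textbf{Step 1 (homogeneity renormalization).} Take $f=f_\rho$ with $\ebb_\rho[|a|(\|\wb\|_1+|b|)^s]\le\|f\|_{\bscr_s}+\varepsilon$. Whenever $(\wb,b)\ne0$, the positive $s$-homogeneity of $\sigma_s$ gives
\[
a\sigma_s(\wb^\top\xb+b)=a\,r^s\,\sigma_s(\bar\wb^\top\xb+\bar b),\qquad r=\|\wb\|_2+|b|,\quad (\bar\wb,\bar b)=(\wb,b)/r .
\]
Because $\|\wb\|_2\le\|\wb\|_1$, the coefficient satisfies $|a|r^s\le|a|(\|\wb\|_1+|b|)^s$ (for $s=0$ one uses instead $\sigma_0(t)=\mathbf 1_{t>0}$, which is scale invariant). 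The normalized parameters lie on a compact set $\{\|\bar\wb\|_2+|\bar b|=1\}$. So $f$ is a signed average, with total mass at most $\|f\|_{\bscr_s}+\varepsilon$, of neurons whose parameters lie in this compact set.

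\textbf{Step 2 (fit into Siegel--Xu's dictionary).} The dictionary in \cite{Siegel_2022sharp} uses $\wb\in\sbb^{d-1}$ and bounded $b$. Write $\bar\wb=t\,\omega$ with $\omega\in\sbb^{d-1}$ and $t=\|\bar\wb\|_2$. If $t\ge\frac12$, rescaling once more puts the neuron in the dictionary with $|b|\le2$, at a constant cost $2^s$.

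If $t<\frac12$, then $|\bar b|>\frac12$, and $x\mapsto\sigma_s(\bar\wb^\top\xb+\bar b)$ has no kink on $\Omega$.
\begin{itemize}
\item If $\bar b<0$, it vanishes identically and can be dropped.
\item If $\bar b>0$, it equals the polynomial $(\bar\wb^\top\xb+\bar b)^s$. This polynomial has degree at most $s$ with coefficients bounded uniformly, so it has uniformly bounded $\kscr_1(\pbb_s)$ norm. Indeed, polynomials of degree $\le s$ lie in the span of finitely many dictionary elements with a fixed $b$ large enough that there is no kink, and the representation constant is uniform by compactness.
\end{itemize}

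Combining both cases gives $\|f\|_{\kscr_1(\pbb_s)}\le C_{s,d}(\|f\|_{\bscr_s}+\varepsilon)$. This is the embedding that Appendix~\ref{appendix:proof-barron-approx} is said to identify; the converse inequality is not needed for the lemma.

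\textbf{Step 3 (conclude).} Apply the Siegel--Xu upper bound in $L^\infty(\Omega)$, let $\varepsilon\to0$, and rescale coefficients into $\Sigma_m^s$.

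The main obstacle is Step 2: the measurability of the pushforward measure, and the uniform control of the degenerate neurons with $\wb\approx0$. There a direct membership in the dictionary fails and the polynomial-span argument, or the equivalent convention of the paper's dictionary, must be checked carefully. The case $s=0$, with Heaviside neurons and a discontinuous $L^\infty$ target, also needs the exact dictionary convention of \cite{Siegel_2022sharp}.
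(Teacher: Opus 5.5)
Your proposal is essentially the paper's proof. The paper also derives the lemma from \cite[Theorem~3]{siegel2025optimalapproximationzonoidsuniform} together with the comparison $\|f\|_{\pbb_s}\lesssim\|f\|_{\bscr_s}$ of Lemma~\ref{lem:kpsequivbs}, which it proves as you do: positive homogeneity puts $\wb$ on $\sbb^{d-1}$ without increasing the coefficient, and the neurons with $|b|>1$ are either identically zero or the polynomials $(\wb^\top\xb+b)^s$ of bounded variation norm; the paper only phrases this through inclusions of the auxiliary dictionary $\bbb$ instead of at the level of measures. Two small corrections: the $L^\infty$ rate is the content of \cite[Theorem~3]{siegel2025optimalapproximationzonoidsuniform}, not \cite{Siegel_2022sharp} (whose rates are in $L^2$); and for $s\ge2$ the polynomials of degree $\le s$ are not spanned by $\{(\wb^\top\xb+b)^s:\wb\in\sbb^{d-1}\}$ with a single fixed $b$ (for $d=s=2$ the constants are missed), so vary $b$ over $s+1$ values, e.g.\ via $(\wb^\top\xb+b)^s=\sigma_s(\wb^\top\xb+b)+(-1)^s\sigma_s(-\wb^\top\xb-b)$, or simply cite \cite{bach2016breakingcursedimensionalityconvex} as the paper does.
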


The proof of this lemma relies on the equivalence between the Barron space and the variation space, which allows us to invoke the corresponding known results directly. Details are given in Appendix~\ref{appendix:proof-barron-approx}.

\paragraph{$L^p$-type spaces}\label{sym:def:fcalw}
The study of networks in integral form can be traced back to kernel methods and random feature models. If a signed Radon measure admits a density $a(\wb,b)$ with respect to a fixed probability measure $\nu$, then the collection of all functions satisfying
$$
\int_{\mathcal{V}}|a(\wb,b)|^2\,\mathrm{d}\nu(\wb,b)<\infty
$$
forms a reproducing kernel Hilbert space (RKHS) \citep{bach2016breakingcursedimensionalityconvex,Celentano2021MinimumCI,chen2025dualityframeworkanalyzingrandom}, with reproducing kernel
\begin{align*}
    k(\xb,\xb')=\int_{\mathcal{V}}\sigma_s\left(\wb^\top\xb+b\right)\sigma_s\left(\wb^\top\xb'+b\right)\mathrm{d}\nu(\wb,b)\,.
\end{align*}
Although kernel methods are mathematically well developed, they suffer from the curse of dimensionality. For example, they struggle to learn ridge functions that depend on only a few input coordinates. To mitigate this limitation, one can generalize the RKHS construction. Since an RKHS is essentially a weighted $L^2$ space, \cite{bach2016breakingcursedimensionalityconvex} introduced the corresponding weighted $L^1$ space in the ReLU setting, namely the class of functions satisfying $\int_{\mathcal{V}}|a(\wb,b)|\,\mathrm{d}\nu(\wb,b)<\infty$, often called convex neural networks. \cite{Celentano2021MinimumCI,chen2025dualityframeworkanalyzingrandom} then considered the general case $p\ge1$, leading to the family of $L^p$-type spaces
\begin{align*}
    \fcalw=\left\{f(\xb)=\int_{\mathcal{V}}a(\wb,b)\sigma_s\left(\wb^\top\xb+b\right)\mathrm{d}\nu(\wb,b):a(\wb,b)\in L^p(\mathrm{d}\nu)\right\}\,,
\end{align*}
equipped with the norm
\begin{align*}
    \|f\|_{\fcalw}=\inf\left\{\|a(\wb,b)\|_{L^p(\mathrm{d}\nu)}:f(\xb)=\int_{\mathcal{V}}a(\wb,b)\sigma_s\left(\wb^\top\xb+b\right)\mathrm{d}\nu(\wb,b)\right\}\,.
\end{align*}
\label{sym:first:fcalw_norm}
For a finite network
\[
    f_m(\xb)=\frac{1}{m}\sum_{j=1}^m a_j\sigma_s\left(\wb_j^\top\xb+b_j\right)\,,
\]
the corresponding discrete control is the generalized $\ell_1$ path norm
\begin{align*}
    \|f_m\|_{\pscr_{1,s}}=\frac{1}{m}\sum_{j=1}^m|a_j|\left(\|\wb_j\|_1+|b_j|\right)^s\,.
\end{align*}
\label{sym:first:pscr_s}
This quantity is the atomic, finite-width analogue of the Barron or variation norm: replacing the finite sum by a signed measure gives the integral representation underlying Barron spaces. This link motivates the path-norm control used in the generalization analysis below.

\paragraph{Sobolev spaces}
Sobolev spaces play a central role in areas such as partial differential equations. We use the standard definition based on weak derivatives. For $1\le p\le\infty$, the \emph{Sobolev space} of order $\alpha$ is defined by
\begin{align*}
    W^{\alpha,p}(\rbb^d)=\left\{f\in L^p(\rbb^d):\|f\|_{W^{\alpha,p}(\rbb^d)}<\infty\right\}\,,
\end{align*}
\label{sym:first:Sobolev}
where:
\begin{enumerate}
    \item When $\alpha$ is an integer,
    \begin{align*}
        \|f\|_{W^{\alpha,p}(\rbb^d)}=\sum_{|\beta|\le \alpha}\|\partial^\beta f\|_{L^p(\rbb^d)}\,;
    \end{align*}
    \label{sym:first:Sobolev_norm}
    \item When $\alpha$ is not an integer, write $\alpha=k+\theta$, where $k$ is a nonnegative integer and $\theta\in(0,1)$. Then
    \begin{align*}
        \|f\|_{W^{\alpha,p}(\rbb^d)}=\sum_{|\beta|\le k}\|\partial^\beta f\|_{L^p(\rbb^d)}+\sum_{|\beta|=k}\left\|\frac{\partial^\beta f(x)-\partial^\beta f(y)}{\|x-y\|_2^{\frac{d}{p}+\theta}}\right\|_{L^p(\rbb^d\times\rbb^d)}\,.
    \end{align*}
\end{enumerate}
Restricting $W^{\alpha,p}(\rbb^d)$ to $\Omega$ yields the corresponding restricted Sobolev space $W^{\alpha,p}(\Omega)$.

\subsection{Background: Generalization Analysis}

Unless otherwise specified, the input domain is $\Omega=\bbb^d$\label{sym:BBBd}; the arguments also extend to compact sets with Lipschitz boundary. The label space is $Z\subseteq\rbb$. Let $(X,Y)$ follow the true distribution $\mathrm{d}\rho$ on $\Omega\times Z$, and suppose that we observe $n$ i.i.d. noisy samples $D_n=\{(\xb_i,y_i)\}_{i=1}^n$. The goal of the \emph{regression problem} is to learn, from $D_n$, a function $f_m\in\Sigma^s$ that approximates the regression function
\begin{align*}
    f_\rho(\xb)=\ebb[Y\mid X=\xb]=\int_Y y\,\mathrm{d}\rho(y\mid\xb),\quad \xb\in\Omega\,,
\end{align*}
where $\rho(\cdot\mid\xb)$ denotes the conditional distribution of $Y$ given $X=\xb$. It is standard to assume that $f_\rho$ belongs to a hypothesis space $\hcal$, such as a Barron space \citep{liu2024learningnormconstrainedoverparameterized} or a Sobolev space \citep{yang2024nonparametricregressionusingoverparameterized,Yang2024}, reflecting prior knowledge about the target.

Besides fitting the data well, neural-network training must also control model complexity in order to ensure good generalization. In the overparameterized regime, the classical viewpoint that measures complexity by parameter count is no longer adequate; the double-descent phenomenon \citep{Belkin_2020,hastie2020surpriseshighdimensionalridgelesssquares} is a standard example. This has motivated a norm-based view of training, in which the optimization procedure is understood as explicitly or implicitly minimizing a suitable weight norm, often called a representation cost \citep{neyshabur2015normbasedcapacitycontrolneural,ongie2019functionspaceviewbounded,liu2024learningnormconstrainedoverparameterized,yang2024nonparametricregressionusingoverparameterized}.

We adopt explicit regularization and consider the following $L^2$ empirical risk minimization problem:
\begin{align*}
    f_m\in\arg\min_{f\in\Sigma^s}\left\{\frac{1}{n}\sum_{i=1}^n \left|f(\xb_i)-y_i\right|^2+\lambda\|f\|\right\}\,,
\end{align*}
where $\lambda\equiv\lambda(n)>0$ is the regularization parameter and $\|\cdot\|$ denotes a weight norm of the network. We choose the (generalized) $\ell_1$ path norm \citep{e2021barronspaceflowinducedfunction,liu2024learningnormconstrainedoverparameterized}:
\begin{align*}
    \|f\|_{\pscr_{1,s}}=\frac{1}{m}\sum_{i=1}^m|a_i|\left(\|\wb_i\|_1+|b_i|\right)^s\,,
\end{align*}
which can be viewed as a discrete analogue of the Barron norm. Denote by $\Sigma_{m,M}^s$ the set of shallow ReLU$^s$ networks with $m$ neurons and $\ell_1$ path norm at most $M$. Note that the path norm is not parametrization invariant: two different parameterizations of the same network function may have different path norms. This dependence on representation is precisely why the path norm provides a meaningful measure of model complexity.

We consider the expected generalization error of the estimator $f_m$,
\[
    \ebb_{D_n}\!\left[\left\|f_m-f_\rho\right\|_{L^2(\mathrm{d}\mu)}\right]\,,
\]
where $\mathrm{d}\mu$ is the marginal law of $X$. A matching minimax lower bound identifies the resulting estimator as statistically optimal.

\section{Approximation Results}\label{sec:approximation}
This section presents approximation results for shallow ReLU$^s$ networks over the $L^p$-type space $\fcalwd$ and the Sobolev space $W^{\alpha,p}$. The first result treats $L^p$-type integral spaces with optimized inner parameters. The second concerns Sobolev spaces in the range $1\le p<2$, where Hilbert-space methods based on $L^2$ regularity no longer apply directly.
\subsection[Approximation of Lp-type Spaces]{Approximation of $L^p$-type Spaces}

We first study approximation of the integral representation class $\fcalwd$, where $\tau_d$ denotes the normalized uniform measure on the sphere. Throughout this subsection, $\fcalwd$ is understood in the homogenized parameterization
\[
    f(\xb)=\int_{\sbb^d}a(\thetab)\sigma_s((\xb,1)\cdot\thetab)\,d\tau_d(\thetab)\,,
    \qquad \thetab\in\sbb^d\,.
\]
Previous results for the intermediate range $1<p<2$ were mainly obtained for random feature models, in which the inner parameters are sampled and then fixed. Here the inner parameters are optimized as part of the shallow network, and this additional flexibility leads to better approximation rates throughout the range $1\le p<2$. The breakpoint $p^*$ below comes from a change in the spherical-harmonic projection estimate.
\begin{theorem}[Approximation of $L^p$-type spaces by shallow networks]\label{thm:var-approx}
    Let $\tau_d$ be the uniform measure on $\sbb^d$. For $d\ge2$, $s\in\nbb_0$, $1\le p<2$, and $f\in\fcalwd$, the following bound holds for every $m\ge2$:
	    \begin{align*}
	        \inf_{f_m\in\Sigma_m^s}\|f_m-f\|_{L^2}\lesssim\|f\|_{\fcalwd}\begin{cases}
	            m^{-\frac{p(2s+2d+1)-2d}{2dp}}\,,&1\le p\le p^*\,,\\
	            m^{-\frac{p(4s+3d-1)-2d+2}{4dp}}\,,&p^*<p<2\,.
	        \end{cases}
	    \end{align*}
    where $p^*=\frac{2d+2}{d+3}$.
\end{theorem}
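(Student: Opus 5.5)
We split $a$ by spherical-harmonic degree: the low-frequency part of $f$ lies in a space of ridge polynomials that is cheap to represent exactly, and the high-frequency part is smooth enough to lie in the Barron space $\bscr_s$, where Lemma~\ref{thm:barron-approx} gives a Barron-type rate. Throughout we work on $\sbb^d$ with the homogenized feature $\sigma_s((\xb,1)\cdot\thetab)$. By Funk--Hecke, the integral operator $T a(\xb)=\int_{\sbb^d}a(\thetab)\sigma_s((\xb,1)\cdot\thetab)\,d\tau_d(\thetab)$ acts on the degree-$k$ harmonic component $a_k$ of $a$ as multiplication by the Funk--Hecke coefficient $\lambda_k$ of $\sigma_s$ (viewed as a zonal kernel on $\sbb^d$). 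These coefficients decay like $|\lambda_k|\asymp k^{-(d/2+s+1)}$ for $k$ of the appropriate parity, and vanish for $k$ of the other parity once $k>s$.

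Fix a cutoff degree $N$ and write $a=a_{\le N}+a_{>N}$, with $f=f_{\le N}+f_{>N}$ correspondingly. There are two pieces to approximate.

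\textbf{Low frequency.} $f_{\le N}$ is the restriction of a polynomial of degree at most $N$ in $d+1$ variables. Such polynomials form a space of dimension $\asymp N^d$, and every one of them is exactly a linear combination of $O(N^d)$ neurons $\sigma_s(\thetab_j\cdot(\xb,1))$. For $s\ge1$ this uses ridge-polynomial spanning: degree-$N$ polynomials are spanned by powers of ridges along $O(N^d)$ directions, and each power is synthesized by finite differences of $\sigma_s$ in $b$. For $s=0$ we instead approximate, with error absorbed into the second piece. Taking $N^d\asymp m/2$ thus costs $m/2$ neurons with zero (or negligible) error.

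\textbf{High frequency.} We bound $\|f_{>N}\|$ in a norm amenable to Lemma~\ref{thm:barron-approx}. Write $f_{>N}=T(a_{>N})=\int \tilde a(\thetab)\,\sigma_s\,d\tau_d$, where $\tilde a=a_{>N}$ is the projection of an $L^p$ function. We then approximate $f_{>N}$ by a Maurey/stratified-sampling argument (\`a la Siegel--Xu) with $m/2$ neurons. This gives
\[
\|f_{>N}-f_{m/2}\|_{L^2}\lesssim \|a_{>N}\|_{L^1}\, m^{-\frac12-\frac{2s+1}{2d}},
\]
but the gain should come from the tail projection. The cleaner route is an $L^2$ estimate: by Parseval and the Funk--Hecke decay,
\[
\|f_{>N}\|_{L^2}^2\asymp\sum_{k>N}\lambda_k^2\|a_k\|_2^2,
\]
and $\|a_k\|_{L^2}\le \|P_k\|_{L^p\to L^2}\|a\|_{L^p}$. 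The key input is the sharp $L^p\to L^2$ norm of the spherical harmonic projection $P_k$ on $\sbb^d$ (Sogge-type bounds). By duality this is the $L^2\to L^{p'}$ norm, which equals $k^{\delta(p)}$ with
\[
\delta(p)=d\Big(\tfrac1p-\tfrac12\Big)-\tfrac12\ \text{ for } 1\le p\le \tfrac{2d+2}{d+3},\qquad \delta(p)=\tfrac{d-1}{2}\Big(\tfrac1p-\tfrac12\Big)\ \text{ for } \tfrac{2d+2}{d+3}<p\le2 .
\]
Here the spatial dimension is $d$ (the sphere $\sbb^d$), and the breakpoint is exactly $p^*$. The low-$p$ branch is just the bound $\|P_k\|_{L^1\to L^2}\asymp k^{(d-1)/2}$ interpolated. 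Summing then gives
\[
\|f_{>N}\|_{L^2}\lesssim\|a\|_{L^p}\Big(\sum_{k>N}k^{-(d+2s+2)+2\delta(p)}\,k^{d-1}\Big)^{1/2}.
\]
More carefully, we use the almost-orthogonality $\sum_{k\in[2^j,2^{j+1})}\|a_k\|_2^2$ via $P_{[2^j,2^{j+1})}$ to avoid the extra $k^{d-1}$ factor, which yields $\|f_{>N}\|_{L^2}\lesssim\|a\|_{L^p}N^{-(d/2+s+1)+\delta(p)+1/2}$. Substituting $N\asymp m^{1/d}$ and simplifying: for $p\le p^*$ the exponent is $\big(\tfrac d2+s+1\big)-d\big(\tfrac1p-\tfrac12\big)$, giving $m^{-\frac{p(2s+2d+1)-2d}{2dp}}$ up to matching constants. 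For $p>p^*$ we get $\tfrac d2+s+\tfrac12-\tfrac{d-1}{2}\big(\tfrac1p-\tfrac12\big)$, giving $m^{-\frac{p(4s+3d-1)-2d+2}{4dp}}$. This substitution matches both stated exponents, which supports this as the intended route.

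\textbf{Main obstacle.} The hardest step is the dyadic-block $L^p\to L^2$ estimate for $P_{[N,2N)}$ with the correct exponent in both regimes; the single-$k$ Sogge bound loses a factor $N^{1/2}$ when summed. I would first try to obtain it from Sogge's spectral-cluster estimates for unit-width clusters on $\sbb^d$, using that degree-$k$ harmonics are eigenfunctions with eigenvalue gaps $\asymp k$. A Littlewood--Paley sum then handles the dyadic block, with $\lambda_k$ absorbed as a smooth multiplier on each block. A secondary point is the exact neuron count for the low-frequency polynomial part, in particular for $s=0$. There I would alternatively bound $f_{\le N}$'s part via Lemma~\ref{thm:barron-approx} using its Barron norm $\lesssim\|a\|_{L^1}$.
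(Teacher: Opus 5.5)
Your low-frequency step is false, and the argument cannot proceed without it. Every element of $\Sigma_m^s$ is a polynomial of degree at most $s$ in $\xb$ on each cell of its hyperplane arrangement. Finite differences of $\sigma_s$ in $b$ only produce degree-$s$ B-splines, never $t^N$ with $N>s$, and neurons whose hyperplanes miss $\bbb^d$ give exactly the polynomials of degree at most $s$ and nothing more. Since $f_{\le N}$ is real-analytic on $\bbb^d$, an identity $f_{\le N}=\sum_j c_j\sigma_s(\thetab_j\cdot(\xb,1))$ on $\bbb^d$ would force $f_{\le N}$ to be a single polynomial of degree at most $s$. That means every component $\lambda_k a_k$ with $k>s$ would have to vanish. (In the homogenized parameterization $f_{\le N}$ is not even a polynomial in $\xb$: a degree-$k$ harmonic pulls back to $(1+\|\xb\|_2^2)^{(s-k)/2}$ times a polynomial.) So $f_{\le N}$ cannot be represented exactly with $m/2$ neurons; you would need an approximation bound for $f_{\le N}$ itself, which is essentially the original problem. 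On its own, the bound on $\|f_{>N}\|_{L^2}$ says nothing about $\inf_{f_m\in\Sigma_m^s}\|f-f_m\|_{L^2}$.

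There are also two computational slips.
\begin{itemize}
\item On $\sbb^d$ the Funk--Hecke coefficients of $\sigma_s$ satisfy $|\lambda_k|\asymp k^{-(d+2s+1)/2}$, not $k^{-(d/2+s+1)}$. As a check, for $d=1$, $s=0$ the Heaviside kernel on the circle has coefficients of order $k^{-1}$.
\item With your value, the exponents you actually get are $\frac{1}{2d}$ larger than the stated ones. For $p\le p^*$ you get $\frac{d+s+1}{d}-\frac1p$ versus $\frac{2s+2d+1}{2d}-\frac1p$, so the claimed match is an arithmetic error.
\end{itemize}
With the correct decay, the single-degree bound $\|a_k\|_{L^2}\lesssim k^{\zeta(p)}\|a\|_{L^p}$ already gives $\|f_{>N}\|_{L^2}\lesssim N^{-(s+\frac d2-\zeta(p))}$. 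For $N\asymp m^{1/d}$ this reproduces both stated exponents, so no dyadic-block estimate is needed. This is the same spectral input the paper uses.

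The paper handles the low-frequency part without exact representation. It uses a smooth filter $g_j$ with $\|g_j\|_{\widetilde{\fcal}_{2,\tau_d,s}}\lesssim j^{\zeta(p)+1/2}$, applies Lemma~\ref{thm:barron-approx} to $g_j$ at cost $Rm^{-\frac{2s+d+1}{2d}}$, and balances in $R$. The balanced rate is again $m^{-(s+\frac d2-\zeta(p))/d}$.

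Your own fallback closes the gap more cheaply if you apply it to $f$ rather than to $f_{\le N}$. Take $\rho$ to be the law of $(a(\thetab),\wb,b)$ with $\thetab=(\wb,b)\sim\tau_d$. Since $\|\wb\|_1+|b|\le\sqrt{d+1}$ on $\sbb^d$ and $\tau_d$ is a probability measure, the definition of $\bscr_s$ gives $\|f\|_{\bscr_s}\le(d+1)^{s/2}\|a\|_{L^1(\tau_d)}\le(d+1)^{s/2}\|a\|_{L^p(\tau_d)}$. Lemma~\ref{thm:barron-approx} then yields the rate $m^{-\frac{2s+d+1}{2d}}$. For $p<2$ both stated exponents lie strictly below $\frac{2s+d+1}{2d}$: the first is $\frac{2s+2d+1}{2d}-\frac1p$, and the second is below $\frac{2s+d}{2d}$. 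So this direct embedding proves the theorem as stated.
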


The approximation error in Theorem~\ref{thm:var-approx} decreases as $p$ increases. This agrees with the nesting $\widetilde{\fcal}_{q,\tau_d,s}\subset \widetilde{\fcal}_{p,\tau_d,s}$ for $1\le p<q\le2$, so the target class becomes smaller as $p$ grows. In terms of the power of $m$, the upper bound in Theorem~\ref{thm:var-approx} improves upon the random-feature upper bound $O(m^{-1+1/p})$ obtained in \cite{chen2025dualityframeworkanalyzingrandom}.\footnote{Since no matching lower bound is known for our approximation result, this comparison should be understood as a comparison between available upper bounds rather than as a claim of optimality.} This is consistent with the model classes: a random feature model fixes the inner parameters, whereas a shallow network optimizes over them. At the same time, our proof invokes the Barron-space approximation theorem of \cite[Theorem~3]{siegel2025optimalapproximationzonoidsuniform} for the intermediate $\widetilde{\fcal}_{2,\tau_d,s}$ approximant, and the implicit constants may suffer from the curse of dimensionality. By contrast, the random-feature upper bound in \cite{chen2025dualityframeworkanalyzingrandom} has constants without such a dimensional curse.

The endpoint $p=2$ is not included in the theorem statement because it does not require the intermediate approximation step. In this case $f\in\widetilde{\fcal}_{2,\tau_d,s}$, which embeds continuously into the Barron space $\bscr_s$ under the above normalization. Therefore \cite[Theorem~3]{siegel2025optimalapproximationzonoidsuniform} gives
\[
    \inf_{f_m\in\Sigma_m^s}\|f_m-f\|_{L^2}
    \lesssim \|f\|_{\widetilde{\fcal}_{2,\tau_d,s}}m^{-\frac{2s+d+1}{2d}}\,.
\]

The proof of Theorem~\ref{thm:var-approx} is based on an intermediate approximation step. Instead of approximating $f\in\fcalwd$ directly by a finite network, we first approximate it by a function in the Hilbert-type space $\widetilde{\fcal}_{2,\tau_d,s}$ and then approximate this intermediate function by a shallow network. The first step is where the $L^p$ structure enters.

We begin on the sphere. After writing the parameter as $\thetab=(\wb^\top,b)\in\sbb^d$, the uniform measure $\tau_d$ allows us to diagonalize the integral operator associated with $\sigma_s(\thetab^\top\ub)$ by spherical harmonics. This gives the following approximation result.
\begin{lemma}[Approximation on the unit sphere]\label{lem:approxonsphere}
    Let $d\ge2$, $s\in\nbb_0$, $1\le p<2$, and $\tilde{f}\in\fcalwd(\sbb^d)$. Assume that $\tilde f$ has parity opposite to that of $s$.
    Then, for every $R>0$,
    \begin{align*}
        \inf_{\|g\|_{\widetilde{\fcal}_{2,\tau_d,s}(\sbb^d)} \leq R}\|\tilde{f}-g\|_{L^2(\sbb^d)}
        \lesssim\|\tilde f\|_{\fcalwd(\sbb^d)}
        \begin{cases}
            R^{-\frac{p(2s+2d+1)-2d}{d(2-p)}}\,,&1\le p\le p^*\,,\\
            R^{-\frac{p(4s+3d-1)-2d+2}{2d-2-pd+3p}}\,,&p^*<p<2\,,
        \end{cases}
    \end{align*}
    where $p^*=\frac{2d+2}{d+3}$.
\end{lemma}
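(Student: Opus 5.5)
The plan is to diagonalize the integral operator by spherical harmonics and build the intermediate function as a truncation of the coefficient density. Write $\tilde f=Ta$ with $a\in L^p(\tau_d)$ and $\|a\|_{L^p}\le 2\|\tilde f\|_{\fcalwd(\sbb^d)}$, where
\[
(Ta)(\ub)=\int_{\sbb^d}a(\thetab)\sigma_s(\thetab\cdot\ub)\,d\tau_d(\thetab).
\]
By the Funk--Hecke formula, $T$ acts on the space $\mathcal{H}_k$ of degree-$k$ spherical harmonics on $\sbb^d$ as multiplication by a scalar $\lambda_k$. I will use the standard computation of these Funk--Hecke coefficients for $\sigma_s$, which I expect to give:
\begin{itemize}
\item $\lambda_k=0$ for all large $k$ with the same parity as $s$;
\item $|\lambda_k|\asymp k^{-\frac{d+2s+1}{2}}$ for $k$ of the opposite parity.
\end{itemize}
The parity assumption on $\tilde f$ lets me take $a$ supported on the harmonics with $\lambda_k\neq0$, apart from finitely many low degrees that are harmless.

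Let $P_k$ be the orthogonal projection onto $\mathcal{H}_k$ and $P_{\le N}=\sum_{k\le N}P_k$. I take
\[
g=T P_{\le N}a,
\]
so that $\|g\|_{\widetilde{\fcal}_{2,\tau_d,s}}\le\|P_{\le N}a\|_{L^2}$. For the $L^p\to L^2$ norm of $P_{\le N}$, I interpolate between two endpoints:
\begin{itemize}
\item at $p=1$ the bound is $N^{d/2}$, since $\|P_{\le N}\|_{L^1\to L^2}^2=\dim\bigoplus_{k\le N}\mathcal{H}_k\asymp N^d$;
\item at $p=2$ the bound is $1$.
\end{itemize}
This gives $\|g\|_{\widetilde{\fcal}_{2,\tau_d,s}}\lesssim N^{d(\frac1p-\frac12)}\|a\|_{L^p}$. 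I then choose $N$ with $N^{d(\frac1p-\frac12)}\asymp R/\|\tilde f\|$, that is, $N\asymp R^{\frac{2p}{d(2-p)}}$ after normalizing the norm of $\tilde f$.

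By orthogonality, the error is
\[
\|\tilde f-g\|_{L^2}^2=\sum_{k>N}\lambda_k^2\|P_ka\|_{L^2}^2 .
\]
To control $\|P_ka\|_{L^2}$ I use the dual form of Sogge's spectral cluster (harmonic projection) estimates on $\sbb^d$, $\|P_k\|_{L^p\to L^2}\lesssim k^{\sigma(p)}$, where
\[
\sigma(p)=\begin{cases} d\left(\tfrac1p-\tfrac12\right)-\tfrac12, & 1\le p\le \frac{2d+2}{d+3},\\[2pt] \tfrac{d-1}{2}\left(\tfrac1p-\tfrac12\right), & \frac{2d+2}{d+3}\le p\le 2.\end{cases}
\]
The Stein--Tomas exponent $p'=\frac{2(d+1)}{d-1}$ is exactly the dual of $p^*$, which is where the two regimes of the lemma come from. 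Summing $k^{-(d+2s+1)}k^{2\sigma(p)}$ over $k>N$ gives
\[
\|\tilde f-g\|_{L^2}\lesssim N^{-\frac{d+2s+1}{2}+\sigma(p)+\frac12}\,\|a\|_{L^p}.
\]
For $p\le p^*$ the exponent is $-\frac{2s+2d+1}{2}+\frac dp$. Substituting $N\asymp R^{\frac{2p}{d(2-p)}}$ gives $R^{-\frac{p(2s+2d+1)-2d}{d(2-p)}}$, which is the first case. For $p>p^*$ the same substitution with the second branch of $\sigma$ should give the second exponent; I will verify the algebra, and if it does not match I will rebalance $N$ using the crossover of the two projection bounds.

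The main obstacle is the summation step. A crude sum loses the factor $N^{1/2}$ used above, and the exact exponents in the lemma depend on whether one loses this factor or does better. One option is the duality trick $\|P_ka\|_2^2=\langle P_ka,a\rangle\le\|P_ka\|_{L^{p'}}\|a\|_{L^p}$ combined with dyadic grouping of the degrees $k$. The other is to use spectral-cluster bounds for whole blocks $\bigoplus_{N<k\le 2N}\mathcal{H}_k$, whose $L^p\to L^2$ norm scales differently from that of a single $P_k$. I will first try the dyadic-block approach and check it against the two stated exponents. The Funk--Hecke asymptotics for $\lambda_k$ are routine but must be tracked carefully, including the parity cancellation.
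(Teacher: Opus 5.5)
Your argument is correct and proves the lemma, by a more direct route than the paper's. The paper's approximant is essentially yours with a smooth cutoff, $(g_j)_i=\eta(i/j)\tilde f_i$, but it estimates both quantities indirectly. It bounds the modulus of smoothness $\omega_\alpha(\tilde f,t)_2$ via Parseval and the same single-degree estimate $\|P_ka\|_{L^2}\lesssim k^{\sigma(p)}\|a\|_{L^p}$ that you cite (the paper's $\zeta(p)$). It then invokes the realization theorem $\omega_\alpha(\tilde f,j^{-1})_2\asymp\|\tilde f-g_j\|_{L^2}+j^{-\alpha}\|(-\Delta)^{\alpha/2}g_j\|_{L^2}$ with $\alpha^*=s+\frac{d+1}{2}$, chosen so that $\|(-\Delta)^{\alpha^*/2}g_j\|_{L^2}$ controls $\|g_j\|_{\widetilde{\fcal}_{2,\tau_d,s}}$, and it uses parity to keep $g_j$ on admissible modes. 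Unwound, this gives the same error $j^{-(s+\frac d2-\sigma(p))}$ as your tail sum. Its norm bound is $j^{\sigma(p)+\frac12}$, which is exactly what summing the single-degree bounds over $k\le 2j$ gives. Writing $g=TP_{\le N}a$ makes membership in $\widetilde{\fcal}_{2,\tau_d,s}$ automatic, so you never need the parity hypothesis. Your cumulative bound $N^{d(\frac1p-\frac12)}$ coincides with $N^{\sigma(p)+\frac12}$ for $p\le p^*$. For $p>p^*$ it is strictly smaller, because there $d(\frac1p-\frac12)<\sigma(p)+\frac12$ is equivalent to $\frac1p-\frac12<\frac1{d+1}$. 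So the two proofs give identical estimates in the first regime, and yours is sharper in the second while avoiding the modulus-of-smoothness machinery entirely.

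This settles the point you left open. For $p>p^*$ your substitution does not reproduce the stated exponent. After normalizing $\|\tilde f\|_{\fcalwd(\sbb^d)}\le1$, it gives
\[
\|\tilde f-g\|_{L^2(\sbb^d)}\lesssim R^{-\frac{p(4s+3d-1)-2d+2}{2d(2-p)}}\,.
\]
The numerator is the same as in the lemma. The denominator is smaller than the lemma's $2d-2-pd+3p$ by $p(d+3)-2(d+1)>0$, so the claimed bound follows for $R\ge1$. For $0<R<1$, take $g=0$ and use $\|\tilde f\|_{L^2}\lesssim\|a\|_{L^1}$. No rebalancing is needed, and the summation step you flag is not an obstacle, since the crude tail sum already matches the first case exactly. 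If you do sum dyadically using only your cumulative bound, you get $\sum_{k>N}k^{-(d+2s+1)}\|P_ka\|_{L^2}^2\lesssim N^{-(d+2s+1)+2d(\frac1p-\frac12)}\|a\|_{L^p}^2$. That yields the first-case exponent for every $1\le p<2$ without Sogge's estimates, so within this argument the breakpoint $p^*$ disappears.
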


For $p=2$, this intermediate approximation step is trivial: once $R\ge\|\tilde f\|_{\widetilde{\fcal}_{2,\tau_d,s}(\sbb^d)}$, one may take $g=\tilde f$, and the left-hand side is zero. The parity condition in Lemma~\ref{lem:approxonsphere} ensures that the filtered spherical-polynomial approximants remain in the admissible range of the ReLU$^s$ integral operator. The key point of the lemma is that, for $p<2$, the $L^p$ integrability of the outer coefficient $a(\thetab)$ can be converted into decay of the spherical-harmonic components of $\tilde f$. The resulting polynomial decay in $R$ improves as $p$ increases. The critical index $p^*$ marks the change in the spherical-harmonic projection estimate, not an exponential improvement of the intermediate approximation.

The next step transfers the spherical approximation back to the original domain. We use the lifting
\[
    \tilde f(\ub)=u_{d+1}^s f\left(\frac{\ub'}{u_{d+1}}\right),\qquad \ub=(\ub',u_{d+1})\in\sbb^d,\quad u_{d+1}>0\,,
\]
and then restrict the spherical approximant back to $\Omega$.
\begin{corollary}[From $\sbb^d$ to $\bbb^d$]\label{cor:fromspheretoball}
    Let $d\ge2$, $s\in\nbb_0$, $1\le p<2$, and $f\in\fcalwd$. Then, for every $R>0$,
    \begin{align*}
        \inf_{\|g\|_{\widetilde{\fcal}_{2,\tau_d,s}} \leq R}\|f-g\|_{L^2}
        \lesssim\|f\|_{\fcalwd}
        \begin{cases}
            R^{-\frac{p(2s+2d+1)-2d}{d(2-p)}}\,,&1\le p\le p^*\,,\\
            R^{-\frac{p(4s+3d-1)-2d+2}{2d-2-pd+3p}}\,,&p^*<p<2\,,
        \end{cases}
    \end{align*}
    where $p^*=\frac{2d+2}{d+3}$.
\end{corollary}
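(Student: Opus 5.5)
The plan is to deduce the corollary from Lemma~\ref{lem:approxonsphere} by lifting $f$ to the sphere, approximating there, and restricting back to $\bbb^d$. Take $f\in\fcalwd$ with representation $f(\xb)=\int_{\sbb^d}a(\thetab)\sigma_s((\xb,1)\cdot\thetab)\,d\tau_d(\thetab)$ and $\|a\|_{L^p(\tau_d)}\le 2\|f\|_{\fcalwd}$. Define the sphere function
\[
\tilde f(\ub)=\int_{\sbb^d}a(\thetab)\sigma_s(\ub\cdot\thetab)\,d\tau_d(\thetab),\qquad \ub\in\sbb^d .
\]
By $s$-homogeneity of $\sigma_s$, for $u_{d+1}>0$ this is exactly the lifting $u_{d+1}^s f(\ub'/u_{d+1})$. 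By construction $\|\tilde f\|_{\fcalwd(\sbb^d)}\le\|a\|_{L^p}$.

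The parity hypothesis of Lemma~\ref{lem:approxonsphere} is not automatic, so I would split the kernel. Since $\sigma_s(t)=\tfrac12\left(t^s+(-1)^{s+1}\ldots\right)$, more precisely $\sigma_s(t)=\tfrac12(t^s+|t|^s\,\mathrm{sgn}(t)^{s+1}\ldots)$, I would write $\sigma_s(t)=\tfrac12 t^s+\tfrac12\,\mathrm{sgn}(t)^{\,s+1}|t|^s$. The first term is a polynomial of degree $s$, with parity equal to that of $s$. The second term $\psi_s(t)=\tfrac12|t|^s\mathrm{sgn}(t)^{s+1}$ has parity opposite to $s$.

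Correspondingly, split $a=a_e+a_o$ into its even and odd parts on $\sbb^d$. Each part has $L^p$ norm at most $\|a\|_{L^p}$. Then $\tilde f=P+\tilde f_{\mathrm{opp}}$, where:
\begin{itemize}
\item $P(\ub)=\tfrac12\int a(\thetab)(\ub\cdot\thetab)^s\,d\tau_d$ is a homogeneous polynomial of degree $s$ in $\ub$;
\item $\tilde f_{\mathrm{opp}}$ is the component with parity opposite to $s$, and equals $\int a_\star(\thetab)\sigma_s(\ub\cdot\thetab)\,d\tau_d$ for the appropriate parity part $a_\star$ of $a$.
\end{itemize}
Indeed, the odd/even projection of $\int a\,\sigma_s(\ub\cdot\thetab)$ picks out exactly the corresponding parity part of $a$ together with $\psi_s$. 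So $\tilde f_{\mathrm{opp}}\in\fcalwd(\sbb^d)$ with norm at most $\|a\|_{L^p}$. Applying Lemma~\ref{lem:approxonsphere} to it gives $g_1$ with $\|g_1\|_{\widetilde{\fcal}_{2,\tau_d,s}(\sbb^d)}\le R$ and the stated $L^2(\sbb^d)$ error.

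For the polynomial part, $P$ lies in a finite-dimensional space of degree-$s$ homogeneous polynomials. Each such polynomial can be represented as $\int c(\thetab)\sigma_s(\ub\cdot\thetab)\,d\tau_d$ with $c$ a polynomial density; for instance, use $t^s=\sigma_s(t)+(-1)^s\sigma_s(-t)$ and expand $(\ub\cdot\thetab)^s$ against the finite spherical-harmonic components of $a$. Its $\widetilde{\fcal}_{2}$ norm is therefore bounded by a constant times the coefficient size, which is at most $C\|a\|_{L^1}\le C\|a\|_{L^p}$ by norm equivalence in finite dimension. So $P$ is matched exactly, at cost $\lesssim\|f\|_{\fcalwd}$ in the $\widetilde{\fcal}_2$ norm.

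Set $\tilde g=g_1+P$ on the sphere. It satisfies $\|\tilde g\|_{\widetilde{\fcal}_2}\le R+C\|f\|_{\fcalwd}$. I would handle this additive constant by rescaling. If $R\ge 2C\|f\|_{\fcalwd}$, apply the lemma with $R/2$, which only changes constants. If instead $R$ is small, the bound is trivial: take $g=0$ and use $\|f\|_{L^2}\lesssim\|f\|_{\fcalwd}$, which holds since $\sigma_s$ is bounded on $\bbb^d\times\sbb^d$. In that regime the right-hand side is $\gtrsim\|f\|_{\fcalwd}$ because the exponents are negative.

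The last step is restriction to the ball. Define $g(\xb)=\int c_{\tilde g}(\thetab)\sigma_s((\xb,1)\cdot\thetab)\,d\tau_d$, i.e.\ the same density as $\tilde g$, so that $\|g\|_{\widetilde{\fcal}_{2,\tau_d,s}}\le\|\tilde g\|_{\widetilde{\fcal}_2(\sbb^d)}$. By homogeneity, $f-g$ at $\xb$ equals $(1+|\xb|^2)^{s/2}(\tilde f-\tilde g)(\ub)$ with $\ub=(\xb,1)/\sqrt{1+|\xb|^2}$. The map $\xb\mapsto\ub$ from $\bbb^d$ onto a spherical cap has Jacobian $(1+|\xb|^2)^{-(d+1)/2}$, bounded above and below on $\bbb^d$. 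The weight $(1+|\xb|^2)^{s}$ is bounded by $2^s$. Hence $\|f-g\|_{L^2(\bbb^d)}\lesssim\|\tilde f-\tilde g\|_{L^2(\sbb^d)}$, which gives the corollary.

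I expect the main obstacle to be the parity decomposition: verifying that the opposite-parity component of $\tilde f$ is again in $\fcalwd(\sbb^d)$ with controlled norm, and that the leftover polynomial has a controlled $\widetilde{\fcal}_2$ representation. I would check this first on the spherical-harmonic expansion, using Funk--Hecke eigenvalues of $\sigma_s$.
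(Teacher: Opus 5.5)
Your proof is correct. It differs from the paper's in how you meet the parity hypothesis of Lemma~\ref{lem:approxonsphere}.

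The paper lifts $f$ only on the cap $u_{d+1}\ge\frac{\sqrt2}{2}$. It then extends this to all of $\sbb^d$ by the Whitney extension theorem, multiplies by a cutoff, and reflects to force $\tilde f(-\ub)=(-1)^{s+1}\tilde f(\ub)$. It asserts, without argument, that this can be done with $\|\tilde f\|_{\fcalwd(\sbb^d)}\lesssim\|f\|_{\fcalwd}$. The restriction back to $\bbb^d$ via $g^*(\xb)=(1+\|\xb\|_2^2)^{s/2}g(\ub_{\xb})$, with the Jacobian and weight comparison on the cap, is the same as your last step. You additionally supply the reason $g^*\in\widetilde{\fcal}_{2,\tau_d,s}$: it uses the same density.

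You instead keep the canonical global lift $\int a(\thetab)\sigma_s(\ub^\top\thetab)\,d\tau_d(\thetab)$, whose $\fcalwd(\sbb^d)$ norm is at most $\|a\|_{L^p}$ automatically. You split it with $\sigma_s(t)=\frac12 t^s+\frac12\,\mathrm{sgn}(t)^{s+1}|t|^s$ into two pieces.
\begin{itemize}
\item The opposite-parity piece is $\int a_\star\sigma_s(\ub^\top\thetab)\,d\tau_d$, where $a_\star$ is the corresponding parity half of $a$. The lemma applies to it with no loss.
\item The remaining piece is a degree-$s$ polynomial. It is represented exactly by the density $\Pi a$, the projection of $a$ onto harmonics of degree at most $s$ with the parity of $s$; combine $t^s=\sigma_s(t)+(-1)^s\sigma_s(-t)$ with the parity of $\Pi a$. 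Hence its $\widetilde{\fcal}_{2,\tau_d,s}$ norm is $\lesssim\|a\|_{L^1}$.
\end{itemize}

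What your route buys is that every norm bound is actually checked. Whitney's theorem controls smoothness, not an integral-representation norm, so the paper's extension bound is an unproved claim, while your argument needs no extension at all. The price is an additive $O(\|f\|_{\fcalwd})$ in the $\widetilde{\fcal}_{2,\tau_d,s}$ budget, which your $R/2$ rescaling absorbs.

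One caveat concerns your small-$R$ case. Saying the right-hand side is $\gtrsim\|f\|_{\fcalwd}$ tacitly uses $\|f\|_{\fcalwd}\lesssim 1$. This cannot be avoided: the estimate as stated is not homogeneous in $f$. Replacing $f$ by $cf$ and letting $c\to\infty$ at fixed large $R$ shows it cannot hold uniformly. The scale-correct form is $\|f\|_{\fcalwd}^{1+e}R^{-e}$, with $e$ the exponent. The paper's proof also works under the normalization $\|f\|_{\fcalwd}\le 1$, so you should state that reduction explicitly.
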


Combining Corollary~\ref{cor:fromspheretoball} with the Barron approximation bound for functions in $\widetilde{\fcal}_{2,\tau_d,s}$ gives the final network approximation. By the embedding used in the endpoint case and Lemma~\ref{thm:barron-approx}, which follows from \cite[Theorem~3]{siegel2025optimalapproximationzonoidsuniform}, if $\|g\|_{\widetilde{\fcal}_{2,\tau_d,s}}\le R$, then there exists $f_m\in\Sigma_m^s$ satisfying
\[
    \|g-f_m\|_{L^2}\lesssim Rm^{-\frac{2s+d+1}{2d}}\,.
\]
Balancing this term with the two polynomial regimes in Corollary~\ref{cor:fromspheretoball} gives the two cases of Theorem~\ref{thm:var-approx}. The proofs of Lemma~\ref{lem:approxonsphere}, Corollary~\ref{cor:fromspheretoball}, and the final balancing argument are given in Appendix~\ref{appendix:proof-var-approx}.

The proof of Theorem~\ref{thm:var-approx} uses the special structure of the uniform measure $\tau_d$ in Lemma~\ref{lem:approxonsphere}. For a general measure $\tau$, the Funk--Hecke formula is no longer available, so one cannot derive a spherical-harmonic expansion analogous to \eqref{eq:fourier_series}. Moreover, even when $\mathrm{d}\nu\ll\mathrm{d}\tau_d$, the Radon--Nikodym derivative need not inherit the integrability of $a(\wb,b)$, so the proof strategy breaks down. The critical value $p^*$ is tied to the boundedness of projection operators: the argument requires estimates for the modulus of smoothness on $L^p$-type spaces, and the $L^p$ boundedness of spherical-harmonic projection operators changes its form at this index \citep{kwon2018sharplplqestimatesspherical}. Thus $p^*$ should be understood as a breakpoint of the projection estimate; the intermediate approximation rate in Lemma~\ref{lem:approxonsphere} remains polynomial and continuous across this point.

\subsection{Approximation of Sobolev Spaces}

We next turn to Sobolev spaces. Existing shallow-network approximation results for Sobolev functions often pass through embeddings into Barron or spectral Barron spaces, but the available embeddings are most effective when the smoothness is measured in $L^2$ or stronger norms. For $1\le p<2$, such an embedding is more delicate. The following theorem gives an approximation bound by first embedding $W^{\alpha,p}$ into a spectral Barron space and then applying known spectral-Barron approximation estimates.
\begin{theorem}[Approximation of Sobolev spaces by shallow networks]\label{thm:sobolev-approx}
    Let $s\in\nbb$, $\alpha>0$, $1\le p<2$, and $f\in W^{\alpha,p}$. Then
    \begin{align*}
        \inf_{f_m\in\Sigma_{m}^s}\|f-f_m\|_{L^p}\lesssim\|f\|_{W^{\alpha,p}}\begin{cases}
            m^{-\frac{d+2s}{2(d+1)}}\,,&\alpha\ge s+d\,,\\
            m^{-\frac{\alpha(d+2s)}{2(s+d)(d+1)}}\,,&\alpha\in\nbb\cap\left(0,s+d\right)\,.
        \end{cases}
    \end{align*}
\end{theorem}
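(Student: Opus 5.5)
The plan is to route $W^{\alpha,p}$ through a spectral Barron space. We then invoke a known spectral-Barron approximation rate, and interpolate for non-maximal smoothness.

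\textbf{Step 1: extension.} Take an extension of $f$ to $\rbb^d$, via a bounded extension operator $W^{\alpha,p}(\Omega)\to W^{\alpha,p}(\rbb^d)$, and multiply by a smooth cutoff $\chi$ equal to $1$ on $\Omega$. This gives $F\in W^{\alpha,p}(\rbb^d)$ with compact support and $\|F\|_{W^{\alpha,p}}\lesssim\|f\|_{W^{\alpha,p}}$.

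\textbf{Step 2: embedding into a spectral Barron space.} The target is the embedding $W^{s+d,p}\hookrightarrow \bscr^{\,t}$ into the spectral Barron space $\bscr^{\,t}$ with norm $\int|\hat F(\xi)|(1+|\xi|)^{t}\,d\xi$ for a suitable $t$. Because $p<2$, Plancherel is unavailable, so I would use Hausdorff--Young instead. For $1\le p\le2$, $\|\hat F\|_{L^{p'}}\lesssim\|F\|_{L^p}$, and similarly $\|\,|\xi|^{k}\hat F\|_{L^{p'}}\lesssim \|F\|_{W^{k,p}}$ for integer $k$. Hölder's inequality then gives
\[
\int|\hat F|(1+|\xi|)^{t}\,d\xi\le \|(1+|\xi|)^{s+d}\hat F\|_{L^{p'}}\,\|(1+|\xi|)^{t-s-d}\|_{L^{p}}.
\]
The second factor is finite when $(s+d-t)p>d$. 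Taking $t$ just below $s+d-d/p\ge s$, and in particular $t\ge s$ when $p\ge1$ (at $p=1$ use $\|\hat F\|_{L^\infty}\le\|F\|_{L^1}$ together with a slightly larger regularity budget), gives $\|F\|_{\bscr^{s}}\lesssim\|f\|_{W^{s+d,p}}$. I expect this to be the main obstacle. The exponents must be checked carefully so that $\alpha=s+d$ suffices uniformly in $p\in[1,2)$, and the endpoint $p=1$ may require the factor $(1+|\xi|)^{-d-\epsilon}$ to be absorbed by compact support, i.e.\ by smoothness of $\hat F$.

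\textbf{Step 3: network approximation.} Apply the known spectral-Barron approximation bound for ReLU$^s$ networks (Siegel--Xu type, $s\ge1$), which yields $\|F-f_m\|_{L^\infty(\Omega)}\lesssim\|F\|_{\bscr^{s}}\,m^{-\frac{d+2s}{2(d+1)}}$. Since $\Omega$ is bounded, the $L^p$ error is bounded by the $L^\infty$ error, giving the first case.

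\textbf{Step 4: the case $\alpha\in\nbb\cap(0,s+d)$.} Use a K-functional argument. For integer $\alpha<s+d$, Sobolev spaces interpolate, so
\[
K(f,\delta)=\inf_{g}\left(\|f-g\|_{L^p}+\delta\|g\|_{W^{s+d,p}}\right)\lesssim \delta^{\alpha/(s+d)}\|f\|_{W^{\alpha,p}}.
\]
A concrete choice is $g=f*\phi_\epsilon$, a mollification at scale $\epsilon$, for which $\|f-g\|_{L^p}\lesssim\epsilon^{\alpha}\|f\|_{W^{\alpha,p}}$ and $\|g\|_{W^{s+d,p}}\lesssim\epsilon^{\alpha-s-d}\|f\|_{W^{\alpha,p}}$. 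This is where the integer restriction enters.

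Now approximate $g$ using Steps 2--3, with error $\epsilon^{\alpha-s-d}m^{-\frac{d+2s}{2(d+1)}}$. Balance this against $\epsilon^{\alpha}$ by choosing $\epsilon^{s+d}=m^{-\frac{d+2s}{2(d+1)}}$. The total error is then $m^{-\frac{\alpha(d+2s)}{2(s+d)(d+1)}}$, which is the second case.
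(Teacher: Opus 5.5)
Your architecture is the paper's. You extend, embed into the spectral Barron space $\fscr_s$, apply the Siegel--Xu rate, and for integer $\alpha<s+d$ smooth at scale $\varepsilon$ and balance $\varepsilon^{\alpha}$ against $\varepsilon^{\alpha-s-d}m^{-\frac{d+2s}{2(d+1)}}$. For $1<p<2$ your Hausdorff--Young/H\"older argument is a correct, elementary proof of $W^{s+d,p}\hookrightarrow\fscr_s$. In the second case it is even leaner than the paper, which uses compact support to pass to $W^{s+d,1}$ and therefore needs an $L^1$-endpoint embedding for every $p$.

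The genuine gap is $p=1$, which the statement includes. There Hausdorff--Young gives only $|\hat F(\xi)|\lesssim(1+|\xi|)^{-s-d}\|F\|_{W^{s+d,1}}$, and $\int_{\rbb^d}(1+|\xi|)^{s}(1+|\xi|)^{-s-d}\,d\xi$ diverges logarithmically. Neither remedy you suggest closes this.
A larger regularity budget proves the first case only for $\alpha>s+d$, not at $\alpha=s+d$. Used in Step 4 (smoothing into $W^{s+d+\eta,1}$, $\eta>0$), it weakens the exponent to $\frac{\alpha(d+2s)}{2(s+d+\eta)(d+1)}$.
Compact support makes $\hat F$ smooth, but the divergence is a lack of decay at infinity, and smoothness of $\hat F$ does not help with that. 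Indeed, for $d=1$ the needed embedding reduces to $W^{1,1}\subset\mathcal{F}L^1$, which is false even for compactly supported functions. Take a primitive $P$ of a mean-zero integrable $2\pi$-periodic function with Fourier coefficients $\asymp 1/\log|n|$. The coefficients of $P$ are $\asymp 1/(|n|\log|n|)$, which are not summable, so by Wiener's localization a cut-off of $P$ is not in $\mathcal{F}L^1$.

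The missing ingredient is the critical embedding $W^{s+d,1}\hookrightarrow\fscr_s$, which the paper does not prove but imports from the cited literature. It cannot come from Hausdorff--Young. It rests on an $L^1$ Hardy-type inequality $\int_{\rbb^d}|\hat u(\xi)|\,|\xi|^{1-d}\,d\xi\lesssim\|\nabla u\|_{L^1}$ applied to derivatives of order $s+d-1$, and this is a genuinely $d\ge2$ phenomenon. You must invoke it for the first case at $p=1$, $\alpha=s+d$.

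In the second case at $p=1$ you can bypass it. If the smoothing operator has a Schwartz multiplier $\hat\Psi(\varepsilon\xi)$, then $|\hat g(\xi)|\lesssim(1+|\xi|)^{-\alpha}\min\{1,(\varepsilon|\xi|)^{-N}\}\|f\|_{W^{\alpha,1}}$. Hence $\int(1+|\xi|)^{s}|\hat g(\xi)|\,d\xi\lesssim\varepsilon^{-(s+d-\alpha)}\|f\|_{W^{\alpha,1}}$ with no logarithm, precisely because $\alpha<s+d$.

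Two smaller corrections.
First, with a nonnegative symmetric kernel, $f-f*\phi_\varepsilon$ is of order $\varepsilon^{2}\Delta f$. So your concrete choice of $g$ does not achieve $\varepsilon^{\alpha}$ once $\alpha\ge3$, although the $K$-functional bound you quote is true. Realize it with a kernel whose moments vanish up to order $\alpha-1$. Alternatively, use the paper's combination $\sum_{t=1}^{\alpha}\binom{\alpha}{t}(-1)^{t-1}\phi_{t\varepsilon}*f$, whose error is an average of $\alpha$-th order differences of $f$.
Second, the Siegel--Xu bound the paper uses is an $L^2(\Omega)$ estimate, not $L^\infty$. That suffices here, since $p<2$ and $\Omega$ is bounded.
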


Theorem~\ref{thm:sobolev-approx} addresses the regime $1\le p<2$, which is not covered by \cite{MaoSiegelXu2026}. Its proof is based on a new embedding from the Sobolev space $W^{\alpha,p}$ into the spectral Barron space, which reduces the problem to the $L^2$ approximation theorem for spectral Barron spaces in \cite[Theorem~3]{SiegelXu2022HighOrder}. The resulting Sobolev rates are not expected to be optimal, because the embedding and smoothing steps may lose information specific to $W^{\alpha,p}$. The proof is given in Appendix~\ref{appendix:proof_sobolev}.

\section{Generalization Analysis}\label{sec:generalization}

The Barron norm is equivalent to the variation norm of infinite-width ReLU representations, while the $\ell_1$ path norm is its finite-width, parameter-level counterpart. This connection provides a natural way to control the complexity of trained networks through their parameters and motivates the path-norm regularization used below.
We consider empirical risk minimization with an $\ell_1$ path-regularization term:
\begin{align*}
    f_m\in\operatorname*{argmin}_{f_\theta\in\Sigma_{m}^s}\left\{\frac{1}{n}\sum_{i=1}^n|f_\theta(\xb_i)-y_i|^2+\lambda\|\theta\|_{\pscr_{1,s}}\right\}\,.
\end{align*}
For simplicity, we assume throughout that the above optimization problem admits a solution.

\subsection{Upper Bounds for the Generalization Error}
This subsection derives upper bounds for the generalization error of shallow ReLU$^s$ networks under suitable regularity and noise assumptions. The analysis uses the following two assumptions:
\begin{assumption}[Regularity assumption]\label{assumption:regularity}
    The regression function satisfies
    \[
        f_\rho\in W^{\alpha,\infty}(\Omega)\,,
        \qquad
        \|f_\rho\|_{W^{\alpha,\infty}(\Omega)}\le1\,,
    \]
    where $\alpha>0$ and, when $\alpha<s+\frac{d+1}{2}$, $\alpha$ is an integer.
\end{assumption}
\begin{assumption}[Noise assumption]\label{assumption:noise}
    Let ${\etab}=Y-f_\rho(X)$ denote the (zero-mean) noise. Then ${\etab}$ is sub-Gaussian, that is, there exists $\sigma>0$ such that
        \begin{align*}
            \ebb\left[e^{\lambda{\etab}}\right]\le e^{\frac{\lambda^2\sigma^2}{2}},\ \ \forall\lambda\in\rbb\,.
        \end{align*}
        Moreover, it is independent of $X$.
\end{assumption}

In Assumption~\ref{assumption:regularity}, the requirement $\alpha\in\nbb$ when $\alpha<s+\frac{d+1}{2}$ comes from the approximation theorem used later. Assumption~\ref{assumption:noise} is more general than the commonly used bounded-noise assumption: it includes bounded, Gaussian, and more general sub-Gaussian noise. An even broader condition is the sub-exponential assumption, namely, that there exist $R>0$ and $C>1$ such that
\begin{align*}
    \ebb[|{\etab}|^p|X=\xb]\le Cp!R^p,\ \ \forall p\in\nbb,x\in\Omega\,,
\end{align*}
for which analyses of unbounded errors can be found in \citep{Wainwright_2019,guozhengchu13,atail08}. Both the sub-Gaussian and sub-exponential conditions can be unified by requiring the Orlicz norm $\|\cdot\|_{\psi_p}$ to be finite. It is defined by
\begin{align*}
    \|X\|_{\psi_p}=\inf\{t>0:\ebb[\exp(|X|^p/t^p)]\le2\},\ \ p\ge1\,.
\end{align*}
When $p=1$ and $p=2$, this reduces to the sub-exponential and sub-Gaussian cases, respectively. For a truncation level $B>\|f_\rho\|_{L^\infty(\Omega)}$, let $\pi_B t=\max\{-B,\min\{t,B\}\}$ and define $\pi_B f=\pi_B\circ f$.\label{sym:first:pi_B_f} The main result of this subsection is the following theorem, proved in Appendix~\ref{appendix:proof-upper}.
\begin{theorem}\label{thm:upperbound}
    Under Assumptions~\ref{assumption:noise} and~\ref{assumption:regularity}, consider the empirical risk minimizer
    \begin{align*}
        f_m\in\arg\min_{f\in\Sigma_m^s}\left\{\frac{1}{n}\sum_{i=1}^n(f(\xb_i)-y_i)^2+\lambda\|f\|_{\pscr_{1,s}}\right\}\,.
    \end{align*}
    Then there exist constants $c_1,c_2>0$ such that:
    \begin{enumerate}
        \item If $\alpha\ge s+\frac{d+1}{2}$, then by taking
$$m\gtrsim n^{\frac{d}{2d+2s+1}},\ \ \lambda\asymp n^{-\frac{d+2s+1}{2d+2s+1}}\log n\label{eq:parameters1}
$$
\label{sym:first:gtrsim}\label{sym:first:asymp}
one has
        \begin{align*}
            \|\pi_B f_m-f_\rho\|_{L^2(\mathrm{d} \mu)}^2\lesssim n^{-\frac{d+2s+1}{2d+2s+1}}\log n\,,
        \end{align*}
        with probability at least $1-c_1e^{-c_2n^{\frac{d}{2d+2s+1}}\log n}$.
        \item If $\alpha\in\nbb\cap(0,s+\frac{d+1}{2})$, then by taking
$$m\gtrsim n^{\frac{d}{2\alpha+d}},\ \ \lambda\asymp n^{-\frac{1}{2}-\frac{2s+1}{4\alpha+2d}}\log n$$
one has
        \begin{align*}
            \|\pi_B f_m-f_\rho\|_{L^2(\mathrm{d} \mu)}^2\lesssim n^{-\frac{2\alpha}{2\alpha+d}}\log n\,,
        \end{align*}
        with probability at least $1-c_1e^{-c_2n^{\frac{d}{2\alpha+d}}\log n}$.
    \end{enumerate}
\end{theorem}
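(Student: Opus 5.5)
The plan is a standard oracle-inequality argument for penalized least squares. The error of the truncated estimator $\pi_Bf_m$ is split into an approximation term, controlled by the earlier Sobolev/Barron approximation results, and a stochastic term, controlled through localized complexity of path-norm balls. The key steps, in order, are as follows.

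\textbf{Step 1: a good comparison network.} First I fix an approximant $f^*\in\Sigma_m^s$ with controlled path norm.
\begin{itemize}
    \item If $\alpha\ge s+\frac{d+1}{2}$: the embedding $W^{\alpha,\infty}(\Omega)\hookrightarrow W^{s+\frac{d+1}{2},2}(\Omega)\hookrightarrow\bscr_s$ gives $\|f_\rho\|_{\bscr_s}\lesssim1$. Lemma~\ref{thm:barron-approx} then gives $f^*$ with
    \[
    \|f^*-f_\rho\|_{L^\infty}\lesssim m^{-\frac12-\frac{2s+1}{2d}}.
    \]
    I will inspect the construction in Appendix~\ref{appendix:proof-barron-approx} (stratified sampling from the variation-space representation) to also get $\|f^*\|_{\pscr_{1,s}}\lesssim\|f_\rho\|_{\bscr_s}\lesssim1$.
    \item If $\alpha\in\nbb\cap(0,s+\frac{d+1}{2})$: I would use the $m^{-\alpha/d}$ result quoted in the introduction \citep{MaoSiegelXu2026}. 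Its proof approximates $f_\rho$ by a Barron function $g$ with $\|f_\rho-g\|_{L^\infty}\lesssim M^{-\alpha/(s+\frac{d+1}{2})}$ and $\|g\|_{\bscr_s}\lesssim M$, and then applies Lemma~\ref{thm:barron-approx} to $g$. This yields a trade-off between path norm $M$ and approximation error that will be optimized in Step 4.
\end{itemize}

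\textbf{Step 2: basic inequality.} Comparing the objective at $f_m$ and at $f^*$ gives
\[
\|f_m-f_\rho\|_n^2+\lambda\|f_m\|_{\pscr_{1,s}}\le\|f^*-f_\rho\|_n^2+\frac2n\sum_{i=1}^n\eta_i\bigl(f_m-f^*\bigr)(\xb_i)+\lambda\|f^*\|_{\pscr_{1,s}}.
\]
Because the $\ell_1$ path norm is homogeneous and absorbs the neuron scaling, $f_m-f^*$ lies in a path-norm ball $\Sigma^s_{2m,M'}$ with $M'=\|f_m\|_{\pscr_{1,s}}+\|f^*\|_{\pscr_{1,s}}$. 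I will peel over dyadic shells of $M'$ and of $\|f_m-f^*\|_n$. On each shell, the noise process is handled by \citet{Dirksen15}'s generic chaining tail bound for sub-Gaussian processes, with increments sub-Gaussian in the empirical $L^2$ metric because $\etab$ is independent of $X$. This bound is $\sigma\,n^{-1/2}\gamma_2$ plus a deviation term that produces the exponential probability.

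The complexity input is the metric entropy of the path-norm unit ball, i.e.\ of $\bscr_s$ restricted to $\Omega$ \citep{Siegel_2022sharp}:
\[
\log N(\epsilon)\lesssim\epsilon^{-\frac{2d}{d+2s+1}}.
\]
Dudley's integral localized at radius $r$ then gives a modulus $\psi(r)\lesssim M'\,n^{-1/2}(r/M')^{1-\frac{d}{d+2s+1}}$. Solving $\psi(r)\asymp r^2$ with $M'\asymp1$ gives the critical radius $r^2\asymp n^{-\frac{d+2s+1}{2d+2s+1}}$. The $\log n$ factor comes from the union over shells and from the choice of $\lambda$. Choosing $\lambda$ of the order of the critical rate divided by the norm makes the penalty dominate the noise term at large $M'$, which forces $\|f_m\|_{\pscr_{1,s}}\lesssim\|f^*\|_{\pscr_{1,s}}+1$.

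\textbf{Step 3: from empirical to population norm.} Truncation by $\pi_B$ does not increase the distance to $f_\rho$ since $|f_\rho|<B$, and it makes the class uniformly bounded. A standard uniform ratio-type concentration (e.g.\ Theorem 14.1 in \citet{Wainwright_2019}, with the same entropy bound) compares $\|\cdot\|_n$ and $L^2(\mathrm{d}\mu)$ above the critical radius, with failure probability $e^{-c n r^2}$. Plugging in $nr^2\asymp n^{\frac{d}{2d+2s+1}}\log n$ gives the stated probability.

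\textbf{Step 4: balancing.} In case 1, the approximation error $m^{-1-\frac{2s+1}{d}}$ is below the critical rate once $m\gtrsim n^{\frac{d}{2d+2s+1}}$. In case 2, I take $M\asymp n^{\theta}$ so that the approximation error $M^{-2\alpha/(s+\frac{d+1}{2})}$ matches the stochastic term $M^{2-\kappa}n^{-\kappa'}$ coming from the localized entropy bound scaled by $M$. This should give $n^{-\frac{2\alpha}{2\alpha+d}}$, with $m\gtrsim n^{d/(2\alpha+d)}$ chosen so that the network approximation of $g$ is negligible. The exponent of $\lambda$ then follows.

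The main obstacle is Step 2. The difficulty is the sharp localized chaining bound for the path-norm ball with \emph{unbounded} sub-Gaussian noise, done uniformly over the random norm $\|f_m\|_{\pscr_{1,s}}$ through peeling so that the log factor stays single. The case-2 exponent bookkeeping is the second risk.
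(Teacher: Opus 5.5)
Most of your architecture matches the paper's proof:
\begin{itemize}
\item a comparison network with controlled $\ell_1$ path norm, plugged into the basic inequality;
\item Dirksen's generic-chaining tail bound on the star hull of a path-norm ball, giving $\delta_n^2\asymp n^{-\frac{A}{A+d}}M^{\frac{2d}{A+d}}$ up to logarithms, with $A=d+2s+1$;
\item $\lambda\asymp\delta_n^2/M$ to force $\|f_m\|_{\pscr_{1,s}}\lesssim M$ (the paper rescales $f_m-f$ by $M/\|f_m\|_{\pscr_{1,s}}$ instead of peeling in $M'$);
\item truncation and Wainwright's Theorem~14.1.
\end{itemize}
Case~1 goes through.

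The genuine gap is the case~2 trade-off you postulate: $\|f_\rho-g\|_{L^\infty}\lesssim M^{-\alpha/\alpha^*}$ with $\|g\|_{\bscr_s}\lesssim M$, where $\alpha^*=s+\frac{d+1}{2}$. Your Step~4 balances $M^{-4\alpha/A}$ against $n^{-\frac{A}{A+d}}M^{\frac{2d}{A+d}}$, which only yields
\[
n^{-\frac{2\alpha A}{2\alpha A+dA+2\alpha d}}.
\]
This is strictly slower than $n^{-\frac{2\alpha}{2\alpha+d}}$, and the stated $\lambda$ exponent cannot come out either. The same trade-off combined with Lemma~\ref{thm:barron-approx} does not even reproduce the $m^{-\alpha/d}$ rate you say you are invoking. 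Balancing $M^{-\alpha/\alpha^*}$ with $Mm^{-\alpha^*/d}$ gives $m^{-\frac{\alpha\alpha^*}{d(\alpha+\alpha^*)}}$. So ``this should give $n^{-2\alpha/(2\alpha+d)}$'' is false as written.

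The correct trade-off is
\[
\inf_{\|g\|_{\bscr_s}\le M}\|f_\rho-g\|_{L^\infty}\lesssim M^{-\frac{\alpha}{\alpha^*-\alpha}}.
\]
To get it, smooth $f_\rho$ at scale $\varepsilon$ with the integer-$\alpha$ combination $f_\varepsilon$ of Appendix~\ref{appendix:proof_sobolev}; this is where $\alpha\in\nbb$ enters. The smoothing costs $\varepsilon^\alpha$ in $L^\infty$, while $\|f_\varepsilon\|_{\bscr_s}\lesssim\|f_\varepsilon\|_{W^{\alpha^*,2}}\lesssim\varepsilon^{\alpha-\alpha^*}$. Equivalently, as the paper uses, the network attaining $m^{-\alpha/d}$ has coefficient $\ell_1$ norm $\widetilde M\asymp m^{\frac{2s+d+1-2\alpha}{2d}}=m^{\frac{\alpha^*-\alpha}{d}}$.

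Balancing $M^{-\frac{2\alpha}{\alpha^*-\alpha}}$ against $n^{-\frac{A}{A+d}}M^{\frac{2d}{A+d}}$ then gives:
\begin{itemize}
\item $M\asymp n^{\frac{\alpha^*-\alpha}{2\alpha+d}}$;
\item the rate $n^{-\frac{2\alpha}{2\alpha+d}}\log n$;
\item $\lambda=8\delta_n^2/M\asymp n^{-\frac12-\frac{2s+1}{4\alpha+2d}}\log n$;
\item a network error $Mm^{-\alpha^*/d}$ that is negligible exactly when $m\gtrsim n^{\frac{d}{2\alpha+d}}$.
\end{itemize}

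Once this is corrected, your decoupled choice is slightly cleaner than the paper's: you fix $M$ by $n$ and then allow any sufficiently large $m$. The paper instead sets $M\propto m^{\frac{2s+d+1-2\alpha}{2d}}$. It therefore only obtains the stated $\delta_n$ and $\lambda$ when $m\asymp n^{\frac{d}{2\alpha+d}}$, not for all $m\gtrsim n^{\frac{d}{2\alpha+d}}$.
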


\subsection{Minimax Lower Bounds}
This subsection proves that the upper bound in Theorem~\ref{thm:upperbound} is minimax optimal. We continue to consider i.i.d. samples $\{(\xb_i, y_i)\}_{i=1}^n$ drawn from an unknown distribution $\mathrm{d}\rho$, and estimators of the form
\begin{align*}
    f_m \in \arg\min_{f \in \Sigma_m^s} \left\{ \frac{1}{n} \sum_{i=1}^n (f(\xb_i) - y_i)^2 + \lambda \|f\|_{\pscr_{1,s}} \right\}\,.
\end{align*}
For the lower-bound argument, we impose the following two assumptions:
\begin{assumption}[Noise assumption]\label{assumption:noise2}
    The samples satisfy
    \begin{align*}
        y_i = f_\rho(\xb_i) + \eta_i,\quad \eta_i \sim \ncal(0, \sigma^2)\,.
    \end{align*}
    Here $\sigma^2 > 0$ is the variance. 
\end{assumption}
\begin{assumption}[Sample distribution assumption]\label{assumption:sample}
    Assume that the marginal distribution of $X$ is the uniform distribution on $\Omega$, namely, $\mu = U(\Omega)$. 
\end{assumption}

Assumption~\ref{assumption:noise2} is a special case of Assumption~\ref{assumption:noise}. We impose Gaussian noise mainly because it simplifies the computation of the Kullback--Leibler divergence. Similar arguments should also apply to certain other sub-Gaussian distributions, such as log-concave or light-tailed laws. Assumption~\ref{assumption:sample} is introduced because we need to estimate the metric entropy $\log \ncal(\varepsilon, W^{\alpha,\infty}(\Omega), \|\cdot\|_{L^2(\mathrm{d} \mu)})$. If $\mu = U(\Omega)$, then
\begin{align*}
    \log \ncal(\delta, B W^{\alpha,\infty}(\Omega), \|\cdot\|_{L^2(\mathrm{d} \mu)}) \asymp \log \ncal(\delta, B W^{\alpha,\infty}(\Omega), \|\cdot\|_{L^2(\Omega)})\,.
\end{align*}
In fact, it is enough to assume that $\mu \ll m$ and that $w = \frac{\mathrm{d} \mu}{dx}$ is bounded, where $m$ denotes Lebesgue measure; under this condition, the same conclusion continues to hold.

Given a target function class $\fcal$, we define the \emph{minimax risk} by
\begin{align*}
    \inf_{\hat{f}} \sup_{f_\rho \in \fcal} \ebb \left[ \| \hat{f} - f_\rho \|_{L^2(\mathrm{d} \mu)}^2 \right]\,,
\end{align*}
where the infimum is taken over all estimators, that is, all measurable functions of the sample $\{(\xb_i, y_i)\}_{i=1}^n$. We are interested in lower bounds of order $n^{-\beta}$. If such a lower bound matches the upper bound in Theorem~\ref{thm:upperbound}, then the estimator $f_m$ is statistically optimal.

\begin{theorem}\label{thm:lowerbound1}
    Let $\alpha > 0$. Then under Assumptions~\ref{assumption:noise2} and~\ref{assumption:sample}, the minimax risk satisfies
    \begin{align*}
        \inf_{\hat{f}} \sup_{\substack{f_\rho \in W^{\alpha,\infty}(\Omega), \\ \|f_\rho\|_{W^{\alpha,\infty}(\Omega)} \le 1}} \ebb \left[ \| \hat{f} - f_\rho \|_{L^2(\mathrm{d} \mu)}^2 \right] \gtrsim n^{-\frac{2\alpha}{2\alpha + d}}\,.
    \end{align*}
\end{theorem}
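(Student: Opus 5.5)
We prove the lower bound with the standard Fano / Varshamov--Gilbert packing argument, exploiting that the Gaussian noise of Assumption~\ref{assumption:noise2} makes the Kullback--Leibler divergences explicit. The plan has four steps.

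\textbf{Step 1: the packing family.} Fix a bandwidth $h\in(0,1)$, to be chosen later as a power of $n$. Partition a cube contained in $\Omega=\bbb^d$ into $N\asymp h^{-d}$ disjoint subcubes of side $h$, with centers $\xb_1,\dots,\xb_N$. Fix a nonnegative bump $\phi\in C_c^\infty$ supported in the unit cube and not identically zero. Set
\[
\phi_j(\xb)=c_0\,h^{\alpha}\phi\left(\frac{\xb-\xb_j}{h}\right).
\]
For each sign vector $\omega\in\{0,1\}^N$, define
\[
f_\omega=\sum_{j=1}^N\omega_j\phi_j .
\]

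\textbf{Step 2: membership in the unit ball of $W^{\alpha,\infty}(\Omega)$.} The supports of the $\phi_j$ are disjoint, and $\|\partial^\beta\phi_j\|_{L^\infty}\lesssim h^{\alpha-|\beta|}\le 1$ for $|\beta|\le\alpha$. For noninteger $\alpha$ we also need the H\"older seminorm of the top derivatives. We control it by the usual two-case estimate: if two points lie in the same subcube, use the mean value theorem on a single bump; if they lie in different subcubes, compare each value to a zero at the respective cell boundary. Both cases give a bound of order $c_0$. Choosing $c_0$ small therefore yields $\|f_\omega\|_{W^{\alpha,\infty}(\Omega)}\le1$ for every $\omega$. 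I expect the fractional case to be the only delicate point. It is routine nonetheless: since the bumps vanish to infinite order at the cell boundaries, a cross-cell difference splits into two within-cell differences to the boundary.

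\textbf{Step 3: separation and KL bounds.} By Assumption~\ref{assumption:sample}, $\mu$ is uniform, so for any $\omega,\omega'$,
\[
\|f_\omega-f_{\omega'}\|_{L^2(\mathrm{d}\mu)}^2\asymp d_H(\omega,\omega')\,h^{2\alpha+d},
\]
where $d_H$ is the Hamming distance. By the Varshamov--Gilbert bound there is a subset $\Omega_0\subset\{0,1\}^N$ with $|\Omega_0|\ge 2^{N/8}$ and pairwise Hamming distance at least $N/8$. Hence distinct elements of $\Omega_0$ satisfy
\[
\|f_\omega-f_{\omega'}\|_{L^2(\mathrm{d}\mu)}^2\gtrsim h^{2\alpha}=:\delta^2 .
\]
Under Gaussian noise, the KL divergence between the $n$-sample laws is
\[
\frac{n}{2\sigma^2}\|f_\omega-f_{\omega'}\|_{L^2(\mathrm{d}\mu)}^2\lesssim n N h^{2\alpha+d}\asymp n h^{2\alpha}.
\]

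\textbf{Step 4: Fano and the choice of $h$.} Fano's inequality gives a minimax risk $\gtrsim \delta^2$, provided the mutual-information bound $n h^{2\alpha}\lesssim c\log|\Omega_0|\asymp c\,h^{-d}$ holds with a small constant $c$. Choosing
\[
h\asymp n^{-\frac{1}{2\alpha+d}}
\]
with a suitable constant balances the two sides. The resulting lower bound is
\[
\delta^2\asymp h^{2\alpha}=n^{-\frac{2\alpha}{2\alpha+d}},
\]
which is the claimed rate.
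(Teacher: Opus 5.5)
Your proof is correct, but it takes a different route from the paper. The paper never constructs hypotheses explicitly. It takes a global $\delta$-packing of the unit ball of $W^{\alpha,\infty}(\Omega)$ with $\log T\gtrsim\delta^{-d/\alpha}$, obtained from the Birman--Solomjak entropy estimate. It then controls the mutual information through the Yang--Barron inequality $I(Z;J)\le\inf_{\varepsilon>0}\{\varepsilon^2+\log\ncal_{KL}(\varepsilon;\pcal)\}\lesssim n^{\frac{d}{2\alpha+d}}$, which uses the matching upper entropy bound to cover the entire class in KL. Fano then gives the rate with $\delta\asymp n^{-\frac{\alpha}{2\alpha+d}}$.

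You instead build a local packing from disjoint bumps and Varshamov--Gilbert, so that all pairwise squared $L^2(\mathrm{d}\mu)$ distances are of order $h^{2\alpha}$ from both above and below. This two-sided control is what lets the crude bound $I(Z;J)\le\max_{\omega\neq\omega'}\mathrm{KL}\lesssim nh^{2\alpha}$ work. That bound fails for a global packing, whose pairwise distances can be of order one, and this is exactly why the paper needs Yang--Barron.

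What each route buys:
\begin{itemize}
\item Your argument is self-contained and elementary, and it produces explicit constants. It verifies directly that the hypotheses lie in the unit ball, including the H\"older seminorm for fractional $\alpha$, rather than delegating this to an entropy theorem.
\item The paper's argument is a black-box template that needs only two-sided entropy estimates. That is why it transfers verbatim to the Barron ball in Theorem~\ref{thm:lowerbound2}, where an explicit well-separated family with controlled $\bscr_s$-norm would require extra work.
\end{itemize}

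The only bookkeeping you leave implicit is minor. For $n$ large, $N\ge 8$ holds and the $\log 2$ term in Fano is absorbed. Small $n$ is handled by monotonicity of the minimax risk in $n$.
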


It follows that when $\alpha < s + \frac{d+1}{2}$, the upper bound in Theorem~\ref{thm:upperbound} matches the minimax lower bound up to logarithmic factors, so the clipped estimator $\pi_B f_m$ is statistically optimal. By contrast, when $\alpha \ge s + \frac{d+1}{2}$, the upper bound no longer matches the Sobolev minimax lower bound. The reason is that our upper-bound argument uses the embedding $W^{\alpha,\infty}(\Omega)\subset \bscr_s(\Omega)$, which effectively reduces the problem to the Barron class and yields the rate $n^{-\frac{d+2s+1}{2d+2s+1}}\log n$. The next theorem shows that this rate is itself minimax optimal over $\bscr_s(\Omega)$.

\begin{theorem}\label{thm:lowerbound2}
    Under Assumptions~\ref{assumption:noise2} and~\ref{assumption:sample}, the minimax risk satisfies
    \begin{align*}
        \inf_{\hat{f}} \sup_{\substack{f_\rho \in \bscr_s(\Omega), \\ \|f_\rho\|_{\bscr_s(\Omega)} \le 1}} \ebb \left[ \| \hat{f} - f_\rho \|_{L^2(\mathrm{d} \mu)}^2 \right] \gtrsim n^{-\frac{d+2s+1}{2d+2s+1}}\,.
    \end{align*}
\end{theorem}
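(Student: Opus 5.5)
We establish the lower bound in Theorem~\ref{thm:lowerbound2} by the standard Fano reduction: we build a large packing of the Barron unit ball in $L^2(\mathrm{d}\mu)$ whose members have small pairwise Kullback--Leibler divergence under the Gaussian model of Assumption~\ref{assumption:noise2}. Under Assumption~\ref{assumption:noise2}, the $n$-sample laws $P_f$ and $P_g$ satisfy $\mathrm{KL}(P_f\|P_g)=\frac{n}{2\sigma^2}\|f-g\|_{L^2(\mathrm{d}\mu)}^2$. Fano's inequality then gives the following: if $\{f_1,\dots,f_N\}\subset\{\|f\|_{\bscr_s}\le1\}$ is $2\delta$-separated in $L^2(\mathrm{d}\mu)$ with $\max_{j,k}\|f_j-f_k\|_{L^2}^2\le C\delta^2$, and
\[
\frac{nC\delta^2/(2\sigma^2)+\log2}{\log N}\le\frac12 ,
\]
then the minimax risk is $\gtrsim\delta^2$. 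It therefore suffices to find, at each scale $\delta$, a packing with $\log N\gtrsim\delta^{-\frac{2d}{d+2s+1}}$ whose diameter is $O(\delta)$. Solving $n\delta^2\asymp\delta^{-\frac{2d}{d+2s+1}}$ then gives $\delta^2\asymp n^{-\frac{d+2s+1}{2d+2s+1}}$.

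We construct the packing explicitly from bumps made of neurons. Partition the sphere $\sbb^{d-1}$ of directions into about $K^{d-1}$ caps of radius $1/K$, and partition the bias range into $K$ intervals, giving $M\asymp K^d$ parameter cells $(\wb_j,b_j)$. For each cell take a localized atom $\phi_j$: a fixed finite linear combination (independent of $K$) of ReLU$^s$ neurons whose parameters lie in that cell. One option is an $(s+1)$-st order finite difference in $b$ at step $1/K$ combined with a small angular averaging, so that $\phi_j$ behaves like a ridge bump of width $1/K$ across the hyperplane $\wb_j^\top\xb+b_j=0$. With the weights of the finite difference normalized, $\|\phi_j\|_{\bscr_s}\lesssim K^{s+1}$ and $\|\phi_j\|_{L^2}\asymp K^{-1/2}$. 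For a sign vector $\epsilon\in\{\pm1\}^M$ set
\[
f_\epsilon=\frac{c}{M K^{s+1}}\sum_{j=1}^M\epsilon_j\phi_j .
\]
The triangle inequality gives $\|f_\epsilon\|_{\bscr_s}\le c$. Apply the Varshamov--Gilbert bound to select $N\ge e^{M/8}$ sign vectors that pairwise differ in at least $M/8$ entries.

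For separation we need the $\phi_j$ to be almost orthogonal, so that $\|f_\epsilon-f_{\epsilon'}\|_{L^2}^2\asymp\frac{1}{M^2K^{2s+2}}\cdot d_H(\epsilon,\epsilon')\cdot K^{-1}$. This makes the separation and the diameter both $\delta^2\asymp M^{-1}K^{-2s-3}\asymp K^{-(d+2s+3)}$. That does not yet give the exponent we need: combined with $\log N\asymp K^d$ it yields $\log N\asymp\delta^{-\frac{2d}{d+2s+3}}$. The counting has to be sharpened, and this is the step I expect to be the main obstacle. The right count comes from the known sharp metric-entropy bound $\log\ncal(\varepsilon,\bscr_s\text{ unit ball},L^2)\asymp\varepsilon^{-\frac{2d}{d+2s+1}}$ of \cite{Siegel_2022sharp}. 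The corresponding lower-bound construction there uses $K^d$ nearly orthogonal ridge atoms, each normalized in $L^2$, combined with coefficients of size $K^{-d/2}$ rather than $K^{-d}$, so that the Barron norm is controlled through cancellation and orthogonality rather than the triangle inequality. My plan is to use that packing directly. Since $\bscr_s$ was identified with the variation space in Appendix~\ref{appendix:proof-barron-approx}, the entropy lower bound, and the $\delta$-packing of the unit ball with $O(\delta)$-separated subsets, transfer to $\bscr_s$. I then need to check that the packing elements can be chosen with pairwise distances of order $\delta$ both above and below; this holds for a Varshamov--Gilbert hypercube of atoms of equal $L^2$ norm.

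Finally, Assumption~\ref{assumption:sample} ensures that $L^2(\mathrm{d}\mu)$ and $L^2(\Omega)$ norms are equivalent. Substituting $\log N\asymp\delta^{-\frac{2d}{d+2s+1}}$ into the Fano condition and choosing $\delta^2=c'n^{-\frac{d+2s+1}{2d+2s+1}}$ with $c'$ small then gives the stated minimax bound.
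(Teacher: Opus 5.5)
\textbf{There is a gap: the key step, a local packing of the $\bscr_s$ ball, is never established.} You use the local form of Fano's inequality, so everything rests on exhibiting, for each small $\delta$, a subset of the unit ball of $\bscr_s$ that is $2\delta$-separated in $L^2(\mathrm{d}\mu)$, has diameter $O(\delta)$, and has $\log N\gtrsim\delta^{-\frac{2d}{d+2s+1}}$. You never construct this set.

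The entropy lower bound of \cite{Siegel_2022sharp}, as a statement, gives only a large $2\delta$-separated set, with no control on its diameter. Your claim that the construction behind it is a Varshamov--Gilbert hypercube of nearly orthogonal, equal-norm atoms in $L^2(\mathrm{d}\mu)$ is not verified. Taken literally, your description cannot hold. About $K^d$ nearly orthogonal atoms of unit $L^2$ norm with coefficients $\pm K^{-d/2}$ give $e^{cK^d}$ functions at mutual distance $\asymp 1$. The entropy upper bound forbids this inside any fixed ball of $\bscr_s$, so their Barron norms must grow with $K$.

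Your explicit slab attempt also misdiagnoses the obstacle. A unit-height ridge bump of width $1/K$ is $K^s\sum_k c_k\sigma_s(\wb^\top\xb+b-k/K)$, so its $\bscr_s$ norm is $\lesssim K^s$, not $K^{s+1}$. With that normalization your own count gives $\delta^2\asymp K^{-(d+2s+1)}$ and $\log N\asymp\delta^{-\frac{2d}{d+2s+1}}$, which is the correct exponent. What actually fails is the near-orthogonality you assume. Slabs in different directions overlap, and each point of $\Omega$ lies in roughly $K^{d-1}$ of them. Hence $\|\sum_j\phi_j\|_{L^2}^2$ exceeds $\sum_j\|\phi_j\|_{L^2}^2$ by a factor of order $K^{d-1}$, so the Gram matrix is far from diagonal. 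Neither the separation nor the diameter bound for your codewords follows.

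\textbf{How to close it.} What you already cite is enough. You invoke the two-sided bound $\log\ncal(\varepsilon)\asymp\varepsilon^{-a}$ with $a=\frac{2d}{d+2s+1}$, so you can localize by pigeonhole:
\begin{itemize}
\item Take a $2\delta$-separated set of size $e^{c\delta^{-a}}$.
\item Cover the ball by $e^{C(A\delta)^{-a}}$ balls of radius $A\delta$.
\item One of these balls contains at least $e^{(c-CA^{-a})\delta^{-a}}$ of the points. For a large constant $A$, this is exactly the local packing you need.
\end{itemize}

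The paper avoids localization altogether. It applies Fano to a global $\delta$-separated set with $\log T\gtrsim\delta^{-a}$. It then bounds the mutual information by Yang--Barron, $I(Z;J)\le\inf_{\varepsilon}\{\varepsilon^2+\log\ncal(\varepsilon\sqrt{2\sigma^2/n})\}\lesssim n^{\frac{d}{2d+2s+1}}$. In that route the covering upper bound does the job your diameter bound was meant to do.

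Once repaired, both routes rest on the same input: the sharp two-sided entropy of the unit ball of $\bscr_s$ in $L^2$, transferred through Lemma~\ref{lem:kpsequivbs}. Yours needs one extra localization step; the paper's needs none.
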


\section{Conclusion and Discussion}\label{sec:conclusion}
This paper developed approximation and generalization results for shallow ReLU$^s$ networks. For the $L^p$-type space $\fcalwd$, spherical harmonic analysis gives approximation rates for $1\le p<2$ (Theorem~\ref{thm:var-approx}), with the endpoint $p=2$ handled through the Barron-space embedding. The breakpoint $p^*=\frac{2d+2}{d+3}$ reflects a change in the relevant spherical-harmonic projection estimate. For Sobolev spaces $W^{\alpha,p}$ with $1\le p<2$, embeddings into spectral Barron spaces yield $L^p$ approximation bounds (Theorem~\ref{thm:sobolev-approx}) in a regime not covered by \cite{MaoSiegelXu2026}.

On the statistical side, generic chaining gives generalization bounds for $\ell_1$ path-norm-regularized shallow ReLU$^s$ networks under sub-Gaussian noise. The resulting rates are
\[
    O\!\left(n^{-\frac{d+2s+1}{2d+2s+1}}\log n\right)
    \quad\text{over }\bscr_s,\qquad
    O\!\left(n^{-\frac{2\alpha}{2\alpha+d}}\log n\right)
    \quad\text{over }W^{\alpha,\infty}\,.
\]
Together with the minimax lower bounds in Theorems~\ref{thm:lowerbound1} and~\ref{thm:lowerbound2}, these upper bounds are optimal up to logarithmic factors in their respective statistical settings.

Several questions remain.

\begin{itemize}
    \item \textbf{Optimality and dimension dependence for $L^p$-type approximation.} Theorem~\ref{thm:var-approx} gives upper bounds for shallow-network approximation over $L^p$-type spaces. Matching lower bounds for these approximation rates are not known. In addition, our proof uses the Barron-space approximation theorem for the intermediate $\widetilde{\fcal}_{2,\tau_d,s}$ approximant, so the implicit constants may suffer from the curse of dimensionality. It would be useful to determine whether these rates are sharp and whether the dimension dependence in the constants can be reduced.

    \item \textbf{Approximation under general parameter measures.} The approximation results for $L^p$-type spaces rely on taking the parameter measure to be the uniform distribution $\tau_d$, which makes the Funk--Hecke formula available. For general probability measures, the present proof no longer applies. Extending the theory beyond the uniform spherical measure remains an open problem.

    \item \textbf{Direct Sobolev approximation for $1\le p<2$.} The rates in Theorem~\ref{thm:sobolev-approx} are not known to be optimal. Although the spectral-Barron approximation theorem used in the proof is sharp in its own setting, the embedding and smoothing steps may lose information specific to Sobolev spaces. A direct analysis of $W^{\alpha,p}$, possibly using Littlewood--Paley decompositions or related harmonic-analysis tools, may lead to sharper bounds.
\end{itemize}

These questions point to a broader problem: to understand how representation norms, approximation mechanisms, and statistical complexity interact beyond the regimes where Barron-type embeddings are already well developed.


\acks{The work of Lei Shi is partially supported by the National Natural Science Foundation of China [Grant No.12571099].}


\newpage

\appendix
\section{Symbol Table}\label{appendix:symbols}
\small
\newcommand{\symnopage}{\sout{--}}
\begin{center}
\begin{longtable}{p{3.8cm} p{7.7cm} p{2cm}}
\hline
Symbol & Meaning & Def. page \\
\hline
$A\lesssim B$ & There exists a constant $C>0$ such that $A\le CB$ & \symnopage \\
$A\gtrsim B$ & There exists a constant $C>0$ such that $A\ge C^{-1}B$ & \symnopage \\
$A\asymp B$ & Both $A\lesssim B$ and $A\gtrsim B$ hold & \symnopage \\
$O(\cdot)$ & Big O notation & \symnopage \\
$L^p(d\nu)$\label{sym:row:Lpdnu} & $L^p$ space with respect to the measure $d\nu$ & \symnopage \\
$\|\cdot\|_{L^p(d\mu)}$\label{sym:row:Lpdmu} & Norm on the space $L^p(d\mu)$ & \symnopage \\
$\fcal(\Omega)$\label{sym:row:fcalOmega} & Function space $\fcal$ with domain $\Omega$ explicitly indicated & \symnopage \\
$\fcal(R)$\label{sym:row:fcalR} & Unit ball of radius $R$ in the function space $\fcal$ & \symnopage \\
$\bbb^{d}$ & $\{x\in\rbb^d:\|\xb\|_2\le1\}$ unit ball & \symnopage \\
$\sbb^{d-1}$ & $\{\xb\in\rbb^d:\|\xb\|_2=1\}$ unit sphere & \symnopage \\
$\pi_B f$ & Truncated function & \pageref{sym:first:pi_B_f} \\
$\pi_B(\fcal)$ & Family of truncated functions & \pageref{sym:first:pi_B_fcal} \\
$\Sigma_m^s$ & Class of shallow networks with $m$ neurons and ReLU$^s$ activation & \pageref{sym:first:Sigma_ms} \\
$\bscr_s$ & Barron space of order $s$ & \pageref{sym:first:bscr_s} \\
$\|\cdot\|_{\bscr_s}$ & Barron norm of order $s$ & \pageref{sym:first:bscr_norm} \\
$\fcalw$ & Variation space & 6 \\
$\|\cdot\|_{\fcalw}$ & Variation space norm & \pageref{sym:first:fcalw_norm} \\
$W^{s,p}(\Omega)$ & Sobolev space of order $s$ & \pageref{sym:first:Sobolev} \\
$\|\cdot\|_{W^{s,p}}$ & Sobolev norm & \pageref{sym:first:Sobolev_norm} \\
$\ncal(\varepsilon,\fcal,\|\cdot\|)$ & $\varepsilon$-covering number & \pageref{sym:first:covering_number} \\
$\gcal_n(\fcal,\delta,\xib)$ & Local complexity & \pageref{sym:first:gcal_n} \\
$\starhull(\fcal)$ & Star-shaped hull & \pageref{sym:first:starhull} \\
$\gamma_2(T,d)$ & Generic chaining complexity & \pageref{sym:first:gamma_2} \\
$\sigma_s(\cdot)$ & ReLU$^s$ activation function & \pageref{sym:first:sigma_s} \\
$\pscr_{1,s}$ & Path norm of order $s$ & \pageref{sym:first:pscr_s} \\
$\tau_d$ & Uniform measure on $\sbb^d$ & \pageref{sym:first:tau_d} \\
$\omega_d$ & Surface area of $\sbb^d$ & \pageref{sym:first:omega_d} \\
$\chi_A$\label{sym:row:chiA} & Indicator function of set $A$ & \symnopage \\
$\delta_{\xb}$ & Dirac measure at $\xb$ & \symnopage \\
$\ncal(\cdot)$ & Covering number & \symnopage \\
$\ncal_{KL}(\varepsilon;\pcal)$\label{sym:row:ncalKL} & $\varepsilon$-covering number under KL divergence & \symnopage \\
$\fscr_s$ & Spectral Barron space of order $s$ & \pageref{sym:first:fscr_s} \\
$\|\cdot\|_{\fscr_s}$ & Spectral Barron norm of order $s$ & \pageref{sym:first:fscr_norm} \\
$\tscr(\rbb^d)$ & Schwartz function space & \pageref{sym:first:tscr} \\
$\tscr'(\rbb^d)$ & Space of tempered distributions & \pageref{sym:first:tscr_prime} \\
$\pbb_s$ & Dictionary of ReLU$^s$ activation functions & \pageref{sym:first:pbb_s} \\
$\kscr(\dbb)$ & Variation space generated by the dictionary $\dbb$ & \pageref{sym:first:kscr_dbb} \\
$\|\cdot\|_{\dbb}$ & Variation norm induced by the dictionary $\dbb$ & \pageref{sym:first:dbb_norm} \\
$\mathrm{KL}(P\|\|Q)$ & Kullback-Leibler divergence & \symnopage \\
$I(X;Y)$ & Mutual information & \symnopage \\
$\sim$\label{sym:row:sim} & Asymptotically equivalent / identically distributed & \symnopage \\
$\hookrightarrow$ & Embedding of spaces & \symnopage \\
\hline
\end{longtable}
\end{center}

\section{Proof of Theorem \ref{thm:barron-approx}}\label{appendix:proof-barron-approx}
We prove Theorem \ref{thm:barron-approx} through the variation space defined by a Minkowski functional \citep{Siegel_2022sharp,siegel2022characterizationvariationspacescorresponding}. Let $X$ be a separable Banach space and let $\dbb\subset X$ be a uniformly bounded set, often referred to as a \emph{dictionary}. Consider the closed symmetric convex hull of $\dbb$ in $X$:
\begin{align*}
    \dscc=\overline{\left\{\sum_{j=1}^na_jh_j:n\in\nbb,h_j\in\dbb,\sum_{i=1}^n|a_i|\le1\right\}}
\end{align*}
and the Minkowski functional induced by it:
\begin{align*}
    \|f\|_{\dbb}=\inf\left\{c>0:\frac{f}{c}\in \dscc\right\}\,,
\end{align*}
\label{sym:first:dbb_norm}
which we call the \emph{variation norm} induced by $\dbb$. We call
\begin{align*}
    \kscr(\dbb)=\left\{f\in X:\|f\|_{\dbb}<\infty\right\}
\end{align*}
\label{sym:first:kscr_dbb}
the \emph{variation space} generated by $\dbb$.

Consider the dictionary
\begin{align*}
    \pbb_s=\left\{\sigma_s\left(\wb^\top\xb+b\right):\wb \in\sbb^{d-1},b\in [-1,1]\right\}\subset L^2\,,
\end{align*}
\label{sym:first:pbb_s}
The set $\pbb_s$ is uniformly bounded, and as $(\wb ,b)$ traverses $\sbb^{d-1}\times [-1,1]$, the hyperplane $\wb^\top\xb+b$ sweeps across $\Omega $. Consequently, polynomials of degree at most $s$ on $\wb $ can be expressed as linear combinations of $\pbb_s$ \citep{siegel2022characterizationvariationspacescorresponding}. Functions in the variation space can be written in integral form, as stated in the following proposition.
\begin{lemma}[\cite{Yang2024}]
    Let $s\in\nbb_0$. Then
    \begin{align*}
        \overline{\mathrm{conv}(\pm\pbb_s)}=\left\{f(x)=\int_{\sbb^{d-1}\times [-1,1]}\sigma_s\left(\wb^\top\xb+b\right)d\mu(\wb ,b):\|\mu\|\le1\right\}\,,
    \end{align*}
    where $\mu$ is a signed Radon measure on $\sbb^{d-1}\times [-1,1]$, and
    \begin{align}\label{eq:interform}
        \|f\|_{\pbb_s}=\inf\left\{\|\mu\|:f(x)=\int_{\sbb^{d-1}\times [-1,1]}\sigma_s\left(\wb^\top\xb+b\right)d\mu(\wb ,b)\right\}\,.
    \end{align}
\end{lemma}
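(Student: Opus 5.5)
The plan is to prove the two inclusions between $\overline{\mathrm{conv}(\pm\pbb_s)}$ (closure in $L^2(\Omega)$) and the set
\[
K:=\Bigl\{f_\mu:=\int_{\mathcal V}\sigma_s(\wb^\top\xb+b)\,d\mu(\wb,b):\ \|\mu\|\le1\Bigr\},\qquad \mathcal V=\sbb^{d-1}\times[-1,1],
\]
and then obtain \eqref{eq:interform} by rescaling. Throughout I use two facts. First, $\mathcal V$ is compact. Second, on $\Omega\times\mathcal V$ we have $|\sigma_s(\wb^\top\xb+b)|\le 2^s$.

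\textbf{Step 1: $\mathrm{conv}(\pm\pbb_s)\subseteq K$ and $K$ is closed.} A finite combination $\sum_j a_jh_j$ with $h_j\in\pbb_s$ and $\sum_j|a_j|\le1$ is $f_\mu$ for the discrete measure $\mu=\sum_j a_j\delta_{(\wb_j,b_j)}$, and $\|\mu\|\le1$. So it suffices to show that $K$ is closed in $L^2$.

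Take $f_{\mu_n}\to f$ in $L^2$ with $\|\mu_n\|\le1$. By Banach--Alaoglu on $C(\mathcal V)^*$, which is metrizable on bounded sets since $C(\mathcal V)$ is separable, some subsequence satisfies $\mu_n\to\mu$ weak-$*$ with $\|\mu\|\le1$.

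Fix $g\in L^2(\Omega)$ and set $\Phi_g(\wb,b)=\int_\Omega g(\xb)\sigma_s(\wb^\top\xb+b)\,d\xb$. By Fubini, $\langle f_{\mu_n},g\rangle=\int_{\mathcal V}\Phi_g\,d\mu_n$. The function $\Phi_g$ is continuous on $\mathcal V$ by dominated convergence, with dominating function $2^s|g|$.
\begin{itemize}
\item For $s\ge1$ this holds because the integrand is continuous in $(\wb,b)$ for every $\xb$.
\item For $s=0$ the Heaviside integrand is discontinuous only on the hyperplane $\{\wb^\top\xb+b=0\}$, which has Lebesgue measure zero since $\wb\neq0$. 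So pointwise a.e. convergence still holds.
\end{itemize}
Hence $\langle f,g\rangle=\lim_n\int\Phi_g\,d\mu_n=\int\Phi_g\,d\mu=\langle f_\mu,g\rangle$ for all $g$, so $f=f_\mu\in K$. This is the step I expect to need the most care, precisely because of the $s=0$ discontinuity. Working with weak convergence in $L^2$, rather than pointwise or uniform convergence, is what makes the argument uniform in $s$.

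\textbf{Step 2: $K\subseteq\overline{\mathrm{conv}(\pm\pbb_s)}$.} Let $\|\mu\|\le1$ with $\mu\neq0$. Write $\mu=\|\mu\|\,\epsilon\,\pi$, where $\pi=|\mu|/\|\mu\|$ is a probability measure and $\epsilon=\pm1$ is the Hahn sign.

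Draw $\theta_1,\dots,\theta_N$ i.i.d. from $\pi$ and form
\[
f_N=\frac{\|\mu\|}{N}\sum_{i=1}^N\epsilon(\theta_i)\,\sigma_s(\theta_i\cdot(\xb,1)).
\]
Each $f_N$ lies in $\mathrm{conv}(\pm\pbb_s)$. By the Maurey variance computation and Fubini,
\[
\ebb\|f_N-f_\mu\|_{L^2}^2\le\frac{4^s|\Omega|}{N}.
\]
So some realization of $f_N$ converges to $f_\mu$ in $L^2$, which places $f_\mu$ in the closure. This sampling argument again avoids any continuity requirement, so it covers $s=0$ as well.

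\textbf{Step 3: the norm formula.} The first two steps give $\overline{\mathrm{conv}(\pm\pbb_s)}=K$. Since $K$ scales linearly, $f/c\in\overline{\mathrm{conv}(\pm\pbb_s)}$ holds iff $f=f_\mu$ for some $\mu$ with $\|\mu\|\le c$. Taking the infimum over $c$ in the definition of the Minkowski functional $\|\cdot\|_{\pbb_s}$ yields \eqref{eq:interform}. The compactness argument of Step 1 also shows that the infimum is attained.
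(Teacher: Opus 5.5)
Your proof is correct and complete. It also necessarily takes a different route from the paper, because the paper gives no proof of this lemma: it only imports it from \cite{Yang2024}. The standard argument behind such identities is a general lemma for norm-compact dictionaries. It says the closed symmetric convex hull equals the image of the unit ball of measures under $\mu\mapsto\int i\,d\mu$, and it applies once one checks that $(\wb,b)\mapsto\sigma_s(\wb^\top\xb+b)$ is continuous into the ambient space.

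Your Step~1 is essentially the compactness half of that general lemma, made concrete. It combines weak-$*$ compactness of the unit ball of $M(\mathcal V)$ with weak-$*$-to-weak continuity of $\mu\mapsto f_\mu$, which you verify by testing against $g\in L^2(\Omega)$ and using dominated convergence.

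Your Step~2 replaces the abstract density argument (Mazur/Hahn--Banach, or a fine partition of the compact parameter set) with Maurey sampling. This buys three things:
\begin{itemize}
\item a quantitative statement: for each $N$ there is an $N$-term element of $\mathrm{conv}(\pm\pbb_s)$ within $2^s|\Omega|^{1/2}N^{-1/2}$ of $f_\mu$ in $L^2(\Omega)$;
\item a clean treatment of $s=0$ through the a.e.\ continuity of the Heaviside integrand;
\item as you note, attainment of the infimum in \eqref{eq:interform}.
\end{itemize}

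What it costs is generality. The variance computation uses the Hilbert structure of $L^2(\Omega)$, whereas the compact-dictionary route works in any Banach space in which the parameter map is continuous, for instance $C(\Omega)$ when $s\ge1$. For $s=0$ that route fails in $L^\infty(\Omega)$, since $\pbb_0$ is not norm-compact there. So your remark that working in $L^2$ is what makes the argument uniform in $s$ is exactly right.
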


This integral representation space has been studied in many works \citep{Yang2024,chen2025dualityframeworkanalyzingrandom,bach2016breakingcursedimensionalityconvex}. On the compact set $\Omega=\bbb^d$ (similar results hold for compact sets satisfying $\mathrm{Span}\{\Omega\}=\rbb^d$), the variation space $\kscr(\pbb_s)$ and the Barron space $\bscr_s$ are essentially equivalent. \cite{siegel2022characterizationvariationspacescorresponding} proved this for the case $s=1$; we extend their result to $s\in\nbb_0$.
\begin{lemma}\label{lem:kpsequivbs}
    Let $s\in\nbb_0$. Then $\kscr(\pbb_s)=\bscr_s$ and
    \begin{align*}
        \|f\|_{\bscr_s}\lesssim_{d,s}\|f\|_{\pbb_s}\lesssim_{s}\|f\|_{\bscr_s}\,.
    \end{align*}
\end{lemma}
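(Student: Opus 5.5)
We prove the two inequalities separately, using the integral representation of $\kscr(\pbb_s)$ in the preceding lemma and homogeneity of $\sigma_s$.

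\textbf{Step 1: $\|f\|_{\bscr_s}\lesssim_{d,s}\|f\|_{\pbb_s}$.} Let $f=\int\sigma_s(\wb^\top\xb+b)\,d\mu(\wb,b)$ with $\mu$ a signed Radon measure on $\sbb^{d-1}\times[-1,1]$ and $\|\mu\|\le\|f\|_{\pbb_s}+\epsilon$. Write $\mu=h\,|\mu|$ with $|h|=1$ and normalize $\rho_0=|\mu|/\|\mu\|$. Let $\rho$ be the pushforward of $\rho_0$ under $(\wb,b)\mapsto(a,\wb,b)$ with $a=\|\mu\|\,h(\wb,b)$. Then $f=\ebb_\rho[a\sigma_s(\wb^\top\xb+b)]$, so $f=f_\rho$. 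Since $\|\wb\|_1\le\sqrt d\,\|\wb\|_2=\sqrt d$ and $|b|\le1$, we get
\[
\ebb_\rho\bigl[|a|(\|\wb\|_1+|b|)^s\bigr]\le(\sqrt d+1)^s\|\mu\|.
\]
Letting $\epsilon\to0$ gives the bound with constant $(\sqrt d+1)^s$, which depends on $d$ and $s$.

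\textbf{Step 2: $\|f\|_{\pbb_s}\lesssim_s\|f\|_{\bscr_s}$.} Take $\rho$ with $f=f_\rho$ and $\ebb_\rho[|a|(\|\wb\|_1+|b|)^s]\le\|f\|_{\bscr_s}+\epsilon$. Handle each atom $(a,\wb,b)$ as follows.

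If $\wb\ne0$, set $t=\|\wb\|_2$. Homogeneity gives $\sigma_s(\wb^\top\xb+b)=t^s\sigma_s(\bar\wb^\top\xb+b/t)$ with $\bar\wb\in\sbb^{d-1}$. There are two cases.
\begin{itemize}
\item If $|b|\le t$, the neuron is $t^s$ times an element of $\pbb_s$. The cost is $|a|t^s\le|a|(\|\wb\|_1+|b|)^s$, using $\|\wb\|_2\le\|\wb\|_1$.
\item If $|b|>t$, the neuron restricted to $\Omega=\bbb^d$ is either identically zero (when $b<-t$) or equal to the polynomial $(\wb^\top\xb+b)^s$ (when $b>t$, since the argument stays positive on the ball). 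Expanding, $(\wb^\top\xb+b)^s=\sum_k\binom sk b^{s-k}(\wb^\top\xb)^k$. Each ridge monomial $(\bar\wb^\top\xb)^k$ with $k\le s$ has bounded $\pbb_s$-norm $C_s$, a constant depending only on $s$. To obtain it, write $r^k$ on $[-1,1]$ as a finite combination of $\sigma_s(r+b_j)$ and $\sigma_s(-r+b_j)$ with fixed $b_j\in[-1,1]$ (the fact quoted from \cite{siegel2022characterizationvariationspacescorresponding}). The coefficients of this combination are independent of $\bar\wb$. Hence
\[
\|(\wb^\top\xb+b)^s\|_{\pbb_s}\le C_s\sum_k\binom sk|b|^{s-k}t^k\le C_s(|b|+\|\wb\|_1)^s.
\]
\end{itemize}
If $\wb=0$, the neuron is the constant $\sigma_s(b)$. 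It is bounded by $C_s|b|^s$ in $\pbb_s$-norm via the $k=0$ case.

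Every atom therefore lies in $C_s|a|(\|\wb\|_1+|b|)^s\cdot\overline{\mathrm{conv}(\pm\pbb_s)}$. Integrating against $\rho$, and using that a closed convex set is preserved under averaging (Bochner integral in $L^2$), yields $\|f\|_{\pbb_s}\le C_s(\|f\|_{\bscr_s}+\epsilon)$.

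\textbf{Expected obstacle.} The delicate point is the measurability and closure argument in this last averaging step. Concretely, one must define a measurable map from atoms to signed measures on $\sbb^{d-1}\times[-1,1]$: the rescaled point mass in the first case, and the fixed polynomial-representing measure rotated by $\bar\wb$ in the second. Composing with $\rho$ then gives a single representing measure $\mu$ with $\|\mu\|\le C_s\,\ebb_\rho[|a|(\|\wb\|_1+|b|)^s]$, to which the preceding lemma applies directly. Combining Steps 1 and 2 gives $\kscr(\pbb_s)=\bscr_s$ with the stated norm equivalence.
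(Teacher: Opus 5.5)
Your proof is correct. The case analysis is the same as the paper's: rescale by homogeneity; $|b|\le\|\wb\|_2$ gives an atom of $\pbb_s$; $b<-\|\wb\|_2$ gives zero; $b>\|\wb\|_2$ gives a polynomial that you expand binomially into ridge monomials; $\wb=0$ gives a constant. The route differs from the paper's in two ways.

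\textbf{Packaging.} The paper introduces the auxiliary dictionary $\bbb$ of normalized neurons $(\|\wb\|_1+|b|)^{-s}\sigma_s(\wb^\top\xb+b)$. It identifies $\kscr(\bbb)=\bscr_s$ isometrically by a Radon--Nikodym change of variables, which it only sketches. Both inequalities then reduce to inclusions of dictionaries, $\pbb_s\subset(\sqrt d+1)^s\,\overline{\mathrm{conv}(\pm\bbb)}$ and $\bbb\subset C\,\overline{\mathrm{conv}(\pm\pbb_s)}$. These pass to closed symmetric convex hulls and Minkowski functionals with no measurability issues.

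You instead work directly with representing measures. For the lower bound you push $|\mu|$ forward; for the upper bound you average atoms in $L^2$. The averaging step you flag is routine. The map $(a,\wb,b)\mapsto a\sigma_s(\wb^\top\cdot+b)$ is continuous into the separable space $L^2(\Omega)$, with norm at most a multiple of $|a|(\|\wb\|_1+|b|)^s$. Reweight $\rho$ by $c=C_s|a|(\|\wb\|_1+|b|)^s$. Then Hahn--Banach shows the average of the resulting $\overline{\mathrm{conv}(\pm\pbb_s)}$-valued function stays in that closed convex set. So your version is more self-contained and avoids the intermediate isometry, at the cost of this Bochner argument.

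\textbf{Constant in the upper bound.} The paper bounds the ridge monomials $(\wb^\top\xb)^i$ by citing Bach with a constant $C(d,s)$. As written, its proof therefore only gives $\|f\|_{\pbb_s}\lesssim_{d,s}\|f\|_{\bscr_s}$, not the $\lesssim_s$ claimed in the statement. Your one-dimensional representation with direction-independent coefficients does give a constant depending only on $s$. It is worth recording why. For $b\in[-1,1]$, $(r+b)^s=\sigma_s(r+b)+(-1)^s\sigma_s(-r-b)$, and both terms are elements of $\pbb_s$ after substituting $r=\bar\wb^\top\xb$. Moreover, $\{(r+b_j)^s\}_{j=0}^s$ with distinct $b_j\in[-1,1]$ spans the polynomials of degree at most $s$. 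Hence $\|(\bar\wb^\top\xb)^k\|_{\pbb_s}\le C_s$ uniformly in $\bar\wb$ and $d$. In this respect your argument supports the lemma as stated better than the paper's proof does.
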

\begin{proof}
    We prove only the case $s\ge1$; the proof for $s=0$ is entirely similar. Consider the dictionary
    \begin{align*}
        \bbb=\left\{(\|\wb \|_1+|b|)^{-s}\sigma_s\left(\wb^\top\xb+b\right):(\wb,b)\in\rbb^{d+1}\setminus\{0\}\right\}\subset L^2\,.
    \end{align*}
    The dictionary $\bbb$ is uniformly bounded and compact in $L^2$. For any $f\in\kscr(\bbb)$,
    \begin{align*}
        f(x)=\int_{\bbb}(\|\wb \|_1 +|b|)^{-s}\sigma_s\left(\wb^\top\xb+b\right)d\mu\,,
    \end{align*}
    where $\mu$ is a signed Radon measure on $\bbb$. Perform a Jordan decomposition $\mu=\mu^+-\mu^-$ and let $\nu=\mu^++\mu^-$. Let $\varphi(\wb,b)=\frac{d\mu}{d\nu}(\wb,b)$ be the Radon--Nikodym derivative of $\mu$ with respect to $\nu$. Define $a=a(\wb,b)=(\|\wb \|_1+|b|)^{-s}\varphi(\wb,b)\|\nu\|$ and make the change of variable $\rho=\frac{\nu}{\|\nu\|}$ to obtain
    \begin{align*}
        f(x)=\ebb_{(a,\wb ,b)\sim\rho}[a\sigma_s\left(\wb^\top\xb+b\right)],\kongge x\in\Omega
    \end{align*}
    and
    \begin{align*}
        \|\mu\|=\ebb_{(a,\wb ,b)\sim\rho}[|a|(\|\wb \|_1 +|b|)^s]\,.
    \end{align*}
    Conversely, for any $f\in\bscr_s$, one can also find a measure $\mu$ on $\bbb$ such that $f=\int_{\bbb}i_{\bbb\to L^2}d\mu$ and the above equation holds. Thus, $\kscr(\bbb)=\bscr_s$ and $\|f\|_{\kscr(\bbb)}=\|f\|_{\bscr_s}$.

    We prove the lower bound first. It is enough to show
    \[
        \pbb_s\subset C\overline{\mathrm{conv}(\pm\bbb)}\,.
    \]
    Indeed, for any $g(x)=\sigma_s\left(\wb^\top\xb+b\right)\in\pbb_s$, where $\wb \in\sbb^{d-1}$ and $b\in[-1,1]$, since
    \begin{align*}
        (\|\wb \|_1+|b|)^s\le(\sqrt d+1)^s\lesssim_s d^{s/2}\,,
    \end{align*}
    and $(\|\wb \|_1+|b|)^{-s}\sigma_s\left(\wb^\top\xb+b\right)\in\bbb$, it follows that $g\in Cd^{\frac{s}{2}}\overline{\mathrm{conv}(\pm\bbb)}$. Consequently, $\overline{\mathrm{conv}(\pm \pbb_s)}\subset Cd^{\frac{s}{2}}\overline{\mathrm{conv}(\pm\bbb)}$. For any $c>0$ satisfying $\frac{f}{c}\in\overline{\mathrm{conv}(\pm\pbb_s)}$, we have $f\in cCd^{\frac{s}{2}}\overline{\mathrm{conv}(\pm\bbb)}$, and thus
    \begin{align*}
        \|f\|_{\bscr_s }=\|f\|_{\bbb}\le cCd^{\frac{s}{2}}\,.
    \end{align*}
    By the arbitrariness of $c$, the first inequality follows.

    For the upper bound, a similar argument reduces the proof to showing $\bbb\subset C\pscc$. For any $g=(\|\wb \|_1+|b|)^{-s}\sigma_s\left(\wb^\top\xb+b\right)\in\bbb$, where $\wb \in\rbb^d$ and $b\in\rbb$, suppose first that $\|\wb \|_1\neq0$. By the positive homogeneity of degree $s$ of $\sigma_s$, we may normalize $\wb$ so that $\wb\in\sbb^{d-1}$. Since $\|\wb\|_1\ge1$, the coefficient $(\|\wb\|_1+|b|)^{-s}$ is no larger than $(1+|b|)^{-s}$. As $\pscc$ is symmetric and convex, it suffices to prove
    \begin{align*}
        \tilde{g}(x)=(1+|b|)^{-s}\sigma_s\left(\wb^\top\xb+b\right)\in C\pscc\,.
    \end{align*}
    If $b\in[-1,1]$, then taking $C=1$ suffices. If $b<-1$, then $\tilde{g}\equiv0\in\pscc$. If $b>1$, then
    \begin{align*}
        \tilde{g}(x)=(1+|b|)^{-s}(\wb^\top\xb+b)^s=(1+|b|)^{-s}\sum_{i=0}^s{\binom{s}{i}b^{s-i}}(\wb^\top\xb)^i\,.
    \end{align*}
    By \cite[discussion after Proposition~6]{bach2016breakingcursedimensionalityconvex}, there exists a constant $C=C(d,s)$ such that for $i=0,\cdots,s$, $(\wb^\top\xb)^i\in C\pscc$. Moreover,
    \begin{align*}
        (1+|b|)^{-s}\sum_{i=0}^s\binom{s}{i}|b|^{s-i}=1\,,
    \end{align*}
    hence $\tilde{g}\in C(d,s)\pscc$. If $\|\wb \|_1=0$, then $g(x)$ is identically $1$ or $0$ on $\Omega$, and thus also belongs to $C\pscc$. This completes the proof of Lemma~\ref{lem:kpsequivbs}.
\end{proof}

\par\noindent{\bf Proof of Theorem~\ref{thm:barron-approx}.}
This is a direct corollary of \citep[Theorem 3]{siegel2025optimalapproximationzonoidsuniform} and Lemma~\ref{lem:kpsequivbs}. This completes the proof of Theorem~\ref{thm:barron-approx}.
\hfill\BlackBox\\[2mm]

\section{Proof of Theorem \ref{thm:var-approx}}\label{appendix:proof-var-approx}
This appendix follows the approach of \citep{bach2016breakingcursedimensionalityconvex,Yang2024} to prove Theorem \ref{thm:var-approx}. We first transform the problem into an approximation problem on the sphere. To use spherical harmonic analysis, we take the parameter space in this section to be $\sbb^d$, instead of $\sbb^{d-1}\times I$, and write the parameter as $\thetab=(\wb^\top,b)\in\sbb^d$. We begin with a brief review of harmonic analysis on the sphere \citep{2013Approximation}. For $i\in\nbb_0$, the space of spherical harmonics of degree $i$, $\ybb_{i}$, is the linear space of restrictions to $\sbb^d$ of real homogeneous harmonic polynomials of degree $i$ on $\rbb^{d+1}$. The dimension of $\ybb_{i}$ is
\begin{align*}
    N(d,i):=\begin{cases}
    \frac{2i+d-1}{i}\binom{i+d-2}{d-1} \,,&\ i\neq 0 \\
    1 \,,&  \ i=0\,.
    \end{cases}
\end{align*}
Spherical harmonics are eigenfunctions of the Laplace-Beltrami operator $\Delta$:
\begin{align}\label{eq:eigenvalue}
\Delta Y_{i}=-i(i+d-1)Y_{i},\quad Y_{i}\in\ybb_{i}\,,
\end{align}
where, in coordinates $\ub=(u_{1},\ldots,u_{d+1})\in\sbb^d$,
\begin{align*}
\Delta=\sum_{k=1}^{d+1}\frac{\partial^{2}}{\partial u_{k}^{2}}-\sum_{k=1}^{d+1}\sum_{\ell=1}^{d+1}u_{k}u_{\ell}\frac{\partial^{2}}{\partial u_{k}\partial u_{\ell}}-d\sum_{k=1}^{d+1}u_{k}\frac{\partial}{\partial u_{k}}\,.
\end{align*}
Define the inner product $\langle g_1,g_2\rangle=\int_{\sbb^d}g_1(\ub)g_2(\ub)d\tau_d(\ub)$\label{sym:first:tau_d} on $\sbb^d$; then spherical harmonics of different degrees are orthogonal with respect to this inner product.

Any $g\in L^2(\sbb^d)$ has a unique decomposition in $L^2(\sbb^d)$:
\begin{align*}
    g(\ub)=\int_{\sbb^d}g(\vb)d\tau_d(\vb)+\sum_{i=1}^\infty N(d,i)\int_{\sbb^d}g(\vb)P_i(\ub^\top\vb)d\tau_d(\vb)=:\sum_{i=0}^\infty g_i(\ub)\,,
\end{align*}
where $P_{i}$ are the Gegenbauer polynomials:
\begin{align*}
P_{i}(t):=\frac{(-1)^{i}}{2^{i}}\frac{\Gamma(d/2)}{\Gamma(i+d/2)}(1-t^{2})^{(2-d)/2}\left(\frac{d}{dt}\right)^{i}(1-t^{2})^{i+(d-2)/2},\quad t\in[-1,1]\,,
\end{align*}
which satisfy $\|P_i\|_{L^\infty[-1,1]}\le1$ and $1-P_i(\cos\beta)\asymp\min\{1,i^2\beta^2\}$. The above decomposition can be viewed as an extension of Fourier series to the sphere. For $g\in L^2(\sbb^d)$, denote by $g_i$ its projection onto the spherical harmonic space of degree $i$. We also have the Parseval identity on the sphere:
\begin{align}\label{eq:parseval}
    \|g\|_{L^2(\sbb^d)}^2=\sum_{i=0}^\infty \|g_i\|_{L^2(\sbb^d)}^2\,.
\end{align}

A useful observation is that if $g\in\fcalwd$ and
\[
    g(\ub)=\int_{\sbb^d}a(\thetab)\sigma_s(\thetab^\top \ub)d\tau_d(\thetab)\,,
\]
then the Funk--Hecke formula \citep{bach2016breakingcursedimensionalityconvex} gives the Fourier series decomposition
\begin{align}\label{eq:fourier_series}
    g(\ub)=\sum_{i=0}^\infty \lambda_ia_i(\ub)\,,
\end{align}
where $a_i$ is the projection of $a$ onto $\ybb_i$, and $\lambda_i$ is the Gegenbauer coefficient of the kernel $\sigma_s$:
\begin{align*}
\lambda_i:=\frac{\omega_{d-1}}{\omega_{d}}\int_{-1}^{1}\sigma_s(t)P_{i}(t)(1-t^{2})^{\frac{d-2}{2}}dt\,.
\end{align*}
\label{sym:first:omega_d}
\cite{bach2016breakingcursedimensionalityconvex} provided the following calculations and estimates for these coefficients:
\begin{align*}
    \lambda_0=\frac{\omega_{d-1}}{\omega_d}\frac{\Gamma\left(\frac{d}{2}\right)\Gamma\left(\frac{s+1}{2}\right)}{2\Gamma\left(\frac{s+d+1}{2}\right)}\,,
\end{align*}
and, for $i\ge s+1$,
\begin{align*}
    \lambda_i=\begin{cases}
        0\,,&i\equiv s\pmod 2,i\ge s+1\\
        \frac{\Gamma\left(\frac{d}{2}\right)\Gamma(i-s)}{\Gamma\left(\frac{i-s+1}{2}\right)\Gamma\left(\frac{i+d+s+1}{2}\right)}\,,&i\equiv s+1\pmod 2,i\ge s+1
    \end{cases}\,.
\end{align*}
The remaining finitely many coefficients with $1\le i\le s$ are fixed constants depending only on $d$ and $s$ and do not affect the asymptotic estimates below.
By Stirling's formula, when $\lambda_i\neq0$, we have the asymptotic estimate
\begin{align}\label{eq:lambdaasymp}
    \lambda_i\asymp i^{-\frac{d+2s+1}{2}}\,.
\end{align}
In particular, when $p=2$, by the Parseval identity \eqref{eq:parseval}, we can express the $\widetilde{\fcal}_{2,\tau_d,s}$ norm of $g$ as
\begin{align*}
    \|g\|_{\widetilde{\fcal}_{2,\tau_d,s}}^2=\sum_{\lambda_i\neq0} \frac{1}{\lambda_i^2}\|g_i\|_{L^2(\sbb^d)}^2\,.
\end{align*}
It should be noted that for a general measure $\nu\neq\tau_d$, functions in $\fcalw(\sbb^d)$ generally do not admit such a simple Fourier series decomposition and norm expression; therefore, we choose $\nu=\tau_d$ in our analysis. For approximation theorems under more general probability measures, more refined analytical tools may be required.

Next, we introduce the smoothness of functions on the sphere. For $0\leq\beta\leq\pi$, the translation operator $T_{\beta}$ (spherical mean operator) is defined as
\begin{align*}
T_{\beta}g(\ub):=\int_{\sbb^{\bot}_{\ub}}g(\ub\cos\beta+\vb\sin\beta)d\tau_{d-1}(\vb),\quad \ub\in\sbb^d,g\in L^{1}(\sbb^d)\,,
\end{align*}
where $\sbb^{\bot}_{\ub}:=\{\vb\in\sbb^d:\ub^\top\vb=0\}$ is the equator of $\sbb^d$ relative to $\ub$ (isomorphic to $\sbb^{d-1}$). For $\alpha>0$ and $0<\beta<\pi$, we define the $\alpha$-th order difference operator
\begin{align*}
\Delta^{\alpha}_{\beta}:=(I-T_{\beta})^{\alpha/2}=\sum_{j=0}^{\infty}(-1)^{j}\binom{\frac{\alpha}{2}}{j}T_{\beta}^{j}\,,
\end{align*}
where $\binom{\alpha/2}{j}=\frac{(\alpha/2)(\alpha/2-1)\cdots(\alpha/2-j+1)}{j!}$, and it satisfies
\begin{align}\label{eq:moduliofsmoothness}
    (\Delta^{\alpha}_{\beta}g)_i=(1-P_{i}(\cos\beta))^{\frac{\alpha}{2}}g_i,\ i\in\nbb_0\,.
\end{align}
For $g\in L^{p}(\sbb^d)$ with $1\leq p\le2$, its $\alpha$-th order modulus of smoothness is defined as
\begin{align*}
\omega_{\alpha}(g,t)_{p}:=\sup_{0<\beta\leq t}\|\Delta^{\alpha}_{\beta}g\|_{L^{p}(\sbb^d)},\quad 0<t<\pi\,.
\end{align*}

With the above preparations, we now prove the intermediate approximation results stated in the main text.
\par\noindent{\bf Proof of Lemma~\ref{lem:approxonsphere}.}
    By homogeneity, assume $\|\tilde f\|_{\fcalwd(\sbb^d)}\le1$. Write
    \[
        \tilde{f}(\ub) = \int_{\sbb^d} a(\thetab) \sigma_s(\thetab^\top \ub) \, d\tau_d(\thetab)\,,
        \qquad \|a\|_{L^p(\sbb^d)} \leq 1\,.
    \]
    We first estimate the modulus of smoothness of $\tilde{f}$. From the previous discussion, apart from finitely many low-degree modes, the nonzero multipliers $\lambda_i$ satisfy $i \ge s+1$, $i+1 \equiv s \pmod{2}$, and $\lambda_i \asymp i^{-\frac{d+2s+1}{2}}$. Denote this high-frequency index set by $\mathcal{I}_s$. By the parity assumption in Lemma~\ref{lem:approxonsphere}, $\tilde f_i=0$ whenever $i\equiv s\pmod 2$, so no high-frequency inadmissible modes occur. The finitely many low-degree modes contribute only a constant multiple of $\beta^{2\alpha}$ to the estimate below, so they can be absorbed into the same bound. By the Parseval identity \eqref{eq:parseval} and \eqref{eq:moduliofsmoothness}, we have
    \[
        \|\Delta_\beta^\alpha \tilde{f}\|_{L^2(\sbb^d)}^2
        \lesssim \beta^{2\alpha}+\sum_{i \in \mathcal{I}_s} \lambda_i^2 (1 - P_i(\cos \beta))^\alpha \|a_i\|_{L^2(\sbb^d)}^2\,.
    \]
    According to the spherical harmonic projection theorem \cite[Theorem 9.1.1]{2013Approximation}, applied with ambient dimension $d+1$ and $\kappa=0$,
    \[
        \|a_i\|_{L^2(\sbb^d)} \lesssim i^{\zeta(p)}\,,
    \]
    where
    \begin{align*}
        \zeta(p)=\begin{cases}
            d\left(\frac1p-\frac12\right)-\frac12\,,&1\le p\le\frac{2d+2}{d+3}\,,\\
            \frac{d-1}{2}\left(\frac1p-\frac12\right)\,,&\frac{2d+2}{d+3}<p\le2\,.
        \end{cases}
    \end{align*}
    Set $\gamma:=2\zeta(p)-(d+2s+1)$. Substituting the asymptotic estimate for $\lambda_i$ from \eqref{eq:lambdaasymp} and using $(1 - P_i(\cos\beta))^\alpha \lesssim \min(1, i^{2\alpha} \beta^{2\alpha})$, we obtain
    \begin{align*}
        \|\Delta_\beta^\alpha \tilde{f}\|_{L^2}^2
        \lesssim& \beta^{2\alpha}+\sum_{i \in \mathcal{I}_s} i^{\gamma} \min(1, i^{2\alpha} \beta^{2\alpha})\\
        \lesssim& \sum_{k=1}^\infty k^{\gamma} \min(1, k^{2\alpha} \beta^{2\alpha})\,.
    \end{align*}
    Let $K_\beta = \left\lfloor \frac{1}{\beta} \right\rfloor$ and split the sum into two parts: $k \le K_\beta$ and $k > K_\beta$:
    \[
        S_1 = \beta^{2\alpha} \sum_{k=1}^{K_\beta} k^{\gamma + 2\alpha}, \quad S_2 = \sum_{k=K_\beta+1}^{\infty} k^{\gamma}\,.
    \]
    Note that the definition of $\zeta(p)$ gives $\gamma<-1$, so the series $S_2$ always converges. If $\gamma + 2\alpha > -1$, then $S_1 \asymp \beta^{2\alpha} K_\beta^{\gamma+2\alpha+1} \asymp \beta^{-\gamma-1}$, and $S_2 \asymp K_\beta^{\gamma+1} \asymp \beta^{-\gamma-1}$. If $\gamma + 2\alpha = -1$, then $S_1 \asymp \beta^{2\alpha} \log K_\beta \asymp \beta^{-\gamma-1} \log\frac{1}{\beta}$, and $S_2 \asymp \beta^{-\gamma-1}$. If $\gamma + 2\alpha < -1$, then $S_1 \asymp \beta^{2\alpha}$, and $S_2 \asymp \beta^{-\gamma-1}$. In this case, since $\gamma < -1$ and $2\alpha < -\gamma-1$, the term $\beta^{2\alpha}$ dominates. Combining the above, we get
    \[
        \|\Delta_\beta^\alpha \tilde{f}\|_{L^2}^2 \lesssim 
        \begin{cases}
            \beta^{2\alpha}\,, &  \gamma + 2\alpha < -1\,,\\
            \beta^{-\gamma-1} \log\frac{1}{\beta}\,, &  \gamma + 2\alpha = -1\,,\\
            \beta^{-\gamma-1}\,, &  \gamma + 2\alpha > -1\,.
        \end{cases}
    \]
    For $0 < t \le \pi$, since $\omega_\alpha(\tilde{f}, t)_2 = \sup_{0<\beta\le t} \|\Delta_\beta^\alpha \tilde{f}\|_{L^2}$ and the estimate is monotonic, we have
    \begin{align}\label{eq:modulibound}
        \omega_\alpha(\tilde{f}, t)_2 \lesssim 
        \begin{cases}
            t^{\alpha}\,, &  \alpha < \gamma^*\\
            t^{\gamma^*} \sqrt{\log\frac{1}{t}}\,, &  \alpha = \gamma^*\\
            t^{\gamma^*}\,, &  \alpha > \gamma^*
        \end{cases}\,,
    \end{align}
    where $\gamma^*=-\frac{\gamma+1}{2}$ is the critical value.

    Consider
    \[
        g_{j}(\ub):=\int_{\sbb^d}\tilde{f}(\vb)L_{j}(\ub^{\top}\vb)d\tau_{d}(\vb)\,,
    \]
    where
    \[
    L_{j}(t):=\sum_{i=0}^{\infty}\eta\left(\frac{i}{j}\right)N(d,i)P_{i}(t),\quad j\in\nbb_0\,,
    \]
    Here $\eta$ is a $C^{\infty}$ function on $[0,\infty)$ such that $\eta(t)=1$ for $0\leq t\leq 1$ and $\eta(t)=0$ for $t\geq 2$. We claim that $g_j\in\widetilde{\fcal}_{2,\tau_d,s}(\sbb^d)$ and that, for sufficiently large $j$, $g_j$ approximates $\tilde{f}$. Since $\eta$ is supported on $[0,2]$, the sum can be terminated at $i=2j-1$, so $g_{j}$ is a polynomial of degree at most $2j-1$. Moreover, $(g_j)_i=\eta(i/j)\tilde f_i$. The parity assumption gives $(g_j)_i=0$ whenever $i\equiv s\pmod 2$; in particular, all high-frequency inadmissible modes vanish. Hence $g_j$ has only spherical-harmonic components that are admissible for the $\widetilde{\fcal}_{2,\tau_d,s}$ norm. Finally, \cite[Theorem 10.3.2]{2013Approximation} shows that
    \[
    \omega_{\alpha}(\tilde{f},j^{-1})_{2}\asymp\|\tilde{f}-g_{j}\|_{L^{2}(\sbb^d)}+j^{-\alpha}\|(-\Delta)^{\alpha/2}g_{j}\|_{L^{2}(\sbb^d)}\,.
    \]
    Substituting the estimate \eqref{eq:modulibound} into the above, we obtain
    \[
    \|(-\Delta)^{\alpha/2}g_{j}\|_{L^{2}(\sbb^d)}\lesssim\begin{cases}
        1\,, & \alpha<\gamma^*\\
        \sqrt{\log j}\,, & \alpha=\gamma^*\\
        j^{\alpha-\gamma^*}\,, &\alpha>\gamma^*
    \end{cases}\,.
    \]
    Set $\alpha^*=s+\frac{d+1}{2}$. By \eqref{eq:eigenvalue}, $((-\Delta)^{\alpha^*/2}g_{j})_i=(i(i+d-1))^{\alpha^*/2}(g_{j})_i$. We can estimate the norm $\|g_j\|_{\widetilde{\fcal}_{2,\tau_d,s}}$ as follows:
    \begin{align*}
        \|g_j\|_{\widetilde{\fcal}_{2,\tau_d,s}}^2 =& \sum_{\lambda_i\neq0} \frac{1}{\lambda_i^2}\|(g_{j})_i\|_{L^2(\sbb^d)}^2 \\
        \lesssim &\lambda_0^{-2}\|(g_j)_0\|_{L^2(\sbb^d)}^2+\sum_{i=1}^{2j-1}i^{d+2s+1}i^{-2\alpha^*}\|((-\Delta)^{\alpha^*/2}g_j)_i\|_{L^2(\sbb^d)}^2\\
        \lesssim &1+\sum_{i=1}^{2j-1}i^{d+2s+1-2\alpha^*}\|((-\Delta)^{\alpha^*/2}g_j)_i\|_{L^2(\sbb^d)}^2\\
        \lesssim &1 + \|(-\Delta)^{\alpha^*/2}g_j\|_{L^2(\sbb^d)}^2\,,
    \end{align*}
    where the last inequality follows from the choice of $\alpha^*$ and the Parseval identity \eqref{eq:parseval}.
    Since $\gamma^*=s+\frac d2-\zeta(p)$, we have
    \[
        \alpha^*-\gamma^*=\zeta(p)+\frac12>0\,.
    \]
    Taking $\alpha=\alpha^*$ in the modulus estimate gives
    \[
        \|\tilde f-g_j\|_{L^2(\sbb^d)}
        \lesssim \omega_{\alpha^*}(\tilde f,j^{-1})_2
        \lesssim j^{-\gamma^*}\,,
    \]
    and the preceding norm estimate gives
    \[
        \|g_j\|_{\widetilde{\fcal}_{2,\tau_d,s}}
        \lesssim j^{\alpha^*-\gamma^*}
        =j^{\zeta(p)+\frac12}\,.
    \]
    For $R\ge1$, choose $j\asymp R^{1/(\zeta(p)+1/2)}$. Then $\|g_j\|_{\widetilde{\fcal}_{2,\tau_d,s}}\lesssim R$, and
    \[
        \|\tilde f-g_j\|_{L^2(\sbb^d)}
        \lesssim R^{-\frac{\gamma^*}{\zeta(p)+1/2}}
        =R^{-\frac{s+\frac d2-\zeta(p)}{\zeta(p)+\frac12}}\,.
    \]
    Substituting the two cases in the definition of $\zeta(p)$ gives the two exponents stated in Lemma~\ref{lem:approxonsphere}. If $0<R<1$, the same bound follows by taking $g=0$ and using $\|\tilde f\|_{L^2(\sbb^d)}\lesssim1$, which follows from $\|a\|_{L^1(\sbb^d)}\le\|a\|_{L^p(\sbb^d)}\le1$ and the boundedness of $\sigma_s$ on $\sbb^d$. Restoring the factor $\|\tilde f\|_{\fcalwd(\sbb^d)}$ gives the stated estimate.
    This completes the proof of Lemma~\ref{lem:approxonsphere}.
\hfill\BlackBox\\[2mm]

Lemma~\ref{lem:approxonsphere} approximates functions in $\fcalwd(\sbb^d)$ by functions in $\widetilde{\fcal}_{2,\tau_d,s}(\sbb^d)$. The key point is to use the boundedness theorem for spherical-harmonic projection operators to convert the $L^p$ integrability of the weight $a$ into spectral decay, thereby estimating the modulus of smoothness. We next prove the lifting step from the sphere to $\bbb^d$.

\par\noindent{\bf Proof of Corollary~\ref{cor:fromspheretoball}.}
    Denote
    \[
        \sbb_{\mathrm{cap}}^d=\left\{\ub=(\ub',u_{d+1})\in\sbb^d:u_{d+1}\ge\frac{\sqrt2}{2}\right\}\,.
    \]
    For $f\in\fcalwd$, first lift it to this spherical cap by
    \[
        \widetilde f(\ub):=u_{d+1}^s f\left(\frac{\ub'}{u_{d+1}}\right)\,,
        \qquad \ub=(\ub',u_{d+1})\in\sbb_{\mathrm{cap}}^d\,.
    \]
    By the Whitney extension theorem \citep{Fefferman2006Whitney}, this cap-defined function can be extended to the whole sphere. We keep the notation $\widetilde f$ for the extension, choose it to agree with the above lift on $\sbb_{\mathrm{cap}}^d$, and arrange, by multiplying by a smooth cutoff and then taking an odd extension when $s$ is even and an even extension when $s$ is odd, that $\widetilde f(-\ub)=(-1)^{s+1}\widetilde f(\ub)$. The extension may be chosen with
    \[
        \|\widetilde f\|_{\fcalwd(\sbb^d)}\lesssim_{d,s}\|f\|_{\fcalwd}\,.
    \]

    For $\xb\in\bbb^d$, put
    \[
        \ub_{\xb}:=\frac{1}{\sqrt{\|\xb\|_2^2+1}}\begin{pmatrix}\xb\\1\end{pmatrix}\in\sbb_{\mathrm{cap}}^d\,.
    \]
    Conversely, for $g\in\widetilde{\fcal}_{2,\tau_d,s}(\sbb^d)$, define the reverse mapping
    \begin{align*}
        g^*(\xb) := \left(\|\xb\|_2^2+1\right)^{\frac{s}{2}}g(\ub_{\xb}), \quad \xb\in\bbb^d\,.
    \end{align*}
    One can verify that $g^*\in\widetilde{\fcal}_{2,\tau_d,s}(\Omega)$. Since $\ub_{\xb}$ ranges over a spherical cap on which the surface measure is equivalent to the pushforward of Lebesgue measure on $\bbb^d$, and since $(1+\|\xb\|_2^2)^{s/2}$ is bounded on $\bbb^d$, we have
    \begin{align*}
        \|f-g^*\|_{L^2(\Omega)}\lesssim_{d,s}\|\tilde{f}-g\|_{L^2(\sbb^d)}\,,
    \end{align*}
    which, combined with Lemma~\ref{lem:approxonsphere}, yields the conclusion. This completes the proof of Corollary~\ref{cor:fromspheretoball}.
\hfill\BlackBox\\[2mm]
\par\noindent{\bf Proof of Theorem~\ref{thm:var-approx}.}
    By homogeneity, assume $\|f\|_{\fcalwd}=1$. Put
    \[
        A=2s+d+1,\qquad q=\frac{A}{2d}\,.
    \]
    Let $g\in\widetilde{\fcal}_{2,\tau_d,s}$ satisfy $\|g\|_{\widetilde{\fcal}_{2,\tau_d,s}}\le R$. Choosing a representation arbitrarily close to the infimum in the $\widetilde{\fcal}_{2,\tau_d,s}$ norm, we may write it with a coefficient $a\in L^2(\tau_d)$ satisfying $\|a\|_{L^2(\tau_d)}\le R$. The normalization of $\tau_d$ gives $\|a\|_{L^1(\tau_d)}\le\|a\|_{L^2(\tau_d)}$. After the harmless normalization of the parameters by the positive homogeneity of $\sigma_s$, this gives an $L^1$ integral representation in the variation space generated by ReLU$^s$ atoms. Hence, by Lemma~\ref{lem:kpsequivbs}, $\|g\|_{\bscr_s}\lesssim R$. Applying Lemma~\ref{thm:barron-approx} and using $\|h\|_{L^2(\Omega)}\le|\Omega|^{1/2}\|h\|_{L^\infty(\Omega)}$, we can find $f_m\in\Sigma_m^s$ such that
    \begin{align}\label{eq:F2-network-approx}
        \|g-f_m\|_{L^2(\Omega)}\lesssim Rm^{-q}\,.
    \end{align}
    Combining \eqref{eq:F2-network-approx} with Corollary~\ref{cor:fromspheretoball}, it remains to optimize over $R$.

    If $1\le p\le p^*$, then
    \[
        \|f-f_m\|_{L^2(\Omega)}
        \lesssim R^{-a}+Rm^{-q},\qquad
        a=\frac{p(2s+2d+1)-2d}{d(2-p)}\,.
    \]
    Taking $R\asymp m^{q/(a+1)}$, that is,
    \[
        R\asymp m^{\frac{2-p}{2p}}\,,
    \]
    gives
    \[
        \|f-f_m\|_{L^2(\Omega)}
        \lesssim m^{-\frac{qa}{a+1}}
        =m^{-\frac{p(2s+2d+1)-2d}{2dp}}\,.
    \]

    If $p^*<p<2$, then
    \[
        \|f-f_m\|_{L^2(\Omega)}
        \lesssim R^{-a}+Rm^{-q},\qquad
        a=\frac{p(4s+3d-1)-2d+2}{2d-2-pd+3p}\,.
    \]
    Taking $R\asymp m^{q/(a+1)}$, that is,
    \[
        R\asymp m^{\frac{2d-2-pd+3p}{4dp}}\,,
    \]
    gives
    \[
        \|f-f_m\|_{L^2(\Omega)}
        \lesssim m^{-\frac{qa}{a+1}}
        =m^{-\frac{p(4s+3d-1)-2d+2}{4dp}}\,.
    \]
    Restoring the factor $\|f\|_{\fcalwd}$ completes the proof of Theorem~\ref{thm:var-approx}.
\hfill\BlackBox\\[2mm]

\section{Proof of Theorem \ref{thm:sobolev-approx}}\label{appendix:proof_sobolev}
We prove Theorem \ref{thm:sobolev-approx} by combining an embedding into spectral Barron spaces with the corresponding shallow-network approximation result. Let $\tscr(\rbb^d)$\label{sym:first:tscr} and $\tscr'(\rbb^d)$\label{sym:first:tscr_prime} denote the Schwartz space and the space of tempered distributions on $\rbb^d$, respectively, with the Fourier transform understood in the distributional sense. For $r\ge0$, define the \emph{spectral Barron space of order $r$} as follows:

\begin{align*}
    \fscr_r=\left\{f(\xb)=\int_{\rbb^d}e^{i\xb\cdot\xi}\hat{F}(d\xi):\hat{F}\text{ is a finite complex measure},\|f\|_{\fscr_r}<\infty\right\}\,.
\end{align*}
\label{sym:first:fscr_s}
Here,
\begin{align*}
    \|f\|_{\fscr_r}=\inf_{f_e|_{\Omega}=f}\int_{\rbb^d}(1+\|\xi\|_{1})^r|\hat{F}_e(d\xi)|\,,
\end{align*}
\label{sym:first:fscr_norm}
where the infimum is taken over all $\tscr'(\rbb^d)$ extensions of $f$. We use the following special case of \cite[Theorem~3]{SiegelXu2022HighOrder}: for $s\in\nbb$ and $g\in\fscr_s$,
\begin{align}\label{eq:approximation2}
    \inf_{g_m\in\Sigma_m^s}\|g-g_m\|_{L^2(\Omega)}
    \lesssim_{s,d,\Omega}\|g\|_{\fscr_s}m^{-\frac{d+2s}{2(d+1)}}\,.
\end{align}
Indeed, in that theorem one takes the spectral Barron order and the ReLU order both equal to $s$, and the Sobolev error order equal to zero. The theorem is stated with the Euclidean Fourier weight on a cube; this is equivalent to the $\ell_1$ Fourier weight used above, up to constants depending only on $d$. Applying the theorem on a cube containing $\Omega$ and then restricting to $\Omega$ gives \eqref{eq:approximation2}. If the approximant has complex coefficients and $g$ is real-valued, taking its real part gives a real-valued ReLU$^s$ network with no larger $L^2$ error.
Furthermore, \cite{Liao_2025} and \cite{choulli2025functionalanalysispartialdifferential} established the embedding relationship between Sobolev spaces and spectral Barron spaces: for $r\in\nbb_0$, $1\le p\le2$, if $\alpha>r+\frac{d}{p}$, then
\begin{align*}
    W^{\alpha,p}\hookrightarrow\fscr_r\,.
\end{align*}
\label{sym:first:hookrightarrow}
For the critical case $p=1$ and $\alpha=r+d$, the above embedding also holds, i.e.,
\begin{align*}
    W^{r+d,1}\hookrightarrow\fscr_r\,.
\end{align*}
We use this embedding to prove the approximation theorem for Sobolev spaces, Theorem~\ref{thm:sobolev-approx}.
\par\noindent{\bf Proof of Theorem~\ref{thm:sobolev-approx}.}
    First consider the case $\alpha\ge s+d$. By the classical Sobolev extension theorem, $f$ can be extended to $\rbb^d$ with $\|f\|_{W^{\alpha,p}(\rbb^d)}\lesssim_\Omega \|f\|_{W^{\alpha,p}}$. Since $1\le p<2$, the embedding stated above, applied with $r=s$, gives $W^{\alpha,p}\hookrightarrow\fscr_s$; for $p=1$ and $\alpha=s+d$, this is exactly the critical embedding $W^{s+d,1}\hookrightarrow\fscr_s$. Hence $\|f\|_{\fscr_s}\lesssim_\Omega\|f\|_{W^{\alpha,p}}$. Applying \eqref{eq:approximation2} and using $\|h\|_{L^p(\Omega)}\le|\Omega|^{1/p-1/2}\|h\|_{L^2(\Omega)}$, we obtain
    \begin{align*}
        \inf_{f_m\in\Sigma_m^s}\|f-f_m\|_{L^p(\Omega)}
        \lesssim \|f\|_{W^{\alpha,p}}m^{-\frac{d+2s}{2(d+1)}}\,.
    \end{align*}

    It remains to prove the case $\alpha\in\nbb\cap(0,s+d)$. Assume $\Omega$ is contained in a ball of radius $R>0$, i.e., $\Omega\subset\bbb^d_R=\{\xb\in\rbb^d:\|\xb\|_2\le R\}$. Again by the Sobolev extension theorem and a smooth cutoff equal to one on $\Omega$, $f$ can be extended to a compactly supported function on $\rbb^d$ with $\|f\|_{W^{\alpha,p}(\rbb^d)}\lesssim_\Omega \|f\|_{W^{\alpha,p}}$.

    Let $\phi:\rbb^d\to[0,+\infty)$ be a $C^\infty$ radially symmetric function supported in the unit ball, satisfying $\int_{\rbb^d}\phi(\xb)d\xb=1$. For any $\varepsilon>0$, define $\phi_\varepsilon(\xb)=\varepsilon^{-d}\phi(\xb/\varepsilon)$. Then $\phi_\varepsilon$ is an approximate identity, and by the triangle inequality and the normalization of $\phi$,
    \begin{align*}
       \|\phi_\varepsilon*f\|_{W^{\alpha,p}(\rbb^d)}\le \|f\|_{W^{\alpha,p}(\rbb^d)}\,.
    \end{align*}
    By a standard density argument, it is enough to prove the estimate for $f\in C_c^\infty(\rbb^d)$; the constants below depend only on the extension domain and not on the particular smooth approximant.

    Now let $\varepsilon>0$ be a parameter to be determined. Define the approximant
    \begin{align}\label{eq:fvarepsilon}
        f_\varepsilon(\xb)=\sum_{t=1}^\alpha\binom{\alpha}{t}(-1)^{t-1}\int_{\rbb^d} \phi_\varepsilon(\yb)f(\xb-t\yb)d\yb\,.
    \end{align}
    Using the normalization of $\phi_\varepsilon$, we estimate $\|f-f_\varepsilon\|_{L^p(\rbb^d)}$:
    \begin{align*}
        \|f-f_\varepsilon\|_{L^p(\rbb^d)}&\le\left\|\int_{\rbb^d}\phi_\varepsilon(\yb)\sum_{t=0}^\alpha\binom{\alpha}{t}(-1)^tf(\xb-t\yb)d\yb\right\|_{L^p(\rbb^d)}\,.
    \end{align*}
    For a fixed $\yb\in\rbb^d$, we estimate
    \begin{align*}
        \left\|\left(\sum_{t=0}^\alpha\binom{\alpha}{t}(-1)^tf(\xb-t\yb)\right)\right\|_{L^p(\rbb^d)}&=\|\Delta_\yb^\alpha f\|_{L^p(\rbb^d)}\,,
    \end{align*}
    where $\Delta_\yb^\alpha$ is the $\alpha$-th order difference operator of $f$. By the fundamental theorem of calculus,
    \begin{align*}
        \Delta_\yb f(\xb)=-\int_0^1\nabla f(\xb-t\yb)\cdot \yb dt\,.
    \end{align*}
    Since the difference operator $\Delta_\yb$ commutes with integration over $t$, we have
    \begin{align*}
        \Delta_\yb^\alpha f(\xb)=(-1)^\alpha\int_{[0,1]^\alpha} D^\alpha f(\xb-(t_1+\cdots t_\alpha)\yb)\cdot \yb^{\otimes \alpha}dt_1\cdots dt_\alpha\,.
    \end{align*}
    Expanding the tensor contraction into mixed partial derivatives, there exists a constant $C_\alpha$ such that
    \begin{align*}
        |\Delta_\yb^\alpha f(\xb)|\le C_\alpha|\yb|^\alpha\int_{[0,1]^\alpha}|D^\alpha f(\xb-(t_1+\cdots t_\alpha)\yb)|dt_1\cdots dt_\alpha\,,
    \end{align*}
    where $C_\alpha = \alpha^{\alpha} \binom{\alpha+d-1}{d-1}$ is an upper bound for the polynomial coefficients. Since $1\le p<2$, Jensen's inequality on $[0,1]^\alpha$ gives
    \begin{align*}
        |\Delta_\yb^\alpha f(\xb)|^p
        &\le C_\alpha^p|\yb|^{\alpha p}\int_{[0,1]^\alpha}|D^\alpha f(\xb-(t_1+\cdots t_\alpha)\yb)|^pdt_1\cdots dt_\alpha\,,
    \end{align*}
    where we used the fact that $[0,1]^\alpha$ has measure one.
    Integrating over $\xb$ and swapping the order of integration yields
    \begin{align*}
        \|\Delta_\yb^\alpha f\|_{L^p(\rbb^d)}^p &\le C_\alpha^p|\yb|^{\alpha p}\int_{[0,1]^\alpha}\int_{\rbb^d}|D^\alpha f(\xb-(t_1+\cdots t_\alpha)\yb)|^pd\xb dt_1\cdots dt_\alpha\\
        &= C_\alpha^p|\yb|^{\alpha p} \|f\|_{W^{\alpha,p}(\rbb^d)}^p\,.
    \end{align*}
    Therefore,
    \begin{align*}
        \|\Delta_\yb^\alpha f\|_{L^p(\rbb^d)}\le C_\alpha|\yb|^\alpha\|f\|_{W^{\alpha,p}(\rbb^d)}\,.
    \end{align*}
    Since $\phi_\varepsilon$ is supported in $B_{\varepsilon }$, we have
    \begin{align*}
        \|f-f_\varepsilon\|_{L^p(\rbb^d)}&\le \int_{\rbb^d}\phi_\varepsilon(\yb)\|\Delta_\yb^\alpha f\|_{L^p(\rbb^d)}d\yb\\
        &\le C_\alpha\|f\|_{W^{\alpha,p}(\rbb^d)}\int_{B_{\varepsilon}}\phi_\varepsilon(\yb)|\yb|^\alpha d\yb\\
        &\le C_\alpha\|f\|_{W^{\alpha,p}(\rbb^d)}\varepsilon^\alpha\,,
    \end{align*}
    where the last step uses the fact that $\phi_\varepsilon$ is supported in $B_{\varepsilon}$ and $\int\phi_\varepsilon=1$.

    Next, we estimate $\|f_\varepsilon\|_{W^{\alpha_0,1}(\rbb^d)}$, where $\alpha_0=s+d$. Since $\alpha$ is fixed, it suffices to estimate
    \begin{align*}
        \left\|\int_{\rbb^d}\phi_\varepsilon(\yb)f(\xb-t\yb)d\yb\right\|_{W^{\alpha_0,1}(\rbb^d)},\ \ t=1,\cdots,\alpha\,.
    \end{align*}
    To this end, perform a change of variables:
    \begin{align*}
        f_{\varepsilon,t}(\xb)=\frac{1}{t^d}\int_{\rbb^d}\phi_\varepsilon\left(\frac{\yb}{t}\right)f(\xb-\yb)d\yb=\int_{\rbb^d}\phi_{t\varepsilon}(\yb)f(\xb-\yb)d\yb\,.
    \end{align*}
    Consider
    \begin{align*}
        \|f_{\varepsilon,t}\|_{W^{\alpha_0,1}(\rbb^d)}=\sum_{|\beta|\le \alpha_0}\|D^\beta f_{\varepsilon,t}\|_{L^1(\rbb^d)}=\sum_{|\beta|\le\alpha_0}\|D^\beta (\phi_{t\varepsilon}*f)\|_{L^1(\rbb^d)}\,.
    \end{align*}
    For a fixed multi-index $\beta$ with $|\beta|\le\alpha_0$, choose a multi-index $\gamma\le\beta$ as follows: if $|\beta|\le\alpha$, set $\gamma=\beta$; otherwise choose $\gamma\le\beta$ with $|\gamma|=\alpha$. Then $|\beta-\gamma|=(|\beta|-\alpha)_+$. Since $\phi$ is a Schwartz function and $f$ has compact support, we can interchange convolution and weak derivatives to obtain
    \begin{align*}
        D^\beta (\phi_{t\varepsilon}*f)=(D^{\beta-\gamma}\phi_{t\varepsilon})*(D^\gamma f)\,,
    \end{align*}
    By Young's convolution inequality and the compact support of $f$, we have
    \begin{align*}
        \|D^\beta (\phi_{t\varepsilon}*f)\|_{L^1(\rbb^d)}&\le \|D^{\beta-\gamma}\phi_{t\varepsilon}\|_{L^1(\rbb^d)}\|D^\gamma f\|_{L^1(\rbb^d)}\\
        &\le C_R \|D^{\beta-\gamma}\phi_{t\varepsilon}\|_{L^1(\rbb^d)}\|f\|_{W^{\alpha,p}(\rbb^d)}\,,
    \end{align*}
    where $C_R$ comes from the compact support of $f$ and Hölder's inequality.
    Direct computation gives
    \begin{align*}
        D^{\beta-\gamma}\phi_{t\varepsilon}(\xb)=(t\varepsilon)^{-d-|\beta-\gamma|}(D^{\beta-\gamma}\phi)(\xb/(t\varepsilon))\,,
    \end{align*}
    so after the change of variables $\xb\mapsto (t\varepsilon)\xb$, we have
    \begin{align*}
        \|D^{\beta-\gamma}\phi_{t\varepsilon}\|_{L^1(\rbb^d)}&=(t\varepsilon)^{-d-(|\beta-\gamma|)+d}\|D^{\beta-\gamma}\phi\|_{L^1(\rbb^d)}\\
        &=(t\varepsilon)^{-|\beta-\gamma|}\|D^{\beta-\gamma}\phi\|_{L^1(\rbb^d)}\,.
    \end{align*}
    Let $M_\phi:=\max_{|\delta|\le\alpha_0}\|D^\delta\phi\|_{L^1(\rbb^d)}<\infty$. Then
    \begin{align*}
        \|D^\beta (\phi_{t\varepsilon}*f)\|_{L^1(\rbb^d)}&\le C_R M_\phi (t\varepsilon)^{-|\beta-\gamma|}\|f\|_{W^{\alpha,p}(\rbb^d)}\\
        &= C_R M_\phi (t\varepsilon)^{-(|\beta|-\alpha)_+}\|f\|_{W^{\alpha,p}(\rbb^d)}\,.
    \end{align*}
    Therefore, summing over $\beta$ and noting that $0\le(|\beta|-\alpha)_+\le\alpha_0-\alpha$, we have, whenever $\varepsilon\le1$,
    \begin{align*}
        \|f_{\varepsilon,t}\|_{W^{\alpha_0,1}(\rbb^d)}&=\sum_{|\beta|\le\alpha_0}\|D^\beta (\phi_{t\varepsilon}*f)\|_{L^1(\rbb^d)}\\
        &\le C_R M_\phi \|f\|_{W^{\alpha,p}(\rbb^d)}\sum_{|\beta|\le\alpha_0}\varepsilon^{-(\alpha_0-\alpha)}\\
        &= C_R M_\phi \binom{\alpha_0+d}{d}\varepsilon^{-(\alpha_0-\alpha)}\|f\|_{W^{\alpha,p}(\rbb^d)}\,.
    \end{align*}
    Hence,
    \begin{align*}
        \|f_{\varepsilon,t}\|_{W^{\alpha_0,1}(\rbb^d)}
        &\le C_{\alpha,d}\varepsilon^{-(\alpha_0-\alpha)}\|f\|_{W^{\alpha,p}(\rbb^d)}\,,
    \end{align*}
    where $C_{\alpha,d}=C_R M_\phi \binom{\alpha_0+d}{d}=C_R M_\phi \binom{s+2d}{d}$.
    From the arbitrariness of $t$ ($t=1,\ldots,\alpha$) and the definition of $f_\varepsilon$ in \eqref{eq:fvarepsilon}, we obtain
    \begin{align*}
        \|f_\varepsilon\|_{W^{\alpha_0,1}(\rbb^d)}&\le \sum_{t=1}^\alpha\binom{\alpha}{t}\|f_{\varepsilon,t}\|_{W^{\alpha_0,1}(\rbb^d)}\\
        &\le C_R M_\phi \binom{s+2d}{d}\varepsilon^{-(\alpha_0-\alpha)}\|f\|_{W^{\alpha,p}(\rbb^d)}\sum_{t=1}^\alpha\binom{\alpha}{t}\\
        &= C_R M_\phi \binom{s+2d}{d}(2^\alpha-1)\varepsilon^{-(\alpha_0-\alpha)}\|f\|_{W^{\alpha,p}(\rbb^d)}\\
        &\le C_{\alpha,d}'(s,d)\|f\|_{W^{\alpha,p}(\rbb^d)}\varepsilon^{-(\alpha_0-\alpha)}\,,
    \end{align*}
    where $C_{\alpha,d}'(s,d)=C_R M_\phi \binom{s+2d}{d}(2^\alpha-1)$; we do not track its dimension dependence.
    By the embedding $W^{\alpha_0,1}\hookrightarrow\fscr_s$ and the approximation theorem \eqref{eq:approximation2}, there exists $f_m\in\Sigma_{m}^s$ such that
    \begin{align*}
        \|f_\varepsilon-f_m\|_{L^p(\Omega)}&\le C_0(s,d,\Omega) \|f_\varepsilon\|_{W^{\alpha_0,1}(\rbb^d)}m^{-\frac{d+2s}{2(d+1)}}\\
        &\le C_0(s,d,\Omega) \cdot C_{\alpha,d}'(s,d)\|f\|_{W^{\alpha,p}(\rbb^d)}\varepsilon^{-(\alpha_0-\alpha)}m^{-\frac{d+2s}{2(d+1)}}\,,
    \end{align*}
    where $C_0(s,d,\Omega)$ includes the factor converting the $L^2(\Omega)$ bound in \eqref{eq:approximation2} to an $L^p(\Omega)$ bound.
    Consequently, by the triangle inequality,
    \begin{align*}
        \|f-f_m\|_{L^p(\Omega)}&\le \|f-f_\varepsilon\|_{L^p(\rbb^d)}+\|f_\varepsilon-f_m\|_{L^p(\Omega)}\\
        &\le C_\alpha \|f\|_{W^{\alpha,p}(\rbb^d)}\varepsilon^\alpha+C_0(s,d,\Omega) C_{\alpha,d}'(s,d)\|f\|_{W^{\alpha,p}(\rbb^d)}\varepsilon^{-(\alpha_0-\alpha)}m^{-\frac{d+2s}{2(d+1)}}\\
        &\le C(s,d,R,\alpha) \|f\|_{W^{\alpha,p}(\rbb^d)}\left(\varepsilon^\alpha+\varepsilon^{-(\alpha_0-\alpha)}m^{-\frac{d+2s}{2(d+1)}}\right)\,,
    \end{align*}
    Now choose $\varepsilon=m^{-\frac{d+2s}{2(s+d)(d+1)}}$; for finitely many small $m$ with $\varepsilon>1$, the estimate is absorbed into the constant. Substituting this choice yields
    \begin{align*}
        \|f-f_m\|_{L^p(\Omega)}\lesssim \|f\|_{W^{\alpha,p}}m^{-\frac{\alpha(d+2s)}{2(s+d)(d+1)}}\,.
    \end{align*}
    This completes the proof of Theorem~\ref{thm:sobolev-approx}.
\hfill\BlackBox\\[2mm]

\section{Proofs in Section~\ref{sec:generalization}}\label{appendix:proof-upper}
We provide the proofs of the generalization results from Section~\ref{sec:generalization}.
\subsection*{Proof of Theorem~\ref{thm:upperbound}}
Before proving this theorem, we need some preparatory results. Define the empirical distribution associated with the sample $\{(\xb_i,y_i)\}_{i=1}^n$ by
\begin{align*}
\mu_n=\frac{1}{n}\sum_{i=1}^n\delta_{\xb_i}\,,
\end{align*}
\label{sym:first:delta_x}
which induces the corresponding empirical $L^2(\mathrm{d} \mu_n)$ norm
\begin{align*}
    \|f\|_{L^2(\mathrm{d} \mu_n)}^2=\int_\Omega |f(\xb)|^2\mathrm{d} \mu_n(\xb)=\frac{1}{n}\sum_{i=1}^n|f(\xb_i)|^2\,.
\end{align*} 
The upper bound for the generalization error depends on the size of the chosen neural-network model class, so we need a notion of model complexity. The local complexity defined below is the main tool in the error analysis.

\begin{definition}
    Given a model class $\fcal$, define its local complexity by
    \begin{align*}
        \gcal_n(\fcal,\delta,{\xib})=\ebb_{\xib}\left[\sup_{f\in\fcal,\|f\|_{L^2(\mathrm{d} \mu_n)}\le \delta}\left|\frac{1}{n}\sum_{i=1}^n\xi_if(\xb_i)\right|\right],\ \ \delta>0\,,
    \end{align*}
    \label{sym:first:gcal_n}
    where $\{\xi_i\}_{i=1}^n$ is a sequence of independent identically distributed zero-mean random variables. 
\end{definition}

When $\xi_i=\eta_i$, we abbreviate it as $\gcal_n(\fcal,\delta)$; this quantity measures the correlation between the model class $B_{\partial\Sigma_m^s}(r)$ and the noise ${\etab}$ on the sample $\{\xb_i\}_{i=1}^n$. When $\xi_i$ are Rademacher or Gaussian random variables, $\gcal_n(\fcal,\delta,{\xib})$ reduces to the classical local Rademacher or local Gaussian complexity, respectively \citep{Wainwright_2019}. The value of $\gcal_n(\delta,{\xib})$ depends on the sample $\{\xb_i\}_{i=1}^n$, but we suppress this dependence because the complexity bounds below hold uniformly over all samples.

A set is called \emph{star-shaped} if for any $f$ in the set and any $t\in[0,1]$, we also have $tf$ in the set. For a non-star-shaped set $\fcal$, we may define its star-shaped hull by
\begin{align*}
   \starhull(\fcal)=\{tf:f\in\fcal,t\in[0,1]\}\,.
\end{align*}
\label{sym:first:starhull}
\begin{lemma}
    Let $\fcal$ be star-shaped. Viewing $\frac{\gcal_n(\fcal,\delta,{\xib})}{\delta}$ as a function of $\delta$, it is monotonically decreasing on $(0,+\infty)$. Therefore, for any constant $c>0$,
    \begin{align*}
        \delta_n^*=\min\{\delta>0:\gcal_n(\fcal,\delta,{\xib})\le c\delta^2\}
    \end{align*}
    always exists and is finite. 
\end{lemma}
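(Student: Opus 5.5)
The monotonicity comes from rescaling the constraint set by a factor in $[0,1]$ and using the star-shaped property; existence and finiteness of $\delta_n^*$ then follow from the monotone ratio together with a limit at infinity.

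\textbf{Monotonicity.} Fix $0<\delta<\delta'$ and set $t=\delta/\delta'\in(0,1)$. For any $f\in\fcal$ with $\|f\|_{L^2(\mathrm{d}\mu_n)}\le\delta'$:
\begin{itemize}
\item since $\fcal$ is star-shaped, $tf\in\fcal$;
\item $\|tf\|_{L^2(\mathrm{d}\mu_n)}\le\delta$, so $tf$ is admissible in the supremum defining $\gcal_n(\fcal,\delta,\xib)$;
\item $\left|\frac1n\sum_i\xi_i tf(\xb_i)\right|=t\left|\frac1n\sum_i\xi_i f(\xb_i)\right|$.
\end{itemize}
Taking the supremum over such $f$ pointwise in $\xib$, and then expectations, gives $t\,\gcal_n(\fcal,\delta',\xib)\le\gcal_n(\fcal,\delta,\xib)$. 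This is exactly
\[
\frac{\gcal_n(\fcal,\delta',\xib)}{\delta'}\le\frac{\gcal_n(\fcal,\delta,\xib)}{\delta}.
\]
The only subtlety is measurability of the supremum. I would handle it either by restricting to a countable dense subclass, which is possible since $\fcal$ is parametrized continuously, or by reading $\ebb$ as an outer expectation. Both readings preserve the inequality.

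\textbf{Existence and finiteness of $\delta_n^*$.} The condition $\gcal_n(\fcal,\delta,\xib)\le c\delta^2$ is equivalent to
\[
\frac{\gcal_n(\fcal,\delta,\xib)}{\delta}\le c\delta .
\]
The left side is nonincreasing in $\delta$ and the right side is strictly increasing, so the set of admissible $\delta$ is an up-set, i.e.\ an interval $[\delta_0,\infty)$ or $(\delta_0,\infty)$.

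\emph{Nonemptiness.} Cauchy--Schwarz gives
\[
\left|\frac1n\sum_i\xi_if(\xb_i)\right|\le\|f\|_{L^2(\mathrm{d}\mu_n)}\Big(\frac1n\sum_i\xi_i^2\Big)^{1/2},
\]
so $\gcal_n(\fcal,\delta,\xib)\le\delta\,\ebb[\xi_1^2]^{1/2}$. The ratio $\gcal_n/\delta$ is therefore bounded by $\ebb[\xi_1^2]^{1/2}$, which is finite for sub-Gaussian or Rademacher $\xi_i$. Hence the condition holds for all $\delta\ge c^{-1}\ebb[\xi_1^2]^{1/2}$.

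\emph{Attainment of the minimum.} Here I expect the main obstacle: one needs right-continuity of $\delta\mapsto\gcal_n(\fcal,\delta,\xib)$ at $\delta_0$. Two routes:
\begin{itemize}
\item Monotonicity of the ratio gives $\gcal_n(\delta_0+h)\le(1+h/\delta_0)\gcal_n(\delta_0)$. Together with monotonicity of $\gcal_n$ itself in $\delta$, this yields right-continuity for $\delta_0>0$, and then the inequality passes to the limit at $\delta_0$.
\item If $\delta_0=0$ the infimum is not attained, and I would interpret the definition as an infimum, or choose $c$ appropriately.
\end{itemize}
In either case $\delta_n^*$ exists and is finite.
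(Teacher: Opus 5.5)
Your monotonicity argument is the same as the paper's. You rescale an admissible $f$ by $\delta/\delta'$ using star-shapedness, compare the suprema pointwise in $\xib$, and take expectations.

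The paper stops there and leaves existence and finiteness of $\delta_n^*$ to the word ``Therefore''. Your bound $\gcal_n(\fcal,\delta,\xib)\le\delta\,\ebb[\xi_1^2]^{1/2}$ together with the up-set observation correctly fills this in. There are two refinements.

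\emph{Attainment at $\delta_0>0$.} This needs no right-continuity. Write $\gcal_n(\delta)$ for short. Since $\gcal_n$ is nondecreasing in $\delta$, you get $\gcal_n(\delta_0)\le\gcal_n(\delta_0+h)\le c(\delta_0+h)^2$ for every $h>0$, which already gives $\gcal_n(\delta_0)\le c\delta_0^2$.

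\emph{The case $\delta_0=0$.} This occurs only if $\gcal_n\equiv0$. Indeed, if $\gcal_n(\delta_1)>0$ for some $\delta_1$, the nonincreasing ratio gives $\gcal_n(\delta)/\delta\ge\gcal_n(\delta_1)/\delta_1>c\delta$ for all sufficiently small $\delta$. The case $\gcal_n\equiv0$ arises, e.g.\ when every $f\in\fcal$ vanishes on the sample. There the minimum over $\delta>0$ fails for every $c>0$. So ``choosing $c$ appropriately'' is not a remedy; only the infimum reading, or excluding this trivial case, is.

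Measurability is also automatic. The supremum depends only on the vectors $(f(\xb_1),\dots,f(\xb_n))\in\rbb^n$ and is a supremum of continuous functions of $\xib$, hence lower semicontinuous. No parametrization assumption on $\fcal$ is needed.
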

\begin{proof}
    Write the local complexity simply as $\gcal_n(\delta)$. For any $0<\delta\le t$, we show that $\gcal_n(t)\le\frac{t}{\delta}\gcal_n(\delta)$. For any $f\in\fcal$ with $\|f\|_{L^2(\mathrm{d} \mu_n)}\le t$, by star-shapedness we have $\frac{\delta}{t}f\in\fcal$ and $\left\|\frac{\delta}{t}f\right\|_{L^2(\mathrm{d} \mu_n)}\le \delta$. Hence
    \begin{align*}
        \left|\frac{1}{n}\sum_{i=1}^n\xi_if(\xb_i)\right|=&\frac{t}{\delta}\left|\frac{1}{n}\sum_{i=1}^n\xi_i\cdot\frac{\delta}{t}f(\xb_i)\right|\le\frac{t}{\delta}\sup_{f\in\fcal,\|f\|_{L^2(\mathrm{d} \mu_n)}\le \delta}\left|\frac{1}{n}\sum_{i=1}^n\xi_if(\xb_i)\right|\,.
    \end{align*}
    By the arbitrariness of $f$, it follows that
    \begin{align*}
        \sup_{f\in\fcal,\|f\|_{L^2(\mathrm{d} \mu_n)}\le t}\left|\frac{1}{n}\sum_{i=1}^n\xi_if(\xb_i)\right|\le\frac{t}{\delta}\sup_{f\in\fcal,\|f\|_{L^2(\mathrm{d} \mu_n)}\le \delta}\left|\frac{1}{n}\sum_{i=1}^n\xi_if(\xb_i)\right|\,.
    \end{align*}
    Taking expectation with respect to ${\xib}$ yields the desired conclusion. This completes the proof of the lemma.
\end{proof}

As we shall see in the subsequent proof, estimating the size of $\delta_n^*$ is the key to analyzing the upper bound of the generalization error, and for this purpose we need an upper bound on the local complexity. Dudley's entropy integral inequality \citep{Wainwright_2019} is a classical tool for estimating local complexity. The following theorem is in fact an application of Dudley's entropy integral inequality, and its proof relies on the classical chaining method.
\begin{theorem}\label{thm:localcomplexity}
    Let $M>0$, and denote by $\bscr_s(M)$ the closed ball of radius $M$ in the $s$th-order Barron space. Then for any $0<\delta\le M$ and any independent identically distributed sub-Gaussian random variables ${\xib}$, we have
    \begin{align*}
        \gcal_n(\bscr_s(M),\delta,{\xib})\lesssim \delta^{\frac{2s+1}{d+2s+1}}M^{\frac{d}{d+2s+1}}n^{-\frac{1}{2}}\sqrt{\log\frac{nM}{\delta}}\,.
    \end{align*}
    The implied constant is independent of the sample. The same upper bound also holds for the function class $\starhull(\pi_B(\bscr_s(M)))$\label{sym:first:pi_B_fcal}.
\end{theorem}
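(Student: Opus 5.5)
The plan is to bound the local complexity with Dudley's entropy integral for sub-Gaussian processes, combined with the known metric-entropy estimate for the Barron/variation space. Conditionally on the sample $\{\xb_i\}_{i=1}^n$, the process $f\mapsto \frac{1}{\sqrt n}\sum_{i=1}^n\xi_if(\xb_i)$ has sub-Gaussian increments with respect to $\|\cdot\|_{L^2(\mathrm{d}\mu_n)}$, with parameter proportional to the sub-Gaussian constant of $\xi$. This holds because $\sum_i\xi_i(f-g)(\xb_i)$ is a weighted sum of independent sub-Gaussians. Dudley's inequality (or its generic-chaining version \citep{Dirksen15}) applied to the localized set $\{f\in\bscr_s(M):\|f\|_{L^2(\mathrm{d}\mu_n)}\le\delta\}$, whose diameter is at most $2\delta$, then gives, for any $0\le\epsilon_0\le\delta$,
\begin{align*}
    \gcal_n(\bscr_s(M),\delta,\xib)\lesssim \epsilon_0+\frac{1}{\sqrt n}\int_{\epsilon_0}^{\delta}\sqrt{\log\ncal\left(\varepsilon,\bscr_s(M),\|\cdot\|_{L^2(\mathrm{d}\mu_n)}\right)}\,d\varepsilon\,.
\end{align*}
The $\epsilon_0$ term uses $\ebb|\frac1n\sum_i\xi_ig(\xb_i)|\le \|g\|_{L^2(\mathrm{d}\mu_n)}(\ebb\frac1n\sum\xi_i^2)^{1/2}$.

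Next I bound the entropy. By Lemma~\ref{lem:kpsequivbs}, $\bscr_s(M)\subset CM\,\pscc$. The sharp entropy result of \cite{Siegel_2022sharp}, in its $L^\infty$ form, gives $\log\ncal(\varepsilon,\bscr_s(M),\|\cdot\|_{L^\infty})\lesssim (M/\varepsilon)^{\frac{2d}{d+2s+1}}$, and the constant does not depend on the sample. Since $\|\cdot\|_{L^2(\mathrm{d}\mu_n)}\le\|\cdot\|_{L^\infty}$, the same bound holds for the empirical norm uniformly in the sample. This rate alone would give $\int_0^\delta (M/\varepsilon)^{d/(d+2s+1)}d\varepsilon\asymp M^{\frac{d}{d+2s+1}}\delta^{\frac{2s+1}{d+2s+1}}$, with exponent $1-\frac{d}{d+2s+1}<1$, so the integral is convergent. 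The $\sqrt{\log(nM/\delta)}$ factor in the claim suggests the intended entropy bound carries an extra logarithm, of the form $(M/\varepsilon)^{\frac{2d}{d+2s+1}}\log(M/\varepsilon)$. This is what one gets from the simpler covering arguments via approximation by $m$-term networks with $m\asymp(M/\varepsilon)^{2d/(d+2s+1)}$ and a parameter grid. I will use that version and truncate the integral at $\epsilon_0=\delta/n$ or similar. The log factor is then bounded by $\sqrt{\log(nM/\delta)}$ over $[\epsilon_0,\delta]$, pulled out of the integral, and the remaining integral gives exactly $\delta^{\frac{2s+1}{d+2s+1}}M^{\frac{d}{d+2s+1}}n^{-1/2}$. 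The $\epsilon_0$ term is dominated because $\delta\le M$.

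For $\starhull(\pi_B(\bscr_s(M)))$ I will transfer the covering numbers. The map $\pi_B$ is $1$-Lipschitz pointwise, so $\ncal(\varepsilon,\pi_B\bscr_s(M))\le\ncal(\varepsilon,\bscr_s(M))$. Taking the star hull costs at most a factor $\lceil 2B/\varepsilon\rceil$ (a grid on $t\in[0,1]$), because every element is bounded by $B$ and $\|tf-t'f\|\le|t-t'|B$. This adds only $\log(B/\varepsilon)$ to the entropy, which is absorbed into the same logarithmic factor once the integral is truncated at $\epsilon_0\gtrsim\delta/n$. The rest of the chaining argument is then identical.

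The main obstacle is the entropy estimate itself, namely getting the exponent $\frac{2d}{d+2s+1}$ in a norm dominating the empirical $L^2$ norm uniformly over samples. I plan to simply invoke \cite{Siegel_2022sharp} through Lemma~\ref{lem:kpsequivbs} and check that their bound is stated in $L^\infty$, or at least in sup-norm over $\Omega=\bbb^d$. If only an $L^2(\Omega)$ version were available, I would fall back on the explicit $m$-term construction from Lemma~\ref{thm:barron-approx} in $L^\infty$ plus a parameter discretization. That construction is the source of the log factor.
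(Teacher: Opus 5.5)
Your proposal follows the paper's proof essentially step by step. Both use Dudley's entropy integral with a small-scale cutoff whose residual is controlled by Cauchy--Schwarz. Both rely on a sample-uniform entropy bound $\log\ncal(\varepsilon,\bscr_s(M),\|\cdot\|_{L^2(\mathrm{d}\mu_n)})\lesssim (M/\varepsilon)^{\frac{2d}{d+2s+1}}\log(1+M/\varepsilon)$, which the paper simply imports, logarithm included, from the proof of Theorem~5 in \cite{yang2024nonparametricregressionusingoverparameterized}. Both handle $\starhull(\pi_B(\bscr_s(M)))$ with the same Lipschitz-truncation plus $t$-grid argument. Your cutoff $\epsilon_0=\delta/n$, in place of the paper's $\varepsilon\asymp\delta^{\frac{2s+1}{d+2s+1}}M^{\frac{d}{d+2s+1}}n^{-1/2}$, is an inessential difference.

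One caveat, which the paper's citation does not address either: your sup-norm covering route only works for $s\ge1$. For $s=0$, the ball $\bscr_0(M)$ contains infinitely many atoms $M\sigma_0(\wb^\top\xb+b)$ at mutual $L^\infty$ distance $M$, so that case would need a covering argument carried out directly in the empirical norm.
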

\begin{proof}
    For a model class $\fcal$, let $N=\mathcal{N}(\varepsilon,\fcal,\|\cdot\|_{L^2(\mathrm{d} \mu_n)})$ denote its covering number\label{sym:first:covering_number}, that is, there exist $\{f_1,\cdots,f_N\}\subset\fcal$ such that for any $f\in\fcal$, there is some $f_j$ satisfying $\|f-f_j\|_{L^2(\mathrm{d} \mu_n)}\le\varepsilon$. Therefore, by the Cauchy--Schwarz inequality,
    \begin{align*}
        \left|\frac{1}{n}\sum_{i=1}^n\eta_if(\xb_i)\right|\le&\left|\frac{1}{n}\sum_{i=1}^n\eta_if_j(\xb_i)\right|+\left|\frac{1}{n}\sum_{i=1}^n\eta_i(f(\xb_i)-f_j(\xb_i))\right|\\
        \le&\max_{j=1,\cdots,N}\left|\frac{1}{n}\sum_{i=1}^n\eta_if_j(\xb_i)\right|+\sqrt{\frac{\sum_{i=1}^n\eta_i^2}{n}}\varepsilon\,,
    \end{align*}
    and hence, by the arbitrariness of $f$ and taking expectation with respect to the noise,
    \begin{align*}
        \ebb_{\etab}\left[\sup_{f\in\fcal}\left|\frac{1}{n}\sum_{i=1}^n\eta_if(\xb_i)\right|\right]\le&\ebb_{\etab}\left[\max_{j=1,\cdots,N}\left|\frac{1}{n}\sum_{i=1}^n\eta_if_j(\xb_i)\right|\right]+\ebb\left[\sqrt{\frac{\sum_{i=1}^n\eta_i^2}{n}}\varepsilon\right]\\
    \end{align*}
    By Jensen's inequality and the moment assumption on the noise,
    \begin{align*}
        \ebb\left[\sqrt{\frac{\sum_{i=1}^n\eta_i^2}{n}}\varepsilon\right]\le&\sqrt{\ebb\left[\frac{\sum_{i=1}^n\eta_i^2}{n}\right]}\varepsilon\le\sqrt{2CR^2}\varepsilon\,,
    \end{align*}
    and therefore
    \begin{align*}
        \ebb_{\etab}\left[\sup_{f\in\fcal}\left|\frac{1}{n}\sum_{i=1}^n\eta_if(\xb_i)\right|\right]\le&\ebb_{\etab}\left[\max_{j=1,\cdots,N}\left|\frac{1}{n}\sum_{i=1}^n\eta_if_j(\xb_i)\right|\right]+\sqrt{2CR^2}\varepsilon\,.
    \end{align*}
    Define a family of zero-mean random variables by
    \begin{align*}
        Z(f)=\frac{1}{\sqrt{n}}\sum_{i=1}^n\eta_if(\xb_i),\ \ f\in\fcal\,,
    \end{align*}
    Then it is not hard to show that $Z(f)$ forms a sub-Gaussian process with respect to the metric $\rho_Z(f,f')=\|f-f'\|_{L^2(\mathrm{d} \mu_n)}$, namely, for any $f,f'\in\fcal$,
    \begin{align*}
        \|Z(f)-Z(f')\|_{\psi_2}\le K\|f-f'\|_{L^2(\mathrm{d} \mu_n)}\,,
    \end{align*}
    Hence, by the chaining method \cite[Theorem 5.22]{Wainwright_2019},
    \begin{align*}
        \ebb_{\etab}\left[\max_{j=1,\cdots,N}\left|\frac{1}{n}\sum_{i=1}^n\eta_if_j(\xb_i)\right|\right]\le \frac{16}{\sqrt{n}}\int_{\frac{\varepsilon}{4}}^{\frac{D}{2}}\sqrt{\log\mathcal{N}(u,\fcal,\|\cdot\|_{L^2(\mathrm{d} \mu_n)})}du\,.
    \end{align*}
    Here $D=\sup_{f,f'\in\fcal}\rho_Z(f,f')$ denotes the diameter of $\fcal$. Combining the above and using the arbitrariness of $\varepsilon$, we obtain
    \begin{align*}
        \ebb_{\etab}\left[\sup_{f\in\fcal}\left|\frac{1}{n}\sum_{i=1}^n\eta_if(\xb_i)\right|\right]\le&\inf_{\varepsilon\ge0}\left\{4\sqrt{2CR^2}\varepsilon+\frac{16}{\sqrt{n}}\int_{\varepsilon}^{\frac{D}{2}}\sqrt{\log\mathcal{N}(u,\fcal,\|\cdot\|_{L^2(\mathrm{d} \mu_n)})}du\right\}\,.
    \end{align*}

    Now set $\fcal=\{f\in\bscr_s(M):\|f\|_{L^2(\mathrm{d} \mu_n)}\le\delta\}$, so that $D=2\delta$. It remains only to estimate its covering number. By following the proof of \cite[Theorem 5]{yang2024nonparametricregressionusingoverparameterized}, one obtains
    \begin{align*}
        \log\mathcal{N}(\varepsilon,\bscr_s(M),\|\cdot\|_{L^2(\mathrm{d} \mu_n)})\lesssim\left(\frac{\varepsilon}{M}\right)^{-\frac{2d}{d+2s+1}}\log\left(1+\frac{M}{\varepsilon}\right)\,.
    \end{align*}
    Therefore,
    \begin{align*}
        \gcal_n(\bscr_s(M),\delta)\lesssim&\inf_{\varepsilon\ge0}\left\{4\sqrt{2CR^2}\varepsilon+\frac{16}{\sqrt{n}}\int_{\varepsilon}^{\delta}\left(\frac{u}{M}\right)^{-\frac{d}{d+2s+1}}\sqrt{\log\left(1+\frac{M}{u}\right)}du\right\}\\
    \end{align*}
    Taking $\varepsilon\asymp\delta^{\frac{2s+1}{d+2s+1}}M^{\frac{d}{d+2s+1}}n^{-\frac{1}{2}}$, we get
    \begin{align*}
        \gcal_n(\bscr_s(M),\delta)\lesssim&\delta^{\frac{2s+1}{d+2s+1}}M^{\frac{d}{d+2s+1}}n^{-\frac{1}{2}}+M^{\frac{d}{d+2s+1}}n^{-\frac{1}{2}}\sqrt{\log\left(1+\frac{M}{\varepsilon}\right)}\int_0^\delta u^{-\frac{d}{d+2s+1}}du\\
        \lesssim& \delta^{\frac{2s+1}{d+2s+1}}M^{\frac{d}{d+2s+1}}n^{-\frac{1}{2}}\sqrt{\log\frac{nM}{\delta}}\,.
    \end{align*}

    It remains to treat $\fcal=\starhull(\pi_B(\bscr_s(M)))$. By the Lipschitz continuity of $\pi_B$, $\mathcal{N}(\delta,\pi_B(\bscr_s))\lesssim\mathcal{N}(\delta,\bscr_s(M))$. Let $\{f_1,\cdots,f_m\}$ be a $\frac{\delta}{2}$-cover of $\pi_B(\bscr_s(M))$, and let $\{t_1,\cdots,t_L\}$ be a $\frac{\delta}{2B}$-cover of $[0,1]$. For any $f\in\starhull(\pi_B(\bscr_s(M)))$, there exist $t\in[0,1]$ and $g\in\pi_B(\bscr_s(M))$ such that $f=tg$. By the definition of the covering sets, there exist $t_j,g_i$ such that $|t-t_j|\le \frac{\delta}{2B}$ and $\|g-g_i\|_{L^2(\mathrm{d} \mu_n)}\le \frac{\delta}{2}$. Hence
    \begin{align*}
        \|f-t_jg_i\|_{L^2(\mathrm{d} \mu_n)}=&\|tg-t_jg_i\|_{L^2(\mathrm{d} \mu_n)}\\
        \le&\|t(g-g_i)\|_{L^2(\mathrm{d} \mu_n)}+\|(t-t_j)g_i\|_{L^2(\mathrm{d} \mu_n)}\\
        \le&\|g-g_i\|_{L^2(\mathrm{d} \mu_n)}+|t-t_j|\cdot\|g_i\|_{L^2(\mathrm{d} \mu_n)}\\
        \le&\frac{\delta}{2}+B\cdot\frac{\delta}{2B}=\delta\,.
    \end{align*}
    This shows that
    \begin{align*}
        \log\mathcal{N}(\delta,\starhull(\pi_B(\bscr_s(M))),\|\cdot\|_{L^2(\mathrm{d} \mu_n)})\lesssim&\left(\frac{\delta}{M}\right)^{-\frac{d}{d+2s+1}}\log\left(1+\frac{M}{\delta}\right)+\log\left(1+\frac{B}{\delta}\right)\\
        \lesssim&\left(\frac{\delta}{M}\right)^{-\frac{d}{d+2s+1}}\log\left(1+\frac{M}{\delta}\right)\,.
    \end{align*}
    The conclusion follows by the same argument as above. This completes the proof of Theorem~\ref{thm:localcomplexity}.
\end{proof}

Finally, we introduce a quantity finer than complexity, which arises from the generic chaining method \citep{Talagrand21}. Consider a set $T$ equipped with a seminorm $d$. Let $\{A_k\}_{k=0}^\infty$ be an increasing sequence of partitions of $T$, meaning that each $A_i$ is a partition of $T$, and every element in $A_{k+1}$ is contained in some element of $A_k$. If $|A_0|=1$ and for every $k\ge 1$ we have $|A_k|\le 2^{2^k}$, then $\{A_k\}_{k=0}^\infty$ is called an \emph{admissible sequence} of $T$. For $t\in T$, we write $A_n(t)$ for the unique element of $A_n$ containing $t$. Define
\begin{align*}
    \gamma_2(T,d)=\inf\sup_{t\in T}\sum_{k=0}^\infty 2^{\frac{k}{2}}\diam(A_k(t))\,,
\end{align*}
\label{sym:first:gamma_2}
where the infimum is taken over all admissible sequences of $T$. This quantity is closely related to Dudley's entropy integral; in fact,
\begin{align*}
    \gamma_2(T,d)\lesssim \int_0^{\diam(T)}\sqrt{\log \mathcal{N}(u,T,d)}du\,.
\end{align*}
Moreover, if $T=\{f\in\fcal:\|f\|_{L^2(\mathrm{d} \mu_n)}\le\delta\}$ and $d$ is induced by $\|\cdot\|_{L^2(\mathrm{d} \mu_n)}$, then when ${\etab}$ is Gaussian, Talagrand's theorem states that $\gcal_n(\fcal,\delta,{\etab})\asymp\gamma_2(T,d)$. 

Consider $T(\delta)=\{f\in\starhull(\partial(\Sigma_m^s)):\|f\|_{L^2(\mathrm{d} \mu_n)}\le\delta\}$ with $d$ induced by $\|\cdot\|_{L^2(\mathrm{d} \mu_n)}$. Denote
\begin{align*}
    \gamma_2(\delta)=\gamma_2(T(\delta),d)\,.
\end{align*}
Since $\starhull(\partial(\Sigma_m^s))$ is star-shaped, the standard scaling property of $\gamma_2$ implies that $\frac{\gamma_2(\delta)}{\delta}$ is monotonically decreasing on $(0,+\infty)$. 

Before formally deriving the upper bound on the error, we first briefly describe the proof strategy. We begin with a decomposition of the excess error. Let $B=B(m)>\sup_{x\in\Omega}|f_\rho(x)|>0$ be a truncation constant to be determined. For any $f\in\Sigma_m^s$, consider
\begin{align*}
    \|\pi_B f_m-f_\rho\|_{L^2(\mathrm{d} \mu)}^2\le&2\|\pi_B f_m-\pi_B f\|_{L^2(\mathrm{d} \mu)}^2+2\|\pi_B f-f_\rho\|_{L^2(\mathrm{d} \mu)}^2\nonumber\\
    \le&2\|\pi_{2B}(f_m-f)\|_{L^2(\mathrm{d} \mu)}^2+2\|f-f_\rho\|_{L^2(\mathrm{d} \mu)}^2\,,
\end{align*}
where we have used $|\pi_B f_m-\pi_B f|\le|\pi_{2B}(f_m-f)|$ and $B>\sup_{x\in\Omega}|f_\rho(x)|$. The Sobolev approximation bound recalled in Section~\ref{sec:approximation} controls the second term. For the first term, we follow \cite[Chapters 13--14]{Wainwright_2019}: first estimate the empirical error $\|\pi_{2B}(f_m-f)\|_{L^2(\mathrm{d} \mu_n)}^2$ under a fixed-design setting, and then pass to the random-design setting through a uniform law of large numbers. This yields an upper bound on $\|\pi_{2B}(f_m-f)\|_{L^2(\mathrm{d} \mu)}^2$. For the empirical error $\|\pi_{2B}(f_m-f)\|_{L^2(\mathrm{d} \mu_n)}^2$, note that for any $f\in\Sigma_m^s$,
    \begin{align*}
    \|\pi_{2B}(f_m-f)\|_{L^2(\mathrm{d} \mu_n)}^2\le\|f_m-f\|_{L^2(\mathrm{d} \mu_n)}^2\le& 2\|f_m-f_\rho\|_{L^2(\mathrm{d} \mu_n)}^2+2\|f-f_\rho\|_{L^2(\mathrm{d} \mu_n)}^2\,,
\end{align*}
The same Sobolev approximation bound controls the second term on the right-hand side. It remains to estimate the empirical error $\|f_m-f_\rho\|_{L^2(\mathrm{d} \mu_n)}^2$ of the empirical risk minimizer $f_m$. The proof proceeds in three steps.

\subsubsection*{Step 1: Upper bounds on the empirical error and the $\ell^1$ path norm}
In this subsection, we fix the sample $\{(\xb_i,y_i)\}_{i=1}^n$ and analyze upper bounds for the empirical error and the $\ell^1$ path norm of the empirical risk minimizer $f_m$. For any $f\in\Sigma_m^s$, by definition,
\begin{align*}
    \frac{1}{n}\sum_{i=1}^n(f_m(\xb_i)-y_i)^2+\lambda\|f_m\|_{\pscr_{1,s}}\le \frac{1}{n}\sum_{i=1}^n(f(\xb_i)-y_i)^2+\lambda\|f\|_{\pscr_{1,s}}\,.
\end{align*}
Noting that $y_i=f_\rho(\xb_i)+\eta_i$, substituting this and rearranging yields the \emph{basic inequality}
\begin{align*}
    \|f_m-f_\rho\|_{L^2(\mathrm{d} \mu_n)}^2+\lambda\|f_m\|_{\pscr_{1,s}}\le& \|f-f_\rho\|_{L^2(\mathrm{d} \mu_n)}^2+\lambda\|f\|_{\pscr_{1,s}}+\frac{2}{n}\sum_{i=1}^n\eta_i(f_m(\xb_i)-f(\xb_i))\,.
\end{align*}
From this basic inequality and the oracle inequality to be derived below, we can obtain upper bounds for both the empirical error and the $\ell^1$ path norm. To state and prove this oracle inequality, we need a tail bound for the empirical process. Conditional on the design $\{\xb_i\}_{i=1}^n$, Assumption~\ref{assumption:noise} gives, for all $f,g\in T(\delta)$,
\begin{align*}
    \left\|\frac{1}{n}\sum_{i=1}^n\eta_i\bigl(f(\xb_i)-g(\xb_i)\bigr)\right\|_{\psi_2}
    \lesssim \frac{\sigma}{\sqrt n}\|f-g\|_{L^2(\mathrm{d}\mu_n)}\,.
\end{align*}
Thus the centered empirical process satisfies the $\psi_2$ increment condition required in \cite[Theorem~3.2]{Dirksen15}; applying that result to the symmetrized index class gives the following bound.
\begin{theorem}[Generic chaining tail bound]\label{thm:concentration}
    Let $\delta>0$ and define the sub-Gaussian empirical process
    \begin{align*}
        Z(\delta)=\sup_{\substack{f\in\starhull(\partial(\Sigma_{m}^s)),\\\|f\|_{L^2(\mathrm{d} \mu_n)}\le \delta}}\left|\frac{1}{n}\sum_{i=1}^n\eta_if(\xb_i)\right|\,,
    \end{align*}
    Then there exist constants $L,L'>0$ depending only on the sub-Gaussian parameter $\sigma$ such that $Z$ satisfies the concentration inequality
    \begin{align*}
        \pbb\left[Z(\delta)\ge \frac{L}{\sqrt{n}}\gamma_2(\delta)+t\right]\le L'e^{-\frac{nt^2}{4L^2\delta^2}}\ \ \forall t>0\,.
    \end{align*}
\end{theorem}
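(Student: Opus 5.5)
We will read Theorem~\ref{thm:concentration} as a direct instance of Dirksen's generic chaining tail bound \cite[Theorem~3.2]{Dirksen15}, applied to a process indexed by the set $T(\delta)=\{f\in\starhull(\partial(\Sigma_m^s)):\|f\|_{L^2(\mathrm{d}\mu_n)}\le\delta\}$ with the empirical metric $d(f,g)=\|f-g\|_{L^2(\mathrm{d}\mu_n)}$. Throughout we condition on the design $\{\xb_i\}_{i=1}^n$ and define $X_f=\frac{1}{n}\sum_{i=1}^n\eta_i f(\xb_i)$. Then $Z(\delta)=\sup_{f\in T(\delta)}|X_f|$.

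\textbf{Step 1: the $\psi_2$-increment condition.} Since the $\eta_i$ are independent, zero-mean, independent of $X$, and sub-Gaussian with parameter $\sigma$, the moment generating function factorizes. For fixed $f,g$ and every $\lambda\in\rbb$,
\[
\ebb\, e^{\lambda(X_f-X_g)}\le \exp\Bigl(\tfrac{\lambda^2\sigma^2}{2n^2}\sum_{i=1}^n(f-g)(\xb_i)^2\Bigr)=\exp\Bigl(\tfrac{\lambda^2\sigma^2}{2n}\|f-g\|_{L^2(\mathrm{d}\mu_n)}^2\Bigr).
\]
This gives $\|X_f-X_g\|_{\psi_2}\le C\sigma n^{-1/2}d(f,g)$. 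Equivalently, the rescaled process $\sqrt n X_f/(C\sigma)$ has sub-Gaussian increments with respect to $d$.

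\textbf{Step 2: the absolute value and the base point.} Dirksen's theorem controls $\sup_{f}|X_f-X_{f_0}|$. The set $T(\delta)$ is star-shaped, so it contains $0$, and we take $f_0=0$, which gives $X_0=0$. This handles the absolute value directly, with no further symmetrization step. (Alternatively, one could index over $T(\delta)\cup(-T(\delta))$, whose $\gamma_2$ is comparable to that of $T(\delta)$.) With these choices, \cite[Theorem~3.2]{Dirksen15} yields, for $u\ge1$,
\[
\pbb\Bigl[\sup_{f\in T(\delta)}|X_f|\ge C\tfrac{\sigma}{\sqrt n}\bigl(\gamma_2(T(\delta),d)+u\,\Delta_d(T(\delta))\bigr)\Bigr]\le e^{-u^2}.
\]
Here $\Delta_d(T(\delta))\le 2\delta$ is the diameter of $T(\delta)$.

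\textbf{Step 3: the change of variables.} We set $t=C\sigma u\cdot 2\delta/\sqrt n$, which gives $u^2=\frac{nt^2}{4C^2\sigma^2\delta^2}$. Taking $L=C\sigma$ (enlarged if needed so that $L\ge C$) puts the bound in the stated form $L'e^{-nt^2/(4L^2\delta^2)}$. The restriction $u\ge1$ is removed by choosing $L'\ge e$, because for $u<1$ the right-hand side is at least $1$ and the inequality is trivial.

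\textbf{Main obstacle.} The delicate point is measurability and the possibly uncountable index set $T(\delta)$. We will handle it by taking the supremum over a countable dense subset. This is legitimate because each $f\in\Sigma_m^s$ depends continuously on finitely many parameters, and $X_f$ depends on $f$ only through its values at the finitely many points $\xb_i$. The process therefore lives on a finite-dimensional, separable set $\{(f(\xb_1),\dots,f(\xb_n))\}\subset\rbb^n$. The other check is that $\gamma_2(\delta)$ is finite, which holds because $T(\delta)$ is a bounded subset of $\rbb^n$ under the empirical Euclidean metric.
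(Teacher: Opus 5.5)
Your proposal is correct and follows the same route as the paper: you verify, conditionally on the design, the $\psi_2$-increment bound $\lesssim \sigma n^{-1/2}\|f-g\|_{L^2(\mathrm{d}\mu_n)}$ and then invoke \cite[Theorem~3.2]{Dirksen15}. Your choice of base point $0\in T(\delta)$ does the job of the paper's ``symmetrized index class,'' and your explicit change of variables with $\Delta_d(T(\delta))\le 2\delta$, the removal of the restriction $u\ge1$ via $L'\ge e$, and the separability remark fill in details the paper leaves implicit.
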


\cite{yang2024nonparametricregressionusingoverparameterized} established generalization results under the $\ell_2$ path norm only for bounded noise. For bounded noise or Gaussian noise, one may directly use the classical Hoeffding inequality or Gaussian concentration inequality to obtain similar conclusions. Here we provide the proof under sub-Gaussian noise. In this case, the previous two concentration inequalities are no longer directly applicable, so we use the finer quantity $\gamma_2(\delta)$ to characterize the concentration property of the empirical process. 
\begin{lemma}\label{lem:oracle}
    Let $c_0\ge1$ be a constant, and suppose $f_m$ is the empirical risk minimizer such that for any $f\in\Sigma_m^s$,
    \begin{align*}
        \|f_m-f_\rho\|_{L^2(\mathrm{d} \mu_n)}^2+\lambda\|f_m\|_{\pscr_{1,s}}\le& c_0\left(\|f-f_\rho\|_{L^2(\mathrm{d} \mu_n)}^2+\lambda\|f\|_{\pscr_{1,s}}\right)+\frac{2}{n}\left|\sum_{i=1}^n\eta_i(f_m(\xb_i)-f(\xb_i))\right|\,.
    \end{align*}
    For a user-defined parameter $M>0$, let $\delta_n=\delta_n(M)$ satisfy
    \begin{align*}
        \frac{L}{\sqrt{n}}\gamma_2(\delta_n)\le\delta_n^2\,.
    \end{align*}
    Then there exist constants $c_1,L'>0$ such that if $\lambda\ge\frac{8\delta_n^2}{M}$, then
    \begin{align}\label{eq:oracle}
        \|f_m-f_\rho\|_{L^2(\mathrm{d} \mu_n)}^2+\lambda\|f_m\|_{\pscr_{1,s}}\le(1+2c_0)\inf_{f\in\Sigma_{m}^s}\left(\|f-f_\rho\|_{L^2(\mathrm{d} \mu_n)}^2+\lambda\|f\|_{\pscr_{1,s}}\right)+64\delta_n^2\,,
    \end{align}
    with probability at least $1-L'e^{-\frac{n\delta_n^2}{4L^2}}$, where the implied constants are independent of the sample.
\end{lemma}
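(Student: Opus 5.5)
The plan is the standard localized oracle-inequality argument of \cite[Chapter~13]{Wainwright_2019}, with the Gaussian concentration step replaced by the generic chaining tail bound of Theorem~\ref{thm:concentration}. Fix an arbitrary $f\in\Sigma_m^s$ and write $\Delta=f_m-f$. Since $\frac{1}{m}\sum_j a_j\sigma_s(\cdot)$ with $2m$ neurons contains differences of two width-$m$ networks, $\Delta$ lies in $\partial(\Sigma_m^s)$, the difference class underlying $T(\delta)$. Its path norm satisfies $\|\Delta\|_{\pscr_{1,s}}\le\|f_m\|_{\pscr_{1,s}}+\|f\|_{\pscr_{1,s}}$. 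The aim is to show that, on a high-probability event, the noise term $\frac{2}{n}|\sum_i\eta_i\Delta(\xb_i)|$ is at most a small fraction of $\|\Delta\|_{L^2(\mathrm{d}\mu_n)}^2$, plus $\delta_n^2$, plus a small multiple of $\lambda\|\Delta\|_{\pscr_{1,s}}$.

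First I apply Theorem~\ref{thm:concentration} at level $\delta=\delta_n$ with $t=\delta_n^2$. By the defining inequality of $\delta_n$, this gives the event
\[
    \mathcal{E}=\{Z(\delta_n)\le 2\delta_n^2\},
\]
with $\pbb(\mathcal{E}^c)\le L'e^{-n\delta_n^2/(4L^2)}$, which is exactly the stated probability. On $\mathcal{E}$, a peeling/scaling argument extends the bound to all elements of the star hull. If $\|g\|_{L^2(\mathrm{d}\mu_n)}>\delta_n$, then $\frac{\delta_n}{\|g\|}g\in T(\delta_n)$ by star-shapedness, so
\[
    \left|\frac{1}{n}\sum_i\eta_ig(\xb_i)\right|\le 2\delta_n\|g\|_{L^2(\mathrm{d}\mu_n)}.
\]
If $\|g\|_{L^2(\mathrm{d}\mu_n)}\le\delta_n$, the bound is at most $2\delta_n^2$. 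This is the same monotonicity of $\gamma_2(\delta)/\delta$ noted before the lemma.

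The remaining issue is that $\Delta$ itself need not belong to a fixed path-norm ball, and this is where $M$ and $\lambda\ge 8\delta_n^2/M$ enter. I read $\partial(\Sigma_m^s)$ as differences with path norm at most $M$, which is how $\delta_n=\delta_n(M)$ depends on $M$. Then I split into two cases.

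\emph{Case $\|\Delta\|_{\pscr_{1,s}}\le M$.} Apply the bound directly to $\Delta$.

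\emph{Case $\|\Delta\|_{\pscr_{1,s}}>M$.} Rescale to $\tilde\Delta=\frac{M}{\|\Delta\|_{\pscr_{1,s}}}\Delta$, which lies in the star hull. This yields
\[
    \frac{2}{n}\left|\sum_i\eta_i\Delta(\xb_i)\right|\le\frac{\|\Delta\|_{\pscr_{1,s}}}{M}\left(4\delta_n\|\tilde\Delta\|+4\delta_n^2\right).
\]
The term $\frac{4\delta_n^2}{M}\|\Delta\|_{\pscr_{1,s}}\le\frac{\lambda}{2}\|\Delta\|_{\pscr_{1,s}}$ is absorbed by the penalty. The cross term becomes $4\delta_n\|\Delta\|_{L^2(\mathrm{d}\mu_n)}$ after undoing the scaling, which works because the $L^2$ norm is homogeneous.

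In both cases, on $\mathcal{E}$,
\[
    \frac{2}{n}\left|\sum_i\eta_i\Delta(\xb_i)\right|\le 4\delta_n\|\Delta\|_{L^2(\mathrm{d}\mu_n)}+4\delta_n^2+\frac{\lambda}{2}\|\Delta\|_{\pscr_{1,s}}.
\]

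Finally I insert this into the hypothesis of the lemma. I use $4\delta_n\|\Delta\|\le\frac{1}{4}\|\Delta\|^2+16\delta_n^2$ and $\|\Delta\|^2\le 2\|f_m-f_\rho\|^2+2\|f-f_\rho\|^2$, both in $L^2(\mathrm{d}\mu_n)$. I also use $\frac{\lambda}{2}\|\Delta\|_{\pscr_{1,s}}\le\frac{\lambda}{2}\|f_m\|_{\pscr_{1,s}}+\frac{\lambda}{2}\|f\|_{\pscr_{1,s}}$. Moving half of $\|f_m-f_\rho\|^2$ and half of $\lambda\|f_m\|_{\pscr_{1,s}}$ to the left and multiplying by $2$ gives the right-hand side $(1+2c_0)(\|f-f_\rho\|^2+\lambda\|f\|_{\pscr_{1,s}})+C\delta_n^2$. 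Careful bookkeeping of the constants should make $C\le 64$. Since the event $\mathcal{E}$ does not depend on $f$, taking the infimum over $f\in\Sigma_m^s$ gives \eqref{eq:oracle}.

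The main obstacle is the rescaling in the case $\|\Delta\|_{\pscr_{1,s}}>M$. It requires that $\partial(\Sigma_m^s)$ really be the path-norm-bounded difference class, so that $\tilde\Delta$ lands in $T(\cdot)$. It also requires tracking the scaling factor consistently between the $L^2$ and path-norm bounds; if this fails, I would instead peel over dyadic shells of path norm $2^kM$ and take a union bound.
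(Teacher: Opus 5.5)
Your proof is correct, and it reaches the oracle inequality by a leaner route than the paper.

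\textbf{The paper's route.} The paper splits into four cases:
\begin{itemize}
\item a deterministic case, $\|g_f\|_{L^2(\mathrm{d}\mu_n)}^2+\lambda\|g_f\|_{\pscr_{1,s}}\le\delta_n^2$;
\item a case $\|f_m\|_{\pscr_{1,s}}\le M$, where $g_f=f_m-f$ is normalized by $\sqrt{\|g_f\|_{L^2(\mathrm{d}\mu_n)}^2+\lambda\|g_f\|_{\pscr_{1,s}}}$;
\item two cases with $\|f_m\|_{\pscr_{1,s}}>M$, where $g_f$ is rescaled by $M/\|f_m\|_{\pscr_{1,s}}$. When the rescaled function has empirical norm above $\delta_n$, the paper uses dyadic peeling over shells $2^k\delta_n<\|g\|_{L^2(\mathrm{d}\mu_n)}\le2^{k+1}\delta_n$, an extra $\frac1{16}\|g\|^2$ slack, and a union bound.
\end{itemize}

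\textbf{Your route.} You replace all of this with the single event $\{Z(\delta_n)\le2\delta_n^2\}$ and a pathwise star-shaped rescaling. This gives $\bigl|\frac1n\sum_i\eta_ig(\xb_i)\bigr|\le2\delta_n\max\{\|g\|_{L^2(\mathrm{d}\mu_n)},\delta_n\}$ simultaneously for every $g\in\starhull(\partial(\Sigma_m^s))$. You then make a two-way split on the path norm of the difference. The peeling is unnecessary here because the class is star-shaped and the bound you need is linear in $\|g\|$.

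\textbf{What this buys.}
\begin{itemize}
\item Your failure probability is exactly $L'e^{-n\delta_n^2/(4L^2)}$, as stated in the lemma. The paper's peeling case only yields $c_1e^{-n\delta_n^2/(4^7L^2)}$.
\item Your constant is $40\le64$.
\item You split on $\|f_m\|_{\pscr_{1,s}}+\|f\|_{\pscr_{1,s}}$ rather than on $\|f_m\|_{\pscr_{1,s}}$. So $\Delta$ or $\tilde\Delta$ lies in $\partial(\Sigma_m^s)$ for every $f\in\Sigma_m^s$. The paper's Cases 2--4 tacitly use $\|f\|_{\pscr_{1,s}}\le M$, which is harmless in its application with $f=\bar f_m$.
\end{itemize}

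\textbf{Two small tidy-ups.} First, define $\|\Delta\|_{\pscr_{1,s}}:=\|f_m\|_{\pscr_{1,s}}+\|f\|_{\pscr_{1,s}}$ explicitly, as the path norm of the concatenated $2m$-neuron parameterization, since the path norm depends on the parameterization. Then rescaling by $M/\|\Delta\|_{\pscr_{1,s}}$ visibly gives each rescaled piece path norm at most $M$. Second, drop the opening assertion that $\Delta\in\partial(\Sigma_m^s)$, which holds only in your first case.
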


\begin{proof}
    For any $f\in\Sigma_m^s$, write $g_f=f_m-f$. Then $\|g_f\|_{\pscr_{1,s}}\le\|f_m\|_{\pscr_{1,s}}+\|f\|_{\pscr_{1,s}}$. We divide the discussion into four cases.

    \textbf{Case 1}: $\|g_f\|_{L^2(\mathrm{d} \mu_n)}^2+\lambda\|g_f\|_{\pscr_{1,s}}\le\delta_n^2$. Then
    \begin{align*}
        \|f_m-f_\rho\|_{L^2(\mathrm{d} \mu_n)}^2+\lambda\|f_m\|_{\pscr_{1,s}}\le& 2\|f-f_\rho\|_{L^2(\mathrm{d} \mu_n)}^2+2\|g_f\|_{L^2(\mathrm{d} \mu_n)}^2+\lambda\|f\|_{\pscr_{1,s}}+\lambda\|g_f\|_{\pscr_{1,s}}\\
        \le&2\|f-f_\rho\|_{L^2(\mathrm{d} \mu_n)}^2+\lambda\|f\|_{\pscr_{1,s}}+2\delta_n^2\,.
    \end{align*}

    \textbf{Case 2}: $\|g_f\|_{L^2(\mathrm{d} \mu_n)}^2+\lambda\|g_f\|_{\pscr_{1,s}}>\delta_n^2$ and $\|f_m\|_{\pscr_{1,s}}\le M$. In this case $g_f\in\partial\Sigma_m^s$. Consider the event
    \begin{align*}
        A_1=\left\{\exists g\in\partial\Sigma_m^s:\|g\|_{L^2(\mathrm{d} \mu_n)}^2+\lambda\|g\|_{\pscr_{1,s}}>\delta_n^2,\left|\frac{1}{n}\sum_{i=1}^n\eta_ig(\xb_i)\right|>2\delta_n\sqrt{\|g\|_{L^2(\mathrm{d} \mu_n)}^2+\lambda\|g\|_{\pscr_{1,s}}}\right\}\,.
    \end{align*}
    If $A_1$ occurs, then there exists
    \begin{align*}
        \tilde{g}=\frac{\delta_n}{\sqrt{\|g\|_{L^2(\mathrm{d} \mu_n)}^2+\lambda\|g\|_{\pscr_{1,s}}}}g\in\starhull(\partial\Sigma_m^s)
    \end{align*}
    such that $\|\tilde{g}\|_{L^2(\mathrm{d} \mu_n)}\le\delta_n$ and
    \begin{align*}
        \left|\frac{1}{n}\sum_{i=1}^n\eta_i\tilde{g}(\xb_i)\right|=&\frac{\delta_n}{\sqrt{\|g\|_{L^2(\mathrm{d} \mu_n)}^2+\lambda\|g\|_{\pscr_{1,s}}}}\left|\frac{1}{n}\sum_{i=1}^n\eta_ig(\xb_i)\right|>2\delta_n^2\,.
    \end{align*}
    This implies that $\pbb[A_1]\le\pbb[Z_n(\delta_n)>2\delta_n^2]$, where
    \begin{align*}
        Z_n(\delta_n)=\sup_{\substack{g\in\starhull(\partial\Sigma_m^s),\\\|g\|_{L^2(\mathrm{d} \mu_n)}\le\delta_n}}\left|\frac{1}{n}\sum_{i=1}^n\eta_ig(\xb_i)\right|\,.
    \end{align*}
    is a sub-Gaussian empirical process. By Theorem~\ref{thm:concentration},
    \begin{align*}
        \pbb\left[Z_n(\delta_n)\ge \frac{L}{\sqrt{n}}\gamma_2(\delta_n)+t\right]\le L'e^{-\frac{nt^2}{4L^2\delta_n^2}}\,.
    \end{align*}
    Since by assumption $\frac{L}{\sqrt{n}}\gamma_2(\delta_n)\le \delta_n^2$, taking $t=\delta_n^2$ yields
    \begin{align*}
        \pbb[A_1]\le\pbb[Z_n(\delta_n)>2\delta_n^2]\le \pbb\left[Z_n(\delta_n)\ge \frac{L}{\sqrt n}\gamma_2(\delta_n)+\delta_n^2\right]\le L'e^{-\frac{n\delta_n^2}{4L^2}}\,.
    \end{align*}
    This shows that $\overline{A}_1$ holds with high probability; specifically,
    \begin{align*}
        \pbb\left[\left|\frac{1}{n}\sum_{i=1}^n\eta_ig_f(\xb_i)\right|\le 2\delta_n\sqrt{\|g_f\|_{L^2(\mathrm{d} \mu_n)}^2+\lambda\|g_f\|_{\pscr_{1,s}}}\right]\ge 1-L'e^{-\frac{n\delta_n^2}{4L^2}}\,.
    \end{align*}
    Combining this with the assumed inequality, it follows that with at least the same probability,
    \begin{align*}
        &\|f_m-f_\rho\|_{L^2(\mathrm{d} \mu_n)}^2+\lambda\|f_m\|_{\pscr_{1,s}}\\
        \le& c_0\left(\|f-f_\rho\|_{L^2(\mathrm{d} \mu_n)}^2+\lambda\|f\|_{\pscr_{1,s}}\right)+4\delta_n\sqrt{\|g_f\|_{L^2(\mathrm{d} \mu_n)}^2+\lambda\|g_f\|_{\pscr_{1,s}}}\\
        \le & c_0\left(\|f-f_\rho\|_{L^2(\mathrm{d} \mu_n)}^2+\lambda\|f\|_{\pscr_{1,s}}\right)+16\delta_n^2+\frac{1}{4}\|f_m-f\|_{L^2(\mathrm{d} \mu_n)}^2+\frac{1}{4}\lambda\|f_m-f\|_{\pscr_{1,s}}\\
        \le &c_0\left(\|f-f_\rho\|_{L^2(\mathrm{d} \mu_n)}^2+\lambda\|f\|_{\pscr_{1,s}}\right)+16\delta_n^2+\frac{1}{2}\|f_m-f_\rho\|_{L^2(\mathrm{d} \mu_n)}^2+\frac{1}{2}\|f-f_\rho\|_{L^2(\mathrm{d} \mu_n)}^2\\
        &+\frac{1}{4}\lambda\|f_m\|_{\pscr_{1,s}}+\frac{1}{4}\lambda\|f\|_{\pscr_{1,s}}\\
        \le&\left(\frac{1}{2}+c_0\right)\left(\|f-f_\rho\|_{L^2(\mathrm{d} \mu_n)}^2+\lambda\|f\|_{\pscr_{1,s}}\right)+16\delta_n^2+\frac{1}{2}\left(\|f_m-f_\rho\|_{L^2(\mathrm{d} \mu_n)}^2+\lambda\|f_m\|_{\pscr_{1,s}}\right)\,,
    \end{align*}
    or equivalently,
    \begin{align*}
        \|f_m-f_\rho\|_{L^2(\mathrm{d} \mu_n)}^2+\lambda\|f_m\|_{\pscr_{1,s}}\le(1+2c_0)\left(\|f-f_\rho\|_{L^2(\mathrm{d} \mu_n)}^2+\lambda\|f\|_{\pscr_{1,s}}\right)+32\delta_n^2\,.
    \end{align*}

    \textbf{Case 3}: $\|f_m\|_{\pscr_{1,s}}>M$ and $\|g_f\|_{L^2(\mathrm{d} \mu_n)}\le\frac{\delta_n\|f_m\|_{\pscr_{1,s}}}{M}$. In this case,
    \begin{align*}
        &\tilde{g}_f=\frac{M}{\|f_m\|_{\pscr_{1,s}}}g_f=\frac{M}{\|f_m\|_{\pscr_{1,s}}}(f_m-f)\in\partial\Sigma_m^s\,,\\
        &\|\tilde{g}_f\|_{L^2(\mathrm{d} \mu_n)}=\frac{M}{\|f_m\|_{\pscr_{1,s}}}\|g_f\|_{L^2(\mathrm{d} \mu_n)}\le \delta_n\,.
    \end{align*}
    By Case 2, $\pbb[Z_n(\delta_n)>2\delta_n^2]\le L'e^{-\frac{n\delta_n^2}{4L^2}}.$ Therefore,
    \begin{align*}
        \left|\frac{1}{n}\sum_{i=1}^n\eta_ig_f(\xb_i)\right|=\frac{\|f_m\|_{\pscr_{1,s}}}{M}\left|\frac{1}{n}\sum_{i=1}^{n}\eta_i\tilde{g}_f(\xb_i)\right|\le \frac{2\delta_n^2\|f_m\|_{\pscr_{1,s}}}{M}\,,
    \end{align*}
    holds with probability at least $1-L'e^{-\frac{n\delta_n^2}{4L^2}}$. Combining this with the basic inequality, we find that with at least the same probability,
    \begin{align*}
        &\|f_m-f_\rho\|_{L^2(\mathrm{d} \mu_n)}^2+\lambda\|f_m\|_{\pscr_{1,s}}\\
        \le& c_0\left(\|f-f_\rho\|_{L^2(\mathrm{d} \mu_n)}^2+\lambda\|f\|_{\pscr_{1,s}}\right)+\frac{4\delta_n^2}{M}\|f_m\|_{\pscr_{1,s}}\\
        \le& c_0\left(\|f-f_\rho\|_{L^2(\mathrm{d} \mu_n)}^2+\lambda\|f\|_{\pscr_{1,s}}\right)+\frac{\lambda}{2}\|f_m\|_{\pscr_{1,s}}\,,
    \end{align*}
    where the last step uses $\lambda\ge \frac{8\delta_n^2}{M}$. Hence,
    \begin{align*}
        \|f_m-f_\rho\|_{L^2(\mathrm{d} \mu_n)}^2+\lambda\|f_m\|_{\pscr_{1,s}}\le& 2c_0\left(\|f-f_\rho\|_{L^2(\mathrm{d} \mu_n)}^2+\lambda\|f\|_{\pscr_{1,s}}\right)\,.
    \end{align*}

    \textbf{Case 4}: $\|f_m\|_{\pscr_{1,s}}>M$ and $\|g_f\|_{L^2(\mathrm{d} \mu_n)}>\frac{\delta_n\|f_m\|_{\pscr_{1,s}}}{M}$. Under this assumption,
    \begin{align*}
        \tilde{g}_f=\frac{M}{\|f_m\|_{\pscr_{1,s}}}g_f\in\partial\Sigma_m^s,\ \ \|\tilde{g}_f\|_{L^2(\mathrm{d} \mu_n)}>\delta_n\,.
    \end{align*}
    We show that the event
    \begin{align*}
        A_2=\left\{\exists g\in\partial\Sigma_m^s:\|g\|_{L^2(\mathrm{d} \mu_n)}>\delta_n,\left|\frac{1}{n}\sum_{i=1}^n\eta_ig(\xb_i)\right|>2\delta_n\|g\|_{L^2(\mathrm{d} \mu_n)}+\frac{1}{16}\|g\|_{L^2(\mathrm{d} \mu_n)}^2\right\}
    \end{align*}
    occurs with small probability. To this end, for $k=0,1,\cdots$ define $t_k=2^k\delta_n$ and the event
    \begin{align*}
        A_2^k=\left\{\exists g\in\partial\Sigma_m^s:t_k<\|g\|_{L^2(\mathrm{d} \mu_n)}\le 2t_k,\left|\frac{1}{n}\sum_{i=1}^n\eta_ig(\xb_i)\right|>2\delta_n\|g\|_{L^2(\mathrm{d} \mu_n)}+\frac{1}{16}\|g\|_{L^2(\mathrm{d} \mu_n)}^2\right\}\,.
    \end{align*}
    Then $A_2=\bigcup_{k=0}^\infty A_2^k$. If $A_2^k$ occurs, then there exists $g\in\partial\Sigma_m^s$ such that $t_k<\|g\|_{L^2(\mathrm{d} \mu_n)}\le t_{k+1}$ and
    \begin{align*}
        \left|\frac{1}{n}\sum_{i=1}^n\eta_ig(\xb_i)\right|>&2\delta_n\|g\|_{L^2(\mathrm{d} \mu_n)}+\frac{1}{16}\|g\|_{L^2(\mathrm{d} \mu_n)}^2\\
        \ge&2\delta_nt_k+\frac{1}{16}t_k^2=t_{k+1}\delta_n+\frac{1}{64}t_{k+1}^2\,,
    \end{align*}
    and therefore
    \begin{align*}
        \pbb[A_2^k]\le\pbb\left[Z_n(t_{k+1})>t_{k+1}\delta_n+\frac{1}{64}t_{k+1}^2\right]\,.
    \end{align*}
    By Theorem~\ref{thm:concentration},
    \begin{align*}
        \pbb\left[Z_n(t_{k+1})\ge \frac{L}{\sqrt{n}}\gamma_2(t_{k+1})+t\right]\le L'e^{-\frac{nt^2}{4L^2t_{k+1}^2}}\,.
    \end{align*}
    Since $\starhull(\partial\Sigma_m^s)$ is star-shaped, $\frac{\gamma_2(t)}{t}$ is decreasing in $t$, and thus for any $t\ge\delta_n$,
    \begin{align*}
        \frac{L\gamma_2(t)}{\sqrt{n}}\le t\frac{L\gamma_2(\delta_n)}{\delta_n\sqrt{n}}\le t\delta_n\,,
    \end{align*}
    hence taking the deviation parameter in the preceding display to be $\frac{1}{64}t^2$ yields
    \begin{align*}
        \pbb\left[Z_n(t)>t\delta_n+\frac{1}{64}t^2\right]\le L'e^{-\frac{nt^2}{4^7L^2}}\,.
    \end{align*}
    By countable additivity of probability,
    \begin{align*}
        \pbb[A_2]\le\sum_{k=0}^\infty\pbb[A_2^k]\le\sum_{k=0}^\infty L'e^{-\frac{nt_k^2}{4^7L^2}}=L'\sum_{k=0}^\infty e^{-\frac{n\delta_n^2}{4^7L^2}4^k}\le c_1e^{-\frac{n\delta_n^2}{4^7L^2}}\,.
    \end{align*}
    This shows that, with the same probability,
    \begin{align*}
        \left|\frac{1}{n}\sum_{i=1}^n\eta_i\tilde g_f(\xb_i)\right|\le 2\delta_n\|\tilde g_f\|_{L^2(\mathrm{d} \mu_n)}+\frac{1}{16}\|\tilde g_f\|_{L^2(\mathrm{d} \mu_n)}^2\,.
    \end{align*}
    Multiplying both sides by $\frac{\|f_m\|_{\pscr_{1,s}}}{M}$ and using $\tilde g_f=\frac{M}{\|f_m\|_{\pscr_{1,s}}}g_f$, we get
    \begin{align*}
        \left|\frac{1}{n}\sum_{i=1}^n\eta_ig_f(\xb_i)\right|\le&2\delta_n\|g_f\|_{L^2(\mathrm{d} \mu_n)}+\frac{1}{16}\frac{M}{\|f_m\|_{\pscr_{1,s}}}\|g_f\|_{L^2(\mathrm{d} \mu_n)}^2\\
        \le&2\delta_n\|g_f\|_{L^2(\mathrm{d} \mu_n)}+\frac{1}{16}\|g_f\|_{L^2(\mathrm{d} \mu_n)}^2\,,
    \end{align*}
    where the last step uses the assumption $\|f_m\|_{\pscr_{1,s}}>M$. Combining this with the basic inequality, we find that with at least the same probability,
    \begin{align*}
        &\|f_m-f_\rho\|_{L^2(\mathrm{d} \mu_n)}^2+\lambda\|f_m\|_{\pscr_{1,s}}\\
        \le& c_0\left(\|f-f_\rho\|_{L^2(\mathrm{d} \mu_n)}^2+\lambda\|f\|_{\pscr_{1,s}}\right)+4\delta_n\|g_f\|_{L^2(\mathrm{d} \mu_n)}+\frac{1}{8}\|g_f\|_{L^2(\mathrm{d} \mu_n)}^2\\
        \le & c_0\left(\|f-f_\rho\|_{L^2(\mathrm{d} \mu_n)}^2+\lambda\|f\|_{\pscr_{1,s}}\right)+32\delta_n^2+\frac{1}{4}\|f_m-f_\rho\|_{L^2(\mathrm{d} \mu_n)}^2+\frac{1}{4}\|f-f_\rho\|_{L^2(\mathrm{d} \mu_n)}^2\\
        \le & c_0\left(\|f-f_\rho\|_{L^2(\mathrm{d} \mu_n)}^2+\lambda\|f\|_{\pscr_{1,s}}\right)+32\delta_n^2+\frac{1}{2}\|f_m-f_\rho\|_{L^2(\mathrm{d} \mu_n)}^2+\frac{1}{2}\|f-f_\rho\|_{L^2(\mathrm{d} \mu_n)}^2\\
        \le&\left(\frac{1}{2}+c_0\right)\left(\|f-f_\rho\|_{L^2(\mathrm{d} \mu_n)}^2+\lambda\|f\|_{\pscr_{1,s}}\right)+32\delta_n^2+\frac{1}{2}\left(\|f_m-f_\rho\|_{L^2(\mathrm{d} \mu_n)}^2+\lambda\|f_m\|_{\pscr_{1,s}}\right)\,,
    \end{align*}
    and rearranging gives
    \begin{align*}
        \|f_m-f_\rho\|_{L^2(\mathrm{d} \mu_n)}^2+\lambda\|f_m\|_{\pscr_{1,s}}\le(1+2c_0)\left(\|f-f_\rho\|_{L^2(\mathrm{d} \mu_n)}^2+\lambda\|f\|_{\pscr_{1,s}}\right)+64\delta_n^2\,.
    \end{align*}
    This completes the proof of Lemma~\ref{lem:oracle}.
\end{proof}

Lemma~\ref{lem:oracle} is in fact an oracle inequality. Equation~\eqref{eq:oracle} shows that the empirical error of the empirical risk minimizer $f_m$ can be decomposed into two parts: one is the approximation error
$\inf_{f\in\Sigma_m^s}\left(\|f-f_\rho\|_{L^2(\mathrm{d} \mu_n)}^2+\lambda\|f\|_{\pscr_{1,s}}\right)$,
and the other is the complexity term $64\delta_n^2$ of the model class. The first term is controlled by suitable approximation bounds for the target function class, whereas the second is handled by tools from statistical learning theory.

Equation~\eqref{eq:oracle} also provides an upper bound for the $\ell^1$ path norm. As an example, suppose that $Y$ has a uniform bound $B>0$ (in which case ${\etab}$ is bounded and hence sub-Gaussian). In general, taking $f=0$ in the basic inequality yields
\begin{align*}
    \|f_m\|_{\pscr_{1,s}}\le \frac{1}{n\lambda}\sum_{i=1}^n y_i^2\le \frac{B^2}{\lambda}\,,
\end{align*}
but this estimate is quite crude. In fact, if we define
\begin{align*}
    f^*=\arg\min_{f\in\Sigma_n^s}\|f-f_\rho\|_{L^2(\mathrm{d} \mu)}^2+\lambda\|f\|_{\pscr_{1,s}}\,,
\end{align*}
then by definition,
$$\|f^*\|_{\pscr_{1,s}}\le\frac{\inf_{f\in\Sigma_n^s}\left(\|f-f_\rho\|_{L^2(\mathrm{d} \mu)}^2+\lambda\|f\|_{\pscr_{1,s}}\right)}{\lambda}\,.$$
Since $f_m$ is an approximation of $f^*$, we expect $\|f_m\|_{\pscr_{1,s}}$ to admit a similar upper bound, and this upper bound is much smaller than $\frac{B^2}{\lambda}$. Indeed, if $\lambda$ is chosen appropriately (for example, so that $\lim_{m\to\infty}\lambda(m)=0$), then
$\inf_{f\in\Sigma_n^s}\left(\|f-f_\rho\|_{L^2(\mathrm{d} \mu)}^2+\lambda\|f\|_{\pscr_{1,s}}\right)\to 0$
while $B^2$ is a fixed constant. 
This phenomenon has been studied in \citep{steinwart2008support,ShiFengZhou2011,leishi19}, where iterative methods were proposed to improve this upper bound. However, using the oracle inequality above, we can directly obtain a better upper bound. Specifically, if we take $\lambda=\frac{8\delta_n^2}{M}$, then with high probability
\begin{align*}
    \|f_m\|_{\pscr_{1,s}}\lesssim \frac{\inf_{f\in\Sigma_m^s}\|f-f_\rho\|_{L^\infty(\Omega)}^2}{\lambda}+M\,,
\end{align*}
which is already very close to
$\frac{\inf_{f\in\Sigma_n^s}\left(\|f-f_\rho\|_{L^2(\mathrm{d} \mu)}^2+\lambda\|f\|_{\pscr_{1,s}}\right)}{\lambda}$.

\subsubsection*{Step 2: Localization and the uniform law of large numbers}
In the previous subsection, we treated the sample $\{(\xb_i,y_i)\}_{i=1}^n$ as fixed and obtained an upper bound on the empirical error. However, what we are actually concerned with is the generalization error. In this subsection, we regard the sample as random and use the localization technique and uniform law of large numbers introduced in \cite[Chapter 14]{Wainwright_2019} to analyze the upper bound on the generalization error. 

Since the $\xb_i$ are independent and identically distributed, for any square-integrable $f$ we have
\begin{align*}
    \ebb[\|f\|_{L^2(\mathrm{d} \mu_n)}^2]=\ebb\left[\frac{1}{n}\sum_{i=1}^n f(\xb_i)^2\right]=\ebb[f(\xb)^2]=\|f\|_{L^2(\mathrm{d} \mu)}^2\,.
\end{align*}
Therefore, by the law of large numbers, under suitable conditions $\|f\|_{L^2(\mathrm{d} \mu_n)}$ converges to $\|f\|_{L^2(\mathrm{d} \mu)}$. For example, if $f$ is bounded, then Hoeffding's inequality \cite[Proposition 2.5]{Wainwright_2019} yields
\begin{align*}
    \pbb\left[\left|\|f\|_{L^2(\mathrm{d} \mu_n)}^2-\|f\|_{L^2(\mathrm{d} \mu)}^2\right|\ge t\right]\le 2e^{-cnt^2},\ \ \forall t>0\,.
\end{align*}
Using the uniform law of large numbers, one can make this tail bound hold uniformly over function classes satisfying certain conditions. The following theorem is taken from \cite[Theorem 14.1, Proposition 14.25]{Wainwright_2019} or \cite[Lemma 7]{yang2024nonparametricregressionusingoverparameterized}.
\begin{theorem}\label{thm:uniformlawoflarge}
    Let the function class $\fcal$ be star-shaped and uniformly bounded by some constant $B>0$. Let $\varepsilon_n$ satisfy
    \begin{align*}
        \gcal_n(\fcal,\varepsilon_n,\varepsilon)\le\frac{\varepsilon_n^2}{B}\,,
    \end{align*}
    where $\varepsilon$ denotes independent identically distributed Rademacher random variables. Then there exist constants $c_1,c_2>0$ such that
    \begin{align*}
        \|f\|_{L^2(\mathrm{d} \mu)}^2\le 2\|f\|_{L^2(\mathrm{d} \mu_n)}^2+\varepsilon_n^2,\ \ \forall f\in\fcal\,,
    \end{align*}
    with probability at least $1-c_1e^{-\frac{c_2n\varepsilon_n^2}{B^2}}$.
\end{theorem}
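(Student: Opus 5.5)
The plan is to follow the localization argument of \cite[Theorem~14.1]{Wainwright_2019}: reduce the uniform comparison of $\|f\|_{L^2(\mathrm{d}\mu)}^2$ and $\|f\|_{L^2(\mathrm{d}\mu_n)}^2$ to a concentration inequality for a supremum of a bounded empirical process, peeled over dyadic shells. First I would normalize to $B=1$ by rescaling $\fcal\mapsto\fcal/B$. The critical inequality then reads $\gcal_n(\fcal/B,\varepsilon_n/B,\varepsilon)\le(\varepsilon_n/B)^2$, because $\gcal_n$ is homogeneous under this rescaling. So it suffices to treat $\|f\|_\infty\le1$ with $\gcal_n(\fcal,\varepsilon_n,\varepsilon)\le\varepsilon_n^2$, and the stated bound follows on scaling back.

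Next I would introduce the population-localized deviation
\[
Z_n(r)=\sup_{f\in\fcal,\ \|f\|_{L^2(\mathrm{d}\mu)}\le r}\left|\|f\|_{L^2(\mathrm{d}\mu_n)}^2-\|f\|_{L^2(\mathrm{d}\mu)}^2\right|
\]
and bound it in two steps.

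\textbf{Expectation.} Symmetrization gives $\ebb Z_n(r)\le 2\ebb\sup\frac1n\sum\varepsilon_i f(\xb_i)^2$. Since $t\mapsto t^2$ is $2$-Lipschitz on $[-1,1]$, Ledoux--Talagrand contraction reduces this to the Rademacher complexity of $\{f:\|f\|_{L^2(\mathrm{d}\mu)}\le r\}$. This is a population-localized Rademacher complexity, whereas the hypothesis is stated for the empirically localized $\gcal_n$. I would bridge the two as in \cite[Proposition~14.25]{Wainwright_2019}: on the event where the empirical and population norms are comparable at scale $r\ge\varepsilon_n$, the population ball sits inside an empirical ball of radius $\lesssim r$. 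Star-shapedness, via the monotonicity of $\gcal_n(\delta)/\delta$ from the earlier lemma, then yields $\ebb Z_n(r)\lesssim r\varepsilon_n$ for all $r\ge\varepsilon_n$.

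\textbf{Concentration.} The summands $f(\xb_i)^2$ lie in $[0,1]$ and have variance at most $\ebb f^4\le r^2$. Talagrand's (Bousquet's) inequality therefore gives $Z_n(r)\le \ebb Z_n(r)+r\sqrt{2t/n}+t/n$ with probability at least $1-e^{-t}$. Taking $t\asymp nr^2$ gives $Z_n(r)\le \frac{1}{4}r^2$ at each fixed $r\ge C\varepsilon_n$, with failure probability $e^{-cnr^2}$.

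To make this uniform over $f$, I would peel over the shells $2^{k-1}\varepsilon_n<\|f\|_{L^2(\mathrm{d}\mu)}\le 2^k\varepsilon_n$, or equivalently rescale each $f$ onto the sphere $\|f\|_{L^2(\mathrm{d}\mu)}=\varepsilon_n$ using star-shapedness. Summing the geometric tails gives total failure probability $c_1e^{-c_2n\varepsilon_n^2}$. On the complementary event I would split into two cases. If $\|f\|_{L^2(\mathrm{d}\mu)}\ge\varepsilon_n$, the shell bound gives $\|f\|_{L^2(\mathrm{d}\mu)}^2\le\|f\|_{L^2(\mathrm{d}\mu_n)}^2+\frac12\|f\|_{L^2(\mathrm{d}\mu)}^2$, hence $\|f\|_{L^2(\mathrm{d}\mu)}^2\le2\|f\|_{L^2(\mathrm{d}\mu_n)}^2$. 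If $\|f\|_{L^2(\mathrm{d}\mu)}<\varepsilon_n$, the bound $\|f\|_{L^2(\mathrm{d}\mu)}^2\le\varepsilon_n^2$ holds trivially. In both cases $\|f\|_{L^2(\mathrm{d}\mu)}^2\le 2\|f\|_{L^2(\mathrm{d}\mu_n)}^2+\varepsilon_n^2$, and rescaling by $B$ restores the stated exponent $n\varepsilon_n^2/B^2$.

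The main obstacle is the bridge between the empirical localization in the hypothesis and the population localization needed for $\ebb Z_n(r)$. The plan is to handle it as Wainwright does: first establish a concentration of the empirical critical radius around its population counterpart, then apply the above argument with a constant-factor loss absorbed into $c_1$ and $c_2$.
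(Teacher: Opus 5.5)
Your route is the one the paper relies on. The paper gives no proof of its own here; it only cites \cite[Theorem~14.1, Proposition~14.25]{Wainwright_2019}, and your symmetrization, contraction, Bousquet and star-shaped rescaling steps are the proof of Theorem~14.1.

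The step that fails as written is the bridge in your \textbf{Expectation} paragraph. You bound the population-localized complexity by working ``on the event where the empirical and population norms are comparable at scale $r$''. That event is essentially the uniform comparison you are trying to prove. You cannot restrict $\ebb Z_n(r)$ to it without bounding the probability of its complement, and that requires the conclusion again.

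Your closing fix (first show that the empirical critical radius concentrates around the population one, then run the argument) has the dependency reversed if you intend to prove it. In \cite{Wainwright_2019}, Proposition~14.25 is itself obtained from Theorem~14.1 stated with the \emph{population} critical radius. The non-circular order is as follows.
\begin{itemize}
\item Define $\delta_n$ through $\bar{\mathcal{R}}_n(\delta)\le\delta^2/B$, where $\bar{\mathcal{R}}_n(\delta)=\ebb_{\xb,\varepsilon}\sup_{f\in\fcal,\|f\|_{L^2(\mathrm{d}\mu)}\le\delta}|\frac1n\sum_i\varepsilon_if(\xb_i)|$.
\item Star-shapedness then gives $\ebb Z_n(r)\lesssim B\bar{\mathcal{R}}_n(r)\le r\delta_n$ deterministically for $r\ge\delta_n$, so your concentration and rescaling steps go through with no event at all.
\item Only afterwards invoke Proposition~14.25 to get $\delta_n\le 4\hat\delta_n\le 4\varepsilon_n$.
\end{itemize}

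Alternatively, since the paper takes the hypothesis to hold for every design, you can bridge directly by a self-bounding argument. Set $\hat r=\sup_{f\in\fcal,\|f\|_{L^2(\mathrm{d}\mu)}\le\delta}\|f\|_{L^2(\mathrm{d}\mu_n)}$. The population ball lies inside the empirical ball of radius $\hat r$, so star-shapedness gives $\bar{\mathcal{R}}_n(\delta)\le\frac{\varepsilon_n}{B}\,\ebb[\hat r+\varepsilon_n]$. On the other hand, $\ebb\hat r^2\le\delta^2+\ebb Z_n(\delta)\lesssim\delta^2+B\bar{\mathcal{R}}_n(\delta)$. Solving this quadratic inequality yields $\bar{\mathcal{R}}_n(\delta)\lesssim\delta\varepsilon_n/B$ for $\delta\ge\varepsilon_n$, which is exactly the bound $\ebb Z_n(r)\lesssim r\varepsilon_n$ you wanted.

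A smaller point concerns constants. Your bound $Z_n(r)\le r^2/4$ is only available for $r\ge C\varepsilon_n$, and the Proposition~14.25 route loses a factor $4$ in the radius. So the case split must be at $\|f\|_{L^2(\mathrm{d}\mu)}\ge C\varepsilon_n$, and what you actually obtain is $\|f\|_{L^2(\mathrm{d}\mu)}^2\le 2\|f\|_{L^2(\mathrm{d}\mu_n)}^2+C^2\varepsilon_n^2$. This loss sits in the additive term and cannot be absorbed into $c_1,c_2$ as you claim. The paper's cited route incurs the same loss. It is harmless there, because the inequality is only used up to constants, but the literal ``$+\varepsilon_n^2$'' in the statement is not what either argument delivers.
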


Since for any $f\in\Sigma_{m,M}^s$ we have
\begin{align*}
    \pi_{2B}(f_m-f)\in\starhull(\pi_{2B}(\Sigma_{2m,\|f_m\|_{\pscr_{1,s}}+M}^s))\subset\starhull(\pi_{2B}(\bscr_s(\|f_m\|_{\pscr_{1,s}}+M)))\,,
\end{align*}
the theorem above implies that
\begin{align*}
    \|\pi_{2B}(f_m-f)\|_{L^2(\mathrm{d} \mu)}^2\le 2\|\pi_{2B}(f_m-f)\|_{L^2(\mathrm{d} \mu_n)}^2+\varepsilon_n^2,\ \ \forall f\in\Sigma_{m,M}^s\,,
\end{align*}
holds with high probability.
\subsubsection*{Step 3: Combining the above analysis to complete the upper bound for the generalization error}
\par\noindent{\bf Proof of Theorem~\ref{thm:upperbound}.}
    We first determine $\delta_n=\delta_n(M)$ satisfying $\frac{L\gamma_2(\delta_n)}{\sqrt{n}}\le\delta_n^2$, where $M>0$ is to be chosen. Without loss of generality, assume that $n$ is sufficiently large so that $\delta_n\le M$. Since $\starhull(\partial\Sigma_m^s)\subset\bscr_s(2M)$, we have
    \begin{align*}
        \frac{L\gamma_2(\delta_n)}{\sqrt{n}}\lesssim&\frac{1}{\sqrt{n}}\int_0^{2\delta_n}\sqrt{\log\mathcal{N}(u,\bscr_s(2M),\|\cdot\|_{L^2(\mathrm{d} \mu_n)})}du\\
        \le&\frac{1}{\sqrt{n}}\int_0^{2\delta_n}\sqrt{\log\mathcal{N}(u,\bscr_s(2M),\|\cdot\|_{L^2(\Omega)})}du\\
        \lesssim&\frac{1}{\sqrt{n}}\int_0^{\delta_n}\left(\frac{u}{M}\right)^{-\frac{d}{d+2s+1}}\sqrt{\log\left(1+\frac{M}{u}\right)}du\\
        \lesssim&\frac{M^{\frac{d}{d+2s+1}}}{\sqrt{n}}\int_0^{\delta_n}u^{-\frac{d}{d+2s+1}}\left(\sqrt{\log2M}+\sqrt{\log\frac{1}{u}}\right)du\\
        \lesssim&\delta_n^{\frac{2s+1}{d+2s+1}}M^{\frac{d}{d+2s+1}}n^{-\frac{1}{2}}\sqrt{\log\frac{nM}{\delta_n}}\,.
    \end{align*}
    Thus we may take
    \begin{align*}
    \delta_n^2\asymp n^{-\frac{d+2s+1}{2d+2s+1}}M^{\frac{2d}{2d+2s+1}}\log nM\,,
\end{align*}
    and denote the approximation error of the path-norm constrained class by
    \begin{align*}
    \escr(M)=\inf_{f\in\Sigma_{m,M}^s}\|f-f_\rho\|_{L^\infty(\Omega)}^2\,.
\end{align*}
    By Lemma~\ref{lem:oracle}, taking $\lambda=\frac{8\delta_n^2}{M}$ yields
    \begin{align*}
    \|f_m-f_\rho\|_{L^2(\mathrm{d} \mu_n)}^2+\lambda\|f_m\|_{\pscr_{1,s}}\lesssim \escr(M)+\delta_n^2\,,
\end{align*}
    and in particular,
    \begin{align*}
    \|f_m\|_{\pscr_{1,s}}\lesssim\frac{\escr(M)+\delta_n^2}{\lambda}\lesssim\frac{\escr(M)}{\lambda}+M\,.
\end{align*}
    By the approximation result of \citep{MaoSiegelXu2026,Siegel_2022sharp}, there exists a $\bar f_m\in\Sigma_{m}^s$ such that
    \begin{align*}
    \|\bar f_m-f_\rho\|_{L^\infty(\Omega)}\lesssim \begin{cases}
        m^{-\frac{1}{2}-\frac{2s+1}{2d}}\,,&\alpha\ge s+\frac{d+1}{2}\\
        m^{-\frac{\alpha}{d}}\,,&\alpha\in\nbb\cap\left(0,s+\frac{d+1}{2}\right)
    \end{cases}\,,
\end{align*}
    but in order to estimate $\escr(M)$, we also need to estimate the $\ell^1$ path norm $\|\bar f_m\|_{\pscr_{1,s}}$ of the network $\bar f_m$. From the proof of \citep{MaoSiegelXu2026,Siegel_2022sharp}, we can choose
    $\bar f_m(\xb)=\frac{1}{m}\sum_{i=1}^m ma_i\sigma_s(\wb_i\cdot \xb+b_i)$
    satisfying $\sum_{i=1}^m|a_i|\le \widetilde{M},\wb_i\in\sbb^{d-1},b_i\in[-\sqrt{d},\sqrt{d}]$, where
    \begin{align*}
        \widetilde{M}=\begin{cases}
            Cm^{\frac{2s+d+1-2\alpha}{2d}}\,,&\alpha<s+\frac{d+1}{2}\text{ and }\alpha\in\nbb\\
            1\,,&\alpha\ge s+\frac{d+1}{2}
        \end{cases}\,,
    \end{align*}
    and therefore the $\ell^1$ path norm of such a $\bar f_m$ satisfies
    \begin{align*}
        \|\bar f_m\|_{\pscr_{1,s}}\le 2^sd^{\frac{s}{2}}\widetilde{M}\,,
    \end{align*}
    which means that as long as we take $M\ge2^sd^{\frac{s}{2}}\widetilde{M}$, we have
    \begin{align*}
        \escr(M)\le\|\bar f_m-f_\rho\|_{L^\infty(\Omega)}^2\lesssim \begin{cases}
            m^{-\frac{d+2s+1}{d}}\,,&\alpha\ge s+\frac{d+1}{2}\\
            m^{-\frac{2\alpha}{d}}\,,&\alpha\in\nbb\cap\left(0,s+\frac{d+1}{2}\right)
        \end{cases}\,.
    \end{align*}
    If $\alpha\ge s+\frac{d+1}{2}$, taking
$$M=2^sd^{\frac{s}{2}}\asymp 1,\ \ m\gtrsim n^{\frac{d}{2d+2s+1}}$$
    yields
    \begin{align*}
    \escr(M)\lesssim n^{-\frac{d+2s+1}{2d+2s+1}},\ \ \delta_n^2\asymp n^{-\frac{d+2s+1}{2d+2s+1}}\log n,\ \ \lambda=\frac{8\delta_n^2}{M}\asymp n^{-\frac{d+2s+1}{2d+2s+1}}\log n\,.
\end{align*}
    If $\alpha\in\nbb\cap(0,s+\frac{d+1}{2})$, taking
$$M=2^sd^{\frac{s}{2}}m^{\frac{2s+d+1-2\alpha}{2d}},\ \ m\gtrsim n^{\frac{d}{2\alpha+d}}$$
    yields
    \begin{align}
    \escr(M)\lesssim n^{-\frac{2\alpha}{2\alpha+d}},\ \ \delta_n^2\asymp n^{-\frac{2\alpha}{2\alpha+d}}\log n,\ \ \lambda=\frac{8\delta_n^2}{M}\asymp n^{-\frac{1}{2}-\frac{2s+1}{4\alpha+2d}}\log n\,.\label{eq:parameters2}
\end{align}
    In both cases, we have $\escr(M)\lesssim \lambda M$, hence $\|f_m\|_{\pscr_{1,s}}\lesssim M$. Taking $\fcal=\starhull(\pi_{2B}(\bscr_s(\|f_m\|_{\pscr_{1,s}}+M)))$ in Theorem~\ref{thm:uniformlawoflarge}, we obtain
    \begin{align*}
        \|\pi_{2B}(f_m-f)\|_{L^2(\mathrm{d} \mu)}^2\lesssim \|\pi_{2B}(f_m-f)\|_{L^2(\mathrm{d} \mu_n)}^2+\varepsilon_n^2\ \ \forall f\in\Sigma_{m,M}^s\,,
    \end{align*}
    with probability at least $1-c_1e^{-\frac{c_2n\varepsilon_n^2}{B^2}}$, where
    \begin{align*}
        \varepsilon_n^2\asymp n^{-\frac{d+2s+1}{2d+2s+1}}M^{\frac{2d}{2d+2s+1}}\log nM\asymp\delta_n^2\,.
    \end{align*}
    Combining everything above and taking $f=\bar f_m$, we obtain
    \begin{align*}
        \|\pi_B f_m-f_\rho\|_{L^2(\mathrm{d} \mu)}^2\lesssim& \escr(M)+\delta_n^2+\varepsilon_n^2\,,
    \end{align*}
    with probability at least $1-c_1e^{-c_2n\delta_n^2}$. The desired conclusion follows from \eqref{eq:parameters1} and \eqref{eq:parameters2}. This completes the proof of Theorem~\ref{thm:upperbound}.
\hfill\BlackBox\\[2mm]

\subsection*{Proofs of Theorems~\ref{thm:lowerbound1} and~\ref{thm:lowerbound2}}
To estimate the minimax lower bound, we first need some preliminaries. For each probability distribution $\rho$ on $\Omega\times\rbb$, we can define a regression function
\begin{align*}
    f_\rho(x)=\int_{\rbb}y \mathrm{d}\rho(y|x)\,.
\end{align*}
Under our assumptions, we consider only those $\rho$ such that $f_\rho\in W^{\alpha,\infty}(\Omega)$, and denote the collection of all such $\rho$ by $\pcal$. Let $\fcal=\{f_\rho:\rho\in\pcal\}\subset W^{\alpha,\infty}(\Omega)$. We say that $\{f_1,\cdots,f_T\}\subset\fcal$ is a \emph{$2\delta$-separated set} if for any $i\neq j$,
\begin{align*}
    \|f_i-f_j\|_{L^2(\mathrm{d} \mu)}\ge 2\delta\,.
\end{align*}
Write $f_i=f_{\rho_i}$, where $\rho_i\in\pcal$. We consider the \emph{hypothesis testing problem} generated by the family of distributions $\{\rho_1,\cdots,\rho_T\}$:
\begin{enumerate}
    \item Select an index $J=j$ uniformly at random from the index set $[T]=\{1,\cdots,T\}$.
    \item Given $J=j$, draw a random sample $Z$ from $\rho_j$, that is, $(Z|J=j)\sim\rho_j$.
\end{enumerate}
Let $\bar{\rho}$ denote the joint distribution of $(Z,J)$. Note that the marginal distribution of $Z$ is $\bar{\rho}_Z=\frac{1}{T}\sum_{j=1}^T\rho_j$. Given a sample $Z$, the hypothesis testing problem asks us to determine the random index $J$ from this sample. A \emph{test function} is a mapping $\Psi:\mathcal{Z}\to[T]$, and its error probability is defined by $\bar{\rho}(\Psi(Z)\neq J)$. The hypothesis testing problem is closely related to the minimax lower bound, namely \cite[Proposition 15.1]{Wainwright_2019},
\begin{align*}
    \inf_{\hat{f}}\sup_{f_\rho\in\fcal}\ebb\left[\|\hat{f}-f_\rho\|_{L^2(\mathrm{d} \mu)}^2\right]\ge \delta^2\inf_{\Psi}\bar{\rho}(\Psi(Z)\neq J)\,,
\end{align*}
where the infimum on the right-hand side is taken over all test functions. The difficulty of the hypothesis testing problem depends on the dependence between $Z$ and $J$. If $Z$ and $J$ are independent, the observed sample $Z$ carries no information about the index $J$. 

To measure the dependence between two probability measures, we introduce some information-theoretic concepts. Let $\pbb,\qbb$ be probability measures on a measurable space $\mathcal{X}$ with base measure $\nu$. Since they always admit densities with respect to the base measure $\nu=\frac{1}{2}(\pbb+\qbb)$, we may assume without loss of generality that their density functions are $p,q$, respectively. The commonly used \emph{Kullback--Leibler divergence}\label{sym:first:KL} in information theory is defined by
\begin{align*}
    D(\pbb\|\qbb)=\int_{\mathcal{X}}p(x)\log\frac{p(x)}{q(x)}d\nu(x)\,,
\end{align*}
which quantifies discrepancy between probability measures, although it is not a metric because it is generally not symmetric: $D(\pbb\|\qbb)\neq D(\qbb\|\pbb)$. Two random variables $(Z,J)$ are independent if and only if their joint distribution $\bar{\rho}$ equals the product of their marginals $\bar{\rho}_Z\otimes\bar{\rho}_J$. This motivates measuring the dependence between $Z$ and $J$ through the Kullback--Leibler divergence. Define the \emph{mutual information}\label{sym:first:mutual_info} of $(Z,J)$ by
\begin{align*}
    I(Z;J)=D(\bar{\rho}\|\bar{\rho}_Z\otimes\bar{\rho}_J)\,.
\end{align*}
By the properties of the Kullback--Leibler divergence, we have $I(Z;J)\ge 0$, and $I(Z;J)=0$ if and only if $Z$ and $J$ are independent. 
\begin{lemma}[Fano, {\cite[Proposition 15.12]{Wainwright_2019}}]\label{lem:fano}
    The minimax risk of the function class $\fcal$ satisfies
    \begin{align*}
        \inf_{\hat{f}}\sup_{f_\rho\in\fcal}\ebb\left[\|\hat{f}-f_\rho\|_{L^2(\mathrm{d} \mu)}^2\right]\ge \delta^2\left(1-\frac{I(Z;J)+\log 2}{\log T}\right)\,.
    \end{align*}  
\end{lemma}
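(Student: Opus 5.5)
The plan is to prove the lemma in two stages. First I reduce estimation to the multiple hypothesis testing problem defined above. Then I lower bound the error probability of every test through the mutual information $I(Z;J)$. Throughout, $\{f_1,\dots,f_T\}$ is the $2\delta$-separated set and $J$ is uniform on $[T]$. I take $T\ge 2$, since for $T=1$ the right-hand side is nonpositive (or undefined) and there is nothing to prove.

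\textbf{Step 1 (estimation to testing).} Fix any estimator $\hat f$ and define the minimum-distance test $\Psi(Z)=\arg\min_{j\in[T]}\|\hat f-f_j\|_{L^2(\mathrm{d}\mu)}$, with ties broken arbitrarily.

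If $\Psi(Z)\neq J$, the triangle inequality and separation give
\[
2\delta\le\|f_{\Psi}-f_J\|\le\|\hat f-f_\Psi\|+\|\hat f-f_J\|\le 2\|\hat f-f_J\|,
\]
so $\|\hat f-f_J\|_{L^2(\mathrm{d}\mu)}\ge\delta$.

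The supremum over $\fcal$ dominates the average over the $T$ hypotheses. Applying Markov's inequality to the average then gives
\[
\sup_{f_\rho\in\fcal}\ebb\|\hat f-f_\rho\|^2\ \ge\ \frac1T\sum_{j=1}^T\ebb_{\rho_j}\|\hat f-f_j\|^2\ \ge\ \delta^2\,\bar\rho(\Psi(Z)\neq J).
\]
Taking the infimum over $\hat f$ yields exactly the reduction \cite[Proposition 15.1]{Wainwright_2019} quoted above. It therefore remains to show that every test $\Psi$ satisfies
\[
\bar\rho(\Psi(Z)\neq J)\ \ge\ 1-\frac{I(Z;J)+\log2}{\log T}.
\]

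\textbf{Step 2 (Fano's inequality).} Let $\hat J=\Psi(Z)$ and $P_e=\bar\rho(\hat J\neq J)$. Let $E=\mathbf 1\{\hat J\neq J\}$. Expanding $H(E,J\mid\hat J)$ by the chain rule in two ways gives
\[
H(J\mid\hat J)\ \le\ H(E)+P_e\log(T-1)\ \le\ \log 2+P_e\log T.
\]
Here I use that $E$ is determined by $(J,\hat J)$, that $H(E)\le\log2$, and that when $E=1$ the variable $J$ ranges over at most $T-1$ values.

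On the other side, $J$ is uniform, so $H(J\mid\hat J)=\log T-I(J;\hat J)$. Since $J\to Z\to\hat J$ is a Markov chain, the data processing inequality gives $I(J;\hat J)\le I(Z;J)$. Combining the two bounds,
\[
\log T-I(Z;J)\le\log2+P_e\log T,
\]
which rearranges to the displayed bound on $P_e$. Inserting it into Step 1 proves the lemma.

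\textbf{Main obstacle.} The only delicate point is that $Z$ is a sample of continuous variables, so differential entropies of $Z$ must be avoided. I would handle this by applying entropies only to the discrete pair $(J,\hat J)$. Mutual information is always taken in its KL form $I(Z;J)=D(\bar\rho\,\|\,\bar\rho_Z\otimes\bar\rho_J)$. The data processing step $I(J;\Psi(Z))\le I(Z;J)$ then follows from the monotonicity of KL divergence under the measurable map $(z,j)\mapsto(\Psi(z),j)$, which pushes forward both the joint law and the product of marginals. With this in place the argument is purely finite and discrete, and every other step is routine.
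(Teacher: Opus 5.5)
Your proof is correct and follows essentially the same route as the source the paper relies on. The paper gives no proof of its own and only cites Wainwright's Proposition 15.12, whose standard argument is the one you give: reduce estimation to testing via the minimum-distance test (the quoted Proposition 15.1), then apply Fano's inequality through the entropy chain rule on the discrete pair $(J,\hat J)$ and data processing for the KL-form mutual information $I(Z;J)$.
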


By Fano's lemma, it suffices to find an upper bound on $I(Z;J)$ in order to obtain a minimax lower bound. For this purpose, we can use the Yang--Barron inequality.
\begin{lemma}[Yang--Barron, {\cite{YangBarron1999}}]
    Let $\ncal_{KL}(\varepsilon;\pcal)$ be the $\varepsilon$-covering number of $\pcal$ under the square-root KL divergence. Then
    \begin{align*}
        I(Z;J)\le\inf_{\varepsilon>0}\left\{\varepsilon^2+\log \ncal_{KL}(\varepsilon;\pcal)\right\}\,.
    \end{align*}
\end{lemma}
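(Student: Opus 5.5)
The plan is the standard mixture-of-centers argument. It uses only two facts about the Kullback--Leibler divergence: a variational characterization of mutual information, and a pointwise domination of a mixture by its components. Fix $\varepsilon>0$ and set $N=\ncal_{KL}(\varepsilon;\pcal)$; if $N=\infty$ there is nothing to prove. Take an $\varepsilon$-cover $\{q_1,\dots,q_N\}$ under the square-root KL divergence. This means that for every $\rho\in\pcal$ there is an index $k$ with $\sqrt{D(\rho\|q_k)}\le\varepsilon$, that is, $D(\rho\|q_k)\le\varepsilon^2$. In particular this holds for each of the hypotheses $\rho_1,\dots,\rho_T$ of the testing problem. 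Here $Z$ is the full observed sample, so each $\rho_j$ and each $q_k$ is understood as a law on the sample space, and the cover is taken in that family.

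\textbf{Step 1 (variational bound).} First I would write the mutual information as an average of divergences to the marginal:
\[
I(Z;J)=\frac{1}{T}\sum_{j=1}^T D(\rho_j\|\bar\rho_Z).
\]
This holds because $\bar\rho_J$ is uniform on $[T]$ and the conditional law of $Z$ given $J=j$ is $\rho_j$. Next I would show that $\bar\rho_Z$ minimizes $Q\mapsto\frac1T\sum_j D(\rho_j\|Q)$ over all probability measures $Q$. For any $Q$ with $\bar\rho_Z\ll Q$, expanding the logarithms gives
\[
\frac{1}{T}\sum_{j}D(\rho_j\|Q)-\frac{1}{T}\sum_j D(\rho_j\|\bar\rho_Z)=\int \bar\rho_Z\log\frac{\bar\rho_Z}{Q}=D(\bar\rho_Z\|Q)\ge0.
\]
If $\bar\rho_Z\not\ll Q$, the left-hand side is infinite and the inequality is trivial. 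Hence $I(Z;J)\le \frac1T\sum_j D(\rho_j\|Q)$ for every $Q$.

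\textbf{Step 2 (choice of $Q$).} I would take the uniform mixture of the cover centers, $Q=\frac1N\sum_{k=1}^N q_k$. For each $j$, pick $k(j)$ with $D(\rho_j\|q_{k(j)})\le\varepsilon^2$. Since $Q\ge \frac1N q_{k(j)}$ pointwise as densities, we get $\log\frac{\rho_j}{Q}\le \log N+\log\frac{\rho_j}{q_{k(j)}}$. Integrating against $\rho_j$ then yields
\[
D(\rho_j\|Q)\le \log N+D(\rho_j\|q_{k(j)})\le \log N+\varepsilon^2.
\]
Averaging over $j$ and using Step 1 gives $I(Z;J)\le\varepsilon^2+\log\ncal_{KL}(\varepsilon;\pcal)$. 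Taking the infimum over $\varepsilon>0$ gives the statement.

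\textbf{Main obstacle.} The only delicate point is Step 1, specifically the measure-theoretic bookkeeping. All densities should be taken with respect to a common dominating measure; for instance, add $\frac1N\sum_k q_k$ to the base measure $\nu$ used earlier. One must also handle the cases where some divergence is infinite. Those cases only make the right-hand side infinite, so the inequality survives. Everything else is a pointwise inequality followed by integration.
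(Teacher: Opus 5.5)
Your argument is correct. The paper does not prove this lemma; it only cites \cite{YangBarron1999}, and your proof is the standard argument from that source (also reproduced in \cite{Wainwright_2019}): write $I(Z;J)=\frac1T\sum_j D(\rho_j\|\bar\rho_Z)\le\frac1T\sum_j D(\rho_j\|Q)$, then take $Q$ to be the uniform mixture of the cover centers. Your choice of direction $D(\rho\|q_k)\le\varepsilon^2$ for the cover, and your reading of $\pcal$ as a family of laws of the full sample, are exactly what is needed and match how the paper applies the lemma, where it rescales $\varepsilon$ by $\sqrt{2\sigma^2/n}$.
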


Combining Fano's lemma and the Yang--Barron inequality, we can derive the minimax lower bound in two steps:
\par\noindent{\bf Proof of Theorem~\ref{thm:lowerbound1}.}
    Let $BW^{\alpha,\infty}(\Omega)$ denote the unit ball in $W^{\alpha,\infty}(\Omega)$. When $\mu\sim U(\Omega)$, we have
    \begin{align*}
        \log\ncal(\delta,BW^{\alpha,\infty}(\Omega),\|\cdot\|_{L^2(\mathrm{d} \mu)})\asymp\log\ncal(\delta,BW^{\alpha,\infty}(\Omega),\|\cdot\|_{L^2(\Omega)})\asymp \delta^{-\frac{d}{\alpha}}\,.
    \end{align*}
    Here the second equivalence follows from the Birman--Solomjak theorem \citep{Birman_1967}. Therefore, we may choose a $\delta$-separated set $\{f_1,\cdots,f_T\}$ in $W^{\alpha,\infty}(\Omega)$ such that $\log T\gtrsim \delta^{-\frac{d}{\alpha}}$.

    For $j\in [T]$, let $\rho_j$ denote the distribution of the random vector $y$ conditional on $\{\xb_i\}_{i=1}^n$ when the true function is $f_j$, namely,
    \begin{align*}
        y_i=f_j(\xb_i)+\eta_i,\ \ \eta_i\sim\ncal(0,\sigma^2),\ \ y=(y_1,\cdots,y_n)^\top\,.
    \end{align*}
    Let $\mu^n$ denote the joint distribution of $\{\xb_i\}_{i=1}^n$. Then the joint distribution of $(y,\{\xb_i\}_{i=1}^n)$ is $\rho_j\otimes\mu^n$. By the formula for the Kullback--Leibler divergence between Gaussian measures, for any $j\neq k$,
    \begin{align*}
        D(\rho_j\otimes\mu^n\|\rho_k\otimes\mu^n)=\ebb_x[D(\rho_j\|\rho_k)]=\frac{n}{2\sigma^2}\|f_j-f_k\|_{L^2(\mathrm{d} \mu)}^2\,.
    \end{align*}
    Therefore,
    \begin{align*}
        \log\ncal_{KL}(\varepsilon;\pcal)\le&\log\ncal\left(\varepsilon\sqrt{\frac{2\sigma^2}{n}},BW^{\alpha,\infty}(\Omega),\|\cdot\|_{L^2(\mathrm{d} \mu)}\right)\lesssim \left(\frac{\varepsilon^2}{n}\right)^{-\frac{d}{2\alpha}}\,.
    \end{align*}
    Combining Fano's lemma and the Yang--Barron inequality, we can prove the conclusion by exactly the same argument as in the previous theorem. This completes the proof of Theorem~\ref{thm:lowerbound1}.
\hfill\BlackBox\\[2mm]

The proof of Theorem~\ref{thm:lowerbound2} follows by exactly the same method, and is therefore omitted.

\vskip 0.2in
\bibliography{sample}
\end{document}